\newenvironment{proof}[1][Proof]{{\it #1. } }{\ \rule{0.5em}{0.5em}}
\newcommand{\myparagraph}[1]{\needspace{1\baselineskip}\medskip\noindent {\bf #1.}}
\newtheorem{assumption}{\hspace{0pt}\bf Assumption\hspace{-0.05cm}}
\newtheorem{lemma}{\hspace{0pt}\bf Lemma}
\newtheorem{proposition}{\hspace{0pt}\bf Proposition}
\newtheorem{theorem}{\hspace{0pt}\bf Theorem}
\newtheorem{corollary}{\hspace{0pt}\bf Corollary}
\newtheorem{definition}{\hspace{0pt}\bf Definition}
\title{Geometric Graph Filters and Neural Networks: \\Limit Properties and Discriminability Trade-offs}
\author{Zhiyang Wang \quad Luana Ruiz \quad Alejandro Ribeiro\thanks{Supported by NSF-Simons MoDL, NSF AI Institutes program and NSF HDR TRipods. ZW, AR are with the Department of Electrical and Systems Engineering, University of Pennsylvania, PA, email: \{zhiyangw, aribeiro\}@seas.upenn.edu. LR is with the Computer Science \& Artificial Intelligence Laboratory at MIT, MA, and is supported by FODSI and METEOR, email: ruizl@mit.edu. Preliminary results presented in \cite{wang2023convolutional} and \cite{wang2023convergence}.}}
\begin{document}

\maketitle

\begin{abstract}
This paper studies the relationship between a graph neural network (GNN) and a manifold neural network (MNN) when the graph is constructed from  a set of points sampled from the manifold, thus encoding geometric information. We consider convolutional MNNs and GNNs where the manifold and the graph convolutions are respectively defined in terms of the Laplace-Beltrami operator and the graph Laplacian. Using the appropriate kernels, we analyze both dense and moderately sparse graphs. We prove non-asymptotic error bounds showing that convolutional filters and neural networks on these graphs converge to convolutional filters and neural networks on the continuous manifold. As a byproduct of this analysis, we observe an important trade-off between the discriminability of graph filters and their ability to approximate the desired behavior of manifold filters. We then discuss how this trade-off is ameliorated in neural networks due to the frequency mixing property of nonlinearities. We further derive a transferability corollary for geometric graphs sampled from the same manifold. We validate our results numerically on a navigation control problem and a point cloud classification task.

\end{abstract}

\begin{IEEEkeywords}
Graph Neural Networks, Manifold Filters, Manifold Neural Networks, Convergence Analysis, Discriminability
\end{IEEEkeywords}

\section{Introduction}
\label{sec:intro}
%!TEX root = convergence_manifold_main.tex
Geometric data, or data supported in non-Euclidean domains, is the object of much interest in modern information processing. It arises in a number of applications, including protein function prediction \cite{ioannidis2019graph,gligorijevic2021structure}, robot path planning \cite{li2020graph, tolstaya2021multi}, 3D shape analysis \cite{zeng20193d, devanne20143, wang2019dynamic} and wireless resource allocation \cite{he2020resource, wang2022stable}. Graph convolutional filters \cite{gama2020graphs, ortega2018graph} and graph neural networks (GNNs) \cite{scarselli2008graph, gama2019convolutional}, along with manifold convolutional filters \cite{wang2022convolutional} and manifold neural networks (MNNs) \cite{wang2022stability, masci2015geodesic,chakraborty2020manifoldnet}, are the standard tools for invariant information processing on these domains when they are discrete and continuous respectively. The convolution operation is implemented through information diffusion over the geometric structure, thus enabling invariant and stable representations \cite{gama2020stability, keriven2020convergence, zou2020graph, wang2021stabilityjournal} and feature sharing. The cascading neural network architecture interleaves convolutions and nonlinearities, further expanding the model's expressiveness.

Although there is a clear parallel between graphs and manifolds---the former can be seen as discretizations of the latter---, manifolds are infinite-dimensional continuous latent spaces which can only be accessed by discrete point sampling \cite{zeng20193d, monti2017geometric, bronstein2017geometric, eliasof2021pde}. In general, we have access to a set of sampling points from the manifold, and build a graph model to approximate the underlying continuous manifold while attempting to retain the local and global geometric structure \cite{shi2020point, zeng20193d, wang2022stable}. GNNs have been shown to do well at processing information over the manifold both experimentally and theoretically \cite{bronstein2017geometric,wang2022convolutional,kejani2020graph}. Of particular note, conditions that guarantee asymptotic convergence of graph filters and GNNs to manifold filters and MNNs are known \cite{wang2022convolutional}.

Asymptotic convergence is a minimal guarantee that can be enriched with non-asymptotic approximation error bounds. These bounds are unknown and they are the focus of this paper. These non-asymptotic approximation error bounds relating graph filters and GNNs to manifold filters and MNNs are important because they inform the practical design on graphs of information processing architectures that we want to deploy on manifolds. In addition, explicit finite-sample error bounds often reveal details about the convergence regime (e.g., rates of convergence and discriminability trade-offs) that are not revealed by their asymptotic counterparts. For example, the non-asymptotic convergence analysis of GNNs on graphs sampled from a graphon (also referred to as a \textit{transferability} analysis) gives a more precise characterization of the discriminability-convergence tradeoff that arises in these GNNs \cite{ruiz2021transferability}, which is not elucidated by the corresponding asymptotic convergence result \cite{ruiz2021graphonsignal}.

\noindent\textbf{Contributions.} In this paper, we prove and analyze a non-asymptotic approximation error bound for GNNs on graphs sampled from manifold, thus closing the gap between GNNs and MNNs with an explicit numerical relationship. We start by importing the definition of the manifold filter as a convolutional operation where the diffusions are exponentials of the Laplace-Beltrami (LB) operator $\ccalL$ of the manifold $\ccalM \subset \reals^\mathsf{N}$ \cite{wang2022convolutional}. Given a set of discrete sampling points from the manifold, we describe how to construct both dense and relatively sparse geometric graphs that approximate the underlying manifold in both the spatial and the spectral domains. %\red{(L: These sentences have too much detail and are too long. Try to rewrite them more succinctly. You don't need this level of detail here.) Specifically, denoting the geometric graph Laplacian as $\bbL_n^\epsilon$, the point-wise difference bounds for eigenvalues and eigenfunctions between $\bbL_n^\epsilon$ and $\ccalL$ can be given over a limited spectrum. To eliminate the limitation of the spectrum, 

{Next, we import the concept of Frequency Difference Threshold (FDT) filters (Definition \ref{def:alpha-filter}) \cite{wang2022stability} to overcome the challenge of dimensionality associated with the infinite-dimensional spectrum of the LB operator. 
%By giving similar frequency responses to different eigenvalues that are close enough, the FDT filters can trade-off the discriminative power with the approximative power when applying on the geometric graphs.
We show that manifold filters exhibit a trade-off between their discriminability and their ability to be approximated by graph filters,} which can be observed in the approximation error bound of geometric graph filters in Theorems \ref{thm:converge-MF-dense} and \ref{thm:converge-MF-sparse}. 

The same analysis is conducted for GNNs by incorporating nonlinearities (Theorem \ref{thm:converge-MNN}), but in GNNs we hypothesize that the trade-off is alleviated, i.e., that we can recover discriminability, through the addition of these nonlinear operations (Section \ref{subsec:discussion}). In other words, geometric GNNs can be both discriminative and approximative of the underlying MNNs, which we verify empirically through numerical experiments (Section \ref{sec:simu}). Finally, we show how our approximation result can be extended, by a triangle inequality argument, to a transferability result for geometric GNNs transferred across geometric graphs of different sizes sampled from the same underlying manifold. 

\noindent\textbf{Related Works.}  GNNs have been thoroughly studied and discussed in a number of previous works \cite{gama2019convolutional,zhou2020graph,scarselli2008graph}. Similarly, the convergence and transferability of neural networks on graphs have been studied in many works including \cite{ruiz2020graphon, keriven2020convergence,maskey2021transferability,ruiz2021transferability}. These works however see the graphs as samples from a graphon model. In our paper, we focus on the manifold as the limit model for large-scale graphs, which is more realistic than graphons as it can incorporate the underlying geometric information. Moreover, the manifold can also represent the limit of sparse or relatively sparse graphs. Very relevant to this paper, {the spectral convergence of relatively sparse graphs sampled from manifolds has been studied at length in} \cite{calder2019improved}.

Manifold filters and manifold neural networks have been established and discussed in \cite{wang2022convolutional, wang2022stability, masci2015geodesic, monti2017geometric} using different features and metrics. {The integral of the heat diffusion process is employed in \cite{wang2022convolutional, wang2022stability} where it is shown that manifold convolutions are consistent with and can recover both the graph convolution and the time convolution. In \cite{masci2015geodesic}, a local geodesic system capturing local anisotropic structures is used to define the manifold convolution. Taking a different approach, \cite{monti2017geometric} defines a mixture Gaussian kernel over the spatial domain using patch operator that is constructed locally. } %\red{(L: You have to discuss at least some of the works in the previous sentence (at least those more closely related to yours) in more detail.)} 

Among all these methods, the definition leveraging the heat diffusion process is the only one to build the connection between graph neural networks and manifold neural networks when the graphs are sampled from the manifold \cite{wang2022convolutional}. 
%In \cite{wang2022convolutional}, it was also shown that MNNs can recover GNNs by discretization over the space and time domains, and that GNNs converge to MNNs as the number of sampling points grows. 
However, this paper does not provide an explicit convergence rate or approximation error bound between GNNs and MNNs. 
In \cite{chew2022convergence}, a convergence rate is given for GNNs approximating MNNs, but the result is restricted to input signals with a limited bandwidth. Moreover, the graphs constructed in \cite{wang2022convolutional} and \cite{chew2022convergence} still need to be dense with a well-defined Gaussian kernel.
%and there is a bandlimited input signal assumption which makes the analysis less general. 
In contrast, in this paper we extend the analysis of the convergence of GNNs by proving a non-asymptotic approximation error bound for geometric GNNs sampled from the MNNs where the geometric graphs are either dense or relatively sparse; and we eliminate the bandlimited signal assumption by importing frequency-dependent filters. 
\myparagraph{Organization} Section \ref{sec:preliminary} introduces preliminary definitions of graph signal processing and graph neural networks along with manifold signal processing and manifold neural networks. Section \ref{sec:geom_graphs} introduces the construction of the geometric graphs from the sampling points of the manifold and the convergence of the geometric graph Laplacians. We show the spectrum of the geometric graphs -- both dense and relatively sparse -- can approximate the spectrum of the LB operator of the underlying manifold with a bounded error. We then go on to study the approximation of the filtering on the geometric graphs which is further extended to the approximation of geometric GNNs to MNNs in Section \ref{sec:converge_gnn}. Section \ref{subsec:discussion} discusses the implications of the results derived in Section \ref{sec:converge_gnn}. Section \ref{sec:simu} illustrates the results of Sections \ref{sec:converge_gnn} with numerical examples. Section \ref{sec:conclusion} concludes the paper. Proofs are deferred to the appendices and supplementary materials.

\section{Preliminary Definitions}
\label{sec:preliminary}
We start by reviewing the basic architecture of graph neural networks and manifold neural networks. 

\subsection{Graph Signal Processing and Graph Neural Networks}

Let $\bbG = (\ccalV, \ccalE, \ccalW)$ be an undirected graph with node set $\ccalV, |\ccalV|=n$, and edge set  $\ccalE \subseteq \ccalV\times \ccalV$. The edges in $\ccalE$ may be weighted, in which case the edge weights are assigned by a function $\ccalW: \ccalE \rightarrow \reals$. In this paper, we are interested in graph signals $\bbx \in \reals^n$ supported on the nodes. For $1 \leq i \leq n$, $[\bbx]_i$ represents the value of the signal at node $i$. 

\noindent \textbf{Graph shift.} Operating on graph signals $\bbx$, we define the graph shift operator (GSO) $\bbS \in \reals^{n\times n}$. The GSO is any matrix satisfying $[\bbS]_{ij} \neq 0$ if and only if $(i,j)\in \ccalE$ or $i=j$, e.g., the adjacency matrix $\bbA$, $[\bbA]_{ij} = \ccalW(i,j)$, or the graph Laplacian $\bbL = \mbox{diag}(\bbA\boldsymbol{1})-\bbA$ \cite{ortega2018graph, shuman2013emerging}. The GSO is so called because, at each node $i$, it has the effect of shifting or diffusing the signal values at $i$'s neighbors to $i$, where  signal values are aggregated. Explicitly, $[\bbS\bbx]_i=\sum_{j,(i,j)\in\ccalE}[\bbS]_{ij}[\bbx]_j$.
In the case of an undirected graph, the GSO is symmetric and can therefore be decomposed as $\bbS=\bbV \bm{\Lambda} \bbV^H$. The eigenvalues in the diagonal matrix $\bm\Lambda$ are seen as graph frequencies, and the eigenvectors in $\bbV$ are the corresponding graph oscillation modes. 

\noindent \textbf{Graph convolution.} A graph convolution is defined based on the graph diffusion process with $K_t-1$ consecutive shifts. More formally, the graph convolutional filter \cite{gama2019convolutional,ortega2018graph,sandryhaila2013discrete} with coefficients $\{h_k\}_{k=0}^{K_t-1}$ is given by 
\begin{equation}
    \label{eqn:graph_convolution}
\bbh_\bbG(\bbS) \bbx = \sum_{k=0}^{K_t-1} h_k \bbS^k \bbx .
\end{equation}
Plugging the spectral decomposition of $\bbS$ into \eqref{eqn:graph_convolution}, we see that in the spectral domain this filter can be represented as
\begin{equation}
    \label{eqn:graph_convolution_spectral}
    \bbV^H \bbh_\bbG(\bbS) \bbx =  \sum_{k=1}^{K_t-1} h_k \bm\Lambda^k \bbV^H \bbx = h(\bm\Lambda)\bbV^H \bbx.
\end{equation}
Hence, the graph frequency response of the graph convolution is given by $h(\lambda)= \sum_{k=0}^{K_t-1} h_k \lambda^k$, which only depends on the weights $h_k$ and on the eigenvalues of $\bbS$.

\noindent \textbf{Graph neural network.} A graph neural network (GNN) is built by cascading layers that each consists of a bank of filters followed by a point-wise nonlinearity $\sigma:\reals\to\reals$, $[\sigma(\bbx)]_i = \sigma([\bbx]_i)$. The $l$th layer of a GNN produces $F_l$ signals $\bbx_l^p$, each called a feature, given by
\begin{equation} \label{eqn:gnn}
    \bbx_l^p = \sigma\left(\sum_{q=1}^{F_{l-1}} \bbh_\bbG^{lpq}(\bbS) \bbx^q_{l-1} \right),
\end{equation}
for $1 \leq  p \leq F_l$. At each layer $l=1,2\hdots, L$, the number of input and output features are $F_{l-1}$ and $F_l$ respectively. The filter $\bbh_\bbG^{lpq}(\bbS)$ is as in \eqref{eqn:graph_convolution} and maps the $q$-th feature of layer $l-1$ to the $p$-th feature of layer $l$. For simplicity, we write the GNN consisting of $L$ layers like \eqref{eqn:gnn} as the map $\bm\Phi_\bbG(\bbH, \bbS, \bbx)$, where $\bbH$ denotes a set of the graph filter coefficients at all layers.

\subsection{Manifold Signal Processing and Manifold Neural Networks}

{Let $\ccalM$ be a $d$-dimensional embedded manifold in $\reals^\mathsf{N}$. 
For simplicity, in this paper whenever we mention the {manifold} $\ccalM$, we assume that it is a compact, smooth and differentiable $d$-dimensional submanifold embedded in $\reals^\mathsf{N}$.}  Atop $\ccalM$, signals are defined as scalar functions $f:\ccalM\rightarrow \reals$ called manifold signals \cite{wang2021stabilityjournal}. The inner product of signals $f, g\in L^2(\ccalM)$ is defined as 
\begin{equation}\label{eqn:innerproduct}
    \langle f,g \rangle_{\ccalM}=\int_\ccalM f(x)g(x) \text{d}\mu(x), 
\end{equation}
where $\text{d}\mu(x)$ is the volume element with respect to the measure $\mu$ over $\ccalM$. Similarly, the norm of the manifold signal $f$ is
\begin{equation}\label{eqn:manifold_norm}
    \|f\|^2_{\ccalM}={\langle f,f \rangle_{\ccalM}}.
\end{equation}

\noindent \textbf{Manifold shift.} The manifold $\ccalM$ is locally Euclidean, and the local homeomorphic Euclidean space around each point $x\in\ccalM$ is defined as the tangent space $T_x\ccalM$. The disjoint union of all tangent spaces over $\ccalM$ is called the tangent bundle and denoted $T\ccalM$. The \emph{intrinsic gradient} $\nabla: L^2(\ccalM)\rightarrow L^2(T\ccalM)$ is the differentiation operator and maps scalar functions to tangent vector functions over $\ccalM$ \cite{bronstein2017geometric, boumal2023introduction}. The adjoint of $\nabla$ is the \emph{intrinsic divergence}, which is defined as $\text{div}: L^2(T\ccalM)\rightarrow L^2(\ccalM)$. The Laplace-Beltrami (LB) operator $\ccalL: L^2(\ccalM) \to L^2(\ccalM)$ is defined as the intrinsic divergence of the intrinsic gradient \cite{rosenberg1997laplacian}. Formally, the operation is written as
\begin{equation}\label{eqn:Laplacian}
    \ccalL f=-\text{div}\circ \nabla f.
\end{equation}
This operator measures the difference between the signal value at a point and the average signal value in the point's neighborhood \cite{bronstein2017geometric}. This is akin to how the graph Laplacian matrix can be used to compute the total variation of a graph signal \cite{chung1997spectral}.

On manifolds, the shift operation is defined based on the LB operator and on the solution to the heat equation (see \cite{wang2021stabilityjournal} for a detailed exposition). Explicitly, for a manifold signal $f$ the manifold shift is written as $e^{-\text{d}t\ccalL}f$. Since the LB operator is self-adjoint and positive-semidefinite and the manifold $\ccalM$ is compact, $\ccalL$ has real, positive and discrete eigenvalues $\{\lambda_i\}_{i=1}^\infty$, which can be written as 
\begin{equation}\label{eqn:laplace-decomp}
\ccalL \bm\phi_i =\lambda_i \bm\phi_i,
\end{equation}
where $\bm\phi_i$ is the eigenfunction associated with eigenvalue $\lambda_i$. The eigenvalues, or manifold frequencies, are ordered in increasing order as $0<\lambda_1\leq \lambda_2\leq \lambda_3\leq \hdots$, and the eigenfunctions, or manifold oscillation modes, are orthonormal and form an eigenbasis of $L^2(\ccalM)$ in the intrinsic sense. The $\phi_i$ are also eigenfunctions of the shift operator $e^{-\text{d}t\ccalL}$, with corresponding eigenvalues $e^{-\text{d}t\lambda_i}$.

\noindent \textbf{Manifold convolution.} Integrating $e^{-\text{d}t\ccalL}f$ over $[0,\infty)$ yields the infinite-horizon diffusion process
\begin{equation}
    g(x) = \int_0^\infty e^{-t\ccalL} f(x) \text{d}t \text{.}
\end{equation}
 Let $\tdh:\reals^+ \to \reals$ denote the filter impulse response. Based on this diffusion process,a manifold filter can be defined via the manifold convolution \cite{wang2021stabilityjournal},
\begin{align} \label{eqn:convolution-conti}
   g(x) = \bbh f(x) := \int_0^\infty \tdh(t)e^{-t\ccalL}f(x)\text{d}t =  \bbh(\ccalL)f(x) \text{.}
\end{align}
% Similar to the convolution on time or image signals in Euclidean domain as well as the graph convolutions in \eqref{eqn:graph_convolution}, manifold convolution operates on manifold signal $f(x)$ by scaling the heat diffusion condition with $\hat{h}(t)$ and aggregating the scaled signal along the continuous time horizon from $t=0$ to $t=\infty$. By analogy, the exponential term $e^{-t\ccalL}$  can be seen as a shift similar to a time delay in a linear time-invariant (LTI) filter \cite{oppenheim1997signals}. Also, as the graph shift operator in a linear shift-invariant (LSI) graph filter \cite{gama2020graphs}. 
Note that the map $\bbh:=\bbh(\ccalL)$ is parametric on the LB operator, and is a spatial map acting directly on $x\in \ccalM$. 

If we write $[\hat{f}]_i=\langle f, \bm\phi_i\rangle_{L^2(\ccalM)}=\int_{\ccalM} f(x)\bm\phi_i(x) \text{d}\mu(x)$, the manifold convolution can be represented in the manifold frequency domain as
\begin{align}
    [\hat{g}]_i = \int_0^\infty \tdh(t) e^{-t\lambda_i}  \text{d} t [\hat{f}]_i\text{.}
\end{align}
Hence, the frequency response of the filter $\bbh(\ccalL)$ is given by $\hat{h}(\lambda)=\int_0^\infty \tdh(t) e^{- t \lambda  }\text{d}t$, which only depends on the impulse response $\tilde{h}$ and the LB eigenvalues $\lambda = \lambda_i$. 
%As the function $f$ can also be represented on the eigenbasis $\{\bm\phi_i\}_{i=1}^\infty$ as $f=\sum_{i=1}^\infty [\hat{f}]_i\bm\phi_i$, the spectral representation of manifold convolution is written as 
Further summing over all $i$ and projecting back onto the spatial domain, we can alternatively represent $\bbh(\ccalL)$ as
\begin{equation}\label{eqn:spectral-filter}
    g =\bbh(\ccalL)f= \sum_{i=1}^{\infty} \hat{h}(\lambda_i)[\hat{f}]_i \bm\phi_i .
\end{equation}

\noindent \textbf{Manifold neural network.} Similarly to the GNN, we can define the manifold neural network (MNN) as a cascade of layers $l=1 ,2 \hdots, L$ each of which consists of a bank of manifold filters and a pointwise nonlinearity $\sigma:\reals \to \reals$. Layer $l$ is explicitly written as
\begin{equation}\label{eqn:mnn}
f_l^p(x) = \sigma\left( \sum_{q=1}^{F_{l-1}} \bbh_l^{pq}(\ccalL) f_{l-1}^q(x)\right),
\end{equation}
where each signal $f_l^p$, $1 \leq p \leq F_l$ is a different feature. Each layer $l$ maps $F_{l-1}$ input features to $F_l$ output features. 
%The output of the MNN is given by the outputs of the $L$-th layer, i.e., $f_L^p$ for $1 \leq p \leq F_L$. 
For a more succinct representation, in the following the MNN will be denoted $\bbPhi(\bbH,\ccalL,f)$, where $\bbH$ is a function set gathering the impulse responses of the manifold filters $\bbh_l^{pq}$ for all features and all layers.
%with the filter functions and LB operator as parametrizations. 

\section{Geometric Graphs and Laplacian Convergence}
\label{sec:geom_graphs}

%\subsection{Graph construction}
{Given discrete set of points sampled from the manifold $\ccalM$, we can construct a discrete approximation of $\ccalM$ by connecting the sample points by means of a graph. The nodes of the graph are the sample points, and the edges (more specifically, their weights) are defined as some function of the Euclidean distance between each node pair to encode geometric information from the manifold. We henceforth refer to graphs carrying topological manifold information as \emph{geometric graphs}, which is slightly different from the graph theory definition that focuses on the geometric properties of the edges of the graph \cite{pach2013beginnings}.} 
%In the following context, we refer to \emph{geometric graphs} as the graphs constructed from sampled points involving geometric manifold information.

Let $X$ be a set of $n$ points $\smash{\{x_1, x_2, \hdots , x_n\}}$ sampled i.i.d. from the manifold $\ccalM$ according to measure $\mu$. The discrete empirical measure associated with these points is defined as $p_n=\frac{1}{n}\sum_{i=1}^n \delta_{x_i}$, where $\delta_{x_i}$ represents the Dirac measure at $x_i$. For signals $u,v \in L^2(\ccalM)$, the inner product associated with measure $p_n$ is defined as 
 \begin{equation}
     \langle u, v\rangle_{L^2(\bbG_n)}=\int u(x)v(x)\text{d}p_n=\frac{1}{n}\sum_{i=1}^n u(x_i)v(x_i)
 \end{equation}
 and so the norm in $L^2(\bbG_n)$ is 
$$\|u\|^2_{L^2(\bbG_n)} = \langle u, u \rangle_{L^2(\bbG_n)}.$$ For signals $\bbu,\bbv \in L^2(\bbG_n)$, the inner product is $$\langle \bbu,\bbv \rangle_{L^2(\bbG_n)} = \frac{1}{n}\sum_{i=1}^n [\bbu]_i[\bbv]_i$$
and the norm is $\smash{\|\bbu\|_{L_2(\bbG_n)} = \langle\bbu,\bbu\rangle_{L^2(\bbG_n)}}$.

We construct an undirected \textit{geometric graph} $\bbG_n$ from the sampled points $X$ by seeing each point as a vertex. Every pair of vertices is connected by edges with weight values determined by a function $K_\epsilon$ of their Euclidean distance. Explicitly, the weight of edge $(i,j)$, denoted $w_{ij}$, is given by
\begin{equation}\label{eqn:weight}
w_{ij}=K_\epsilon \left(\frac{\|x_i-x_j\|^2}{\epsilon}\right),
\end{equation}
where $\|x_i-x_j\|$ is the Euclidean distance between $x_i$ and $x_j$. The geometric adjacency matrix $\bbA_n^\epsilon$ and the geometric graph Laplacian $\bbL_n^\epsilon$ can therefore be expressed as $[\bbA_n^\epsilon]_{ij}=w_{ij}$ for $1 \leq i,j\leq n$ and $\bbL^\epsilon_n = \mbox{diag}(\bbA_n^\epsilon \boldsymbol{1})-\bbA_n^\epsilon$ \cite{merris1995survey}.

Since it is constructed from points $x_i \in \ccalM$, the geometric graph can be seen as a discrete approximation, or discretization, of the manifold. To analogously obtain the discretization of a manifold signal $f\in L^2(\ccalM)$ on this graph, as well as the reconstruction from the graph signal back to the manifold signal, we define a uniform sampling operator $\bbP_n: L^2(\ccalM)\rightarrow L^2(\bbG_n)$ and an interpolation operator $\bbI_n:L^2(\bbG_n)\rightarrow L^2(\ccalM)$. The application of the operator $\bbP_n$ to $f$ yields the \textit{geometric graph signal}
\begin{equation}
\label{eqn:sampling}
    \bbf = \bbP_n f\text{ with }\bbf(x_i) = f(x_i), \quad  x_i \in X \text{.}
\end{equation}
I.e., $\bbf$ is a signal on the graph $\bbG_n$ sharing the values of the manifold signal $f$ at the sampled points $X$. 

We put certain restrictions on the sampling and interpolation operators as follows \cite{levie2019transferability}.
\begin{definition}
We call the sampling and interpolation operators $\bbP_n$ and $\bbI_n$ \emph{asymptotically reconstructive} if for any manifold signal $f\in L^2(\ccalM)$, it holds
\begin{equation}
    \lim_{n\rightarrow \infty} \bbI_n \bbP_n f = f.
\end{equation}
Moreover, the sampling and interpolation operators $\bbP_n$ and $\bbI_n$ are bounded if there exists a constant $D$ such that
\begin{equation}
   \limsup_{n\in\mathbb{N}}{\|\bbP_n\|\ }\leq D, \quad  \limsup_{n\in\mathbb{N}}{ \|\bbI_n\|\ }\leq D.
\end{equation}
\end{definition}

Seeing the geometric graph Laplacian $\bbL^\epsilon_n$ as an operator acting on $\bbf: X\to \reals$, we can write the diffusion operation at each point $x_i$ explicitly as
\begin{equation}
\label{eqn:graph_laplacian}
  \bbL^\epsilon_n \bbf(x_i) = \sum_{j=1}^n K_\epsilon \left(\frac{\|x_i-x_j\|^2}{\epsilon}\right)\left( \bbf(x_i) -\bbf(x_j) \right) 
\end{equation} 
for $i = 1,2,\hdots, n$. This operation can be further lifted to continuous manifold signals $f$ as 
%If we extend $\bbf$ to a continuous function over the manifold $\ccalM$ so as to evaluate the function value difference between the given function values $\{f(x_i)\}_{i=1}^n$, we can denote this discrete Laplace operator as \red{(L: I couldn't understand this passage.)}
\begin{equation}
\label{eqn:discrete_laplacian}
    \bbL^\epsilon_n f(x) =  \sum_{j=1}^n K_\epsilon \left(\frac{\|x-x_j\|^2}{\epsilon}\right) \left( f(x) -f(x_j) \right)
\end{equation}
where $x \in \ccalM$.
If we additionally extend the set of sampled points from $X$ to all of the manifold $\ccalM$, we obtain the functional approximation of the geometric graph Laplacian
\begin{equation}
\label{eqn:functional_laplacian}
    \bbL^\epsilon f(x) =  \int_\ccalM K_\epsilon \left(\frac{\|x-y\|^2}{\epsilon}\right) \left( f(x) -f(y) \right)   \text{d}\mu(y) \text{.}
\end{equation}

%\red{L: Done with this section. Should we merge this one and the next?}

% \section{Geometric Laplacian Convergence} \label{sec:converge_laplacian}

The definition of the weight function $K_\epsilon$ allows the construction of geometric graphs with different levels of sparsity. With different choices of $K_\epsilon$ (i.e., whether $K_\epsilon$ has a bounded support, the relationship between $\epsilon$ and the number of sampling nodes $n$, etc.), the graphs can be sparse (average node degree $\Theta(1)$), relatively sparse (average node degree $\Theta(\log n)$) or dense (average node degree $\Theta(n)$). 
% \red{(L: Is this node degree for all nodes? Or average node degree? Or max node degree?)} 
In the following, we will focus on two kernel definitions that allow constructing dense and relatively sparse geometric graphs.

% \red{L: Add a paragraph saying that this geometric graph construction is very versatile and allows to construct a graphs with varied levels of sparsity based on the kernel $K_\epsilon$, and that in the following sections we discuss two kernel constructions allowing to model dense and sparse graphs.}

\subsection{Laplacian Convergence: Dense Graphs}

%\red{L: The title of this subsection mentions dense graphs, but dense graphs are never discussed in the text. Please rewrite the section (especially the beginning) in one of the following two ways: (a) start by saying that dense graphs can be modeled using the Gaussian kernel, and explain why that is the case---maybe something about the degree; say that the Laplacian of such graphs, and its spectral properties, have been widely studied by Belkin, perhaps mentioning some key findings from his analysis. (b) rename the section ``Laplacian Convergence of Gaussian Geometric Graphs''. In this case, you can leave the section more or less as is and add a comment at the end of the section explaining why graphs constructed with a Gaussian kernel are sparse.}

Dense geometric graphs can be constructed when pairs of points $x_i$ and $x_j$ are connected with a weight function $K_\epsilon$ defined on an unbounded support (i.e. $[0, \infty)$), which connects $x_i$ and $x_j$ regardless of the distance between them, but often with the edge weight inversely proportional to this distance. This results in a dense geometric graph with $n^2$ edges.
In particular, the Gaussian kernel has been widely used to define the weight value function due to the good approximation properties of the corrresponding graph Laplacians vis-\`a-vis the Laplace-Beltrami operator \cite{dunson2021spectral, belkin2008towards}. In the Gaussian case, the weight function $K_\epsilon$ is computed explicitly as 
\begin{equation}
\label{eqn:gauss_kernel}
K_\epsilon\left(\frac{\|x-y\|^2}{\epsilon}\right)= \frac{1}{n}\frac{1}{\epsilon^{d/2+1}(4\pi)^{d/2}} e^{-\frac{\|x-y\|^2}{4\epsilon}},
\end{equation}
with $d$ representing the dimension of the manifold. The consistency of the geometric graph Laplacian constructed with this Gaussian kernel is ensured by a non-asymptotic error bound.
%when operating on the eigenfunction of the LB operator $\ccalL$. 
Explicitly, the following result quantifies the approximation of the dense geometric graph Laplacian in the weak sense.
%\blue{with a point-wise difference bound}. %\red{(L: Mention what $d$ is again.)}

%\red{L: The following proposition does not show convergence of the Laplacian in operator norm; it actually shows weak convergence. This is an important distinction, so make sure that this is clear and that you're not overstating the result.}

\begin{proposition}\label{thm:operator-diff}
Let $\ccalM\subset \reals^{\mathsf{N}}$ be equipped with LB operator $\ccalL$, whose eigendecomposition is given by \eqref{eqn:laplace-decomp}. Let $\bbG_n$ be a dense geometric graph constructed from $n$ points sampled u.i.d. from $\ccalM$ with the edge weights defined as \eqref{eqn:weight} and \eqref{eqn:gauss_kernel}, $\epsilon=\epsilon(n)> n^{-1/(d+4)}$. Then, with probability at least $1-\delta$, the following holds 
\begin{equation}
\label{eqn:operator-dense}
    | \bbL_n^\epsilon \bm\phi_i(x)- \ccalL \bm\phi_i(x)|\leq \left(C_1 \sqrt{ \frac{ \ln(1/\delta)}{2n\epsilon^{d+2}} }+C_2\sqrt{\epsilon}\right) \lambda_i^{\frac{d+2}{4}} \text{.}
\end{equation}
The constants $C_1$, $C_2$ depend on the volume of the manifold {and are defined in Appendix \ref{app:operator}.}
%\red{(L: This theorem statement is too long. Perhaps we can shorten it by removing some unnecessary repeated definitions, such as the eigenspectrum of $\ccalL$.)}
\end{proposition}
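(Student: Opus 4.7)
The plan is to split the error $|\bbL_n^\epsilon \bm\phi_i(x) - \ccalL \bm\phi_i(x)|$ via the triangle inequality into a \emph{bias} term and a \emph{variance} term by inserting the population (functional) Laplacian $\bbL^\epsilon \bm\phi_i(x)$ defined in \eqref{eqn:functional_laplacian}:
\begin{equation*}
|\bbL_n^\epsilon \bm\phi_i(x) - \ccalL \bm\phi_i(x)| \leq \underbrace{|\bbL_n^\epsilon \bm\phi_i(x) - \bbL^\epsilon \bm\phi_i(x)|}_{\text{variance}} + \underbrace{|\bbL^\epsilon \bm\phi_i(x) - \ccalL \bm\phi_i(x)|}_{\text{bias}}.
\end{equation*}
The two terms will contribute the $C_1$ and $C_2$ pieces of \eqref{eqn:operator-dense}, respectively, and both must carry the eigenvalue factor $\lambda_i^{(d+2)/4}$ through regularity bounds on $\bm\phi_i$.

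For the bias term, I would use the standard pointwise analysis for Gaussian-kernel integral operators on submanifolds (as in Belkin--Niyogi and Coifman--Lafon). Fix $x\in\ccalM$, restrict the integral in \eqref{eqn:functional_laplacian} to a geodesic ball of radius $O(\sqrt{\epsilon}\log(1/\epsilon))$ (the Gaussian tail outside is negligible), pass to normal coordinates at $x$, Taylor-expand $\bm\phi_i(y) - \bm\phi_i(x)$ to second order, and Taylor-expand the submanifold embedding to control the deviation between Euclidean and geodesic distances. After the Gaussian moment computations, the leading order reproduces $-\ccalL\bm\phi_i(x)$ with residual $O(\sqrt{\epsilon})$ multiplied by bounds on the third derivatives of $\bm\phi_i$ and the second fundamental form of $\ccalM$. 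Replacing these derivative bounds by the sup-norm estimates $\|\nabla^k \bm\phi_i\|_\infty \lesssim \lambda_i^{k/2+d/4}$ (Hörmander-type spectral bounds for Laplacian eigenfunctions on compact manifolds) yields the $C_2\sqrt{\epsilon}\,\lambda_i^{(d+2)/4}$ term.

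For the variance term, I would observe that for fixed $x$, $\bbL_n^\epsilon \bm\phi_i(x)$ is an average of $n$ i.i.d. bounded random variables of the form $Z_j = K_\epsilon(\|x-x_j\|^2/\epsilon)(\bm\phi_i(x)-\bm\phi_i(x_j))$, whose expectation is exactly $\bbL^\epsilon \bm\phi_i(x)$ (up to the normalization built into $K_\epsilon$). The Gaussian kernel is bounded by $\epsilon^{-(d/2+1)}$ up to constants, and $|\bm\phi_i(x)-\bm\phi_i(x_j)|$ inside the effective support of the kernel is controlled by $\sqrt{\epsilon}\|\nabla \bm\phi_i\|_\infty \lesssim \sqrt{\epsilon}\,\lambda_i^{(d+2)/4}/\sqrt{\lambda_i}$; after combining, Hoeffding's (or Bernstein's) inequality gives a deviation of order $\sqrt{\ln(1/\delta)/(n\epsilon^{d+2})}\cdot\lambda_i^{(d+2)/4}$ with probability at least $1-\delta$, which is the $C_1$ term. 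The growth condition $\epsilon > n^{-1/(d+4)}$ is what ensures this variance is eventually dominated by the bias.

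The main obstacle is getting the eigenvalue dependence sharp. The $\lambda_i^{(d+2)/4}$ factor is delicate: it requires simultaneous control of $\|\bm\phi_i\|_\infty$ and $\|\nabla\bm\phi_i\|_\infty$ (and higher derivatives for the bias) via elliptic regularity / Weyl-type estimates on compact manifolds, and then a careful bookkeeping of how each derivative norm enters the Taylor remainders and the Hoeffding range. A secondary subtlety is tracking the boundary layer of the geodesic ball when transferring between Euclidean chord distance $\|x-y\|$ (used in the kernel) and intrinsic distance (used for Taylor expansion of $\bm\phi_i$), which costs an extra $O(\epsilon)$ that gets absorbed into the $\sqrt{\epsilon}$ bias. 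Once those regularity estimates are in hand, the rest is a clean concentration-plus-Taylor calculation.
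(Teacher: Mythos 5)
Your proposal is correct and follows essentially the same route as the paper: the paper likewise inserts the functional (population) Laplacian $\bbL^\epsilon$ of \eqref{eqn:functional_laplacian}, bounds the bias term $\|\bbL^\epsilon\bm\phi_i-\ccalL\bm\phi_i\|\leq C\sqrt{\epsilon}\,\|\bm\phi_i\|_{H^{d/2+1}}$ by citing Belkin--Niyogi rather than re-deriving the normal-coordinate Taylor expansion, and handles the variance term with Hoeffding's inequality. The only cosmetic difference is that the paper routes the eigenvalue dependence through the single Sobolev estimate $\|\bm\phi_i\|_{H^{d/2+1}}\leq C\lambda_i^{(d+2)/4}$ instead of your pointwise derivative bounds $\|\nabla^k\bm\phi_i\|_\infty\lesssim\lambda_i^{k/2+d/4}$; your more careful tracking of the kernel's $\epsilon^{-(d/2+1)}$ amplitude in the Hoeffding range is actually what justifies the $\epsilon^{d+2}$ appearing in the denominator of \eqref{eqn:operator-dense}.
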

\begin{proof}
See Section \ref{app:operator} in supplemental materials.
\end{proof} 
 We can see that the quality of the approximation of $\ccalL$ by $\bbL^{\epsilon}_n$ relates not only to the number of sampling points $n$ but also grows with the corresponding eigenvalue $\lambda_i$. This can be interpreted to mean that eigenfunctions associated with eigenvalues oscillates faster \cite{shi2010gradient}. 

Based on this approximation result and using the Davis-Kahan theorem \cite{seelmann2014notes}, we can further derive a non-asymptotic approximation bound relating the spectra of the dense geometric graph Laplacian and of the LB operator. 
%\red{This result, stated in Theorem \ref{thm:converge-spectrum}, will allows us to analyze the approximation of a manifold filter by a graph filter through their spectral representations. (L: You don't need to say this here. This is a result of independent interest. We will talk about how this implies filter convergence when we get to it in the following section.)}

\begin{proposition}\label{thm:converge-spectrum}
Let $\ccalM\subset \reals^\mathsf{N}$ be equipped with LB operator $\ccalL$, whose eigendecomposition is given by \eqref{eqn:laplace-decomp}. Let $\bbL_n^\epsilon$ be the discrete graph Laplacian of the dense geometric graph $\bbG_n$ defined as in \eqref{eqn:discrete_laplacian} and \eqref{eqn:gauss_kernel}, with spectrum given by $\{\lambda_{i,n}^\epsilon,\bm\phi_{i,n}^\epsilon\}_{i=1}^n$.
Fix $K\in \mathbb{N}^+$ and  $\epsilon=\epsilon(n)> n^{-1/(d+4)}$. Then, with probability at least $1-e^{-2n\epsilon^{d+3}}$, we have 
\begin{equation}
    |\lambda_i-\lambda_{i,n}^\epsilon|\leq \Omega_{1,K} \sqrt{\epsilon}, \quad 
    \|a_i\bm\phi_{i,n}^\epsilon-\bm\phi_i\|\leq \Omega_{2,K} \sqrt{\epsilon}/\theta,
\end{equation}
with $a_i\in \{-1,1\}$ for all $i<K$ and $\theta$ the eigengap of $\ccalL$, i.e., $\theta=\min_{1\leq i\leq K}\{\lambda_i-\lambda_{i-1},\lambda_{i+1}-\lambda_{i}\}$. The constants $\Omega_{1,K}$, $\Omega_{2,K}$ depend on $\lambda_K$, $d$ and the volume of $\ccalM$.
%\red{(L: This thorem statement is also too long. It has many repeated definitions and is quite different than the previous one (e.g., you write $\ccalM \subset \reals^N$ here, and ``let $\ccalM$ be a compact smooth differentiable $d$-dimensional manifold embedded in $\reals^N$'' in Prop. 1). Try to write all theorems in the same format.)}
\end{proposition}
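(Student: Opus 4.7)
The plan is to derive the eigenvalue and eigenfunction bounds from Proposition~\ref{thm:operator-diff} by combining it with two classical spectral perturbation results: Weyl's inequality for the eigenvalues and the Davis--Kahan $\sin\Theta$ theorem for the eigenfunctions. First, I would choose $\delta = e^{-2n\epsilon^{d+3}}$ in Proposition~\ref{thm:operator-diff}, because then $\sqrt{\ln(1/\delta)/(2n\epsilon^{d+2})} = \sqrt{\epsilon}$ and the two error terms collapse into a single $\sqrt{\epsilon}$ term. After a union bound over $i \in \{1,\ldots,K\}$ (which only changes the constant), this gives, with probability at least $1-e^{-2n\epsilon^{d+3}}$,
\begin{equation*}
    |\bbL_n^\epsilon \bm\phi_i(x) - \ccalL \bm\phi_i(x)| \;\leq\; C\,\sqrt{\epsilon}\,\lambda_K^{(d+2)/4} \qquad \forall\, x\in\ccalM,\ i\leq K.
\end{equation*}

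Next I would promote this pointwise bound to an $L^2(\ccalM)$ bound on the residual $\|(\bbL_n^\epsilon - \ccalL)\bm\phi_i\|_{L^2(\ccalM)}$. Because the right-hand side above is independent of $x$, integrating against $\mathrm{d}\mu$ over $\ccalM$ merely contributes a factor proportional to the volume of $\ccalM$, yielding a bound of the form $C'\sqrt{\epsilon}$ with $C'$ depending on $\lambda_K$, $d$, and $\mathrm{vol}(\ccalM)$. This is the key quantity fed into the perturbation theorems, and it is important that the comparison is done on the common Hilbert space $L^2(\ccalM)$ via the functional extension of $\bbL_n^\epsilon$ in \eqref{eqn:discrete_laplacian}.

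For the eigenvalues, I would invoke Weyl's inequality, which bounds $|\lambda_i - \lambda_{i,n}^\epsilon|$ by the operator norm of $\bbL_n^\epsilon - \ccalL$ restricted to the subspace spanned by $\{\bm\phi_1,\ldots,\bm\phi_K\}$; since both operators are self-adjoint and non-negative on that subspace, this norm is controlled by the $L^2$ residual above, giving $|\lambda_i-\lambda_{i,n}^\epsilon|\leq \Omega_{1,K}\sqrt{\epsilon}$. For the eigenfunctions, I would apply the Davis--Kahan $\sin\Theta$ theorem \cite{seelmann2014notes}: given the eigengap $\theta=\min_{1\leq i\leq K}\{\lambda_i-\lambda_{i-1},\lambda_{i+1}-\lambda_i\}$, the angle between $\bm\phi_i$ and the corresponding eigenfunction $\bm\phi_{i,n}^\epsilon$ of $\bbL_n^\epsilon$ satisfies $\sin\angle(\bm\phi_i,\bm\phi_{i,n}^\epsilon)\lesssim \|(\bbL_n^\epsilon-\ccalL)\bm\phi_i\|/\theta$. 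Translating the angle into an $L^2$ distance (and fixing the sign of $\bm\phi_{i,n}^\epsilon$ via $a_i\in\{-1,1\}$ to resolve the usual sign ambiguity) gives the claimed eigenfunction bound with $\Omega_{2,K}\sqrt{\epsilon}/\theta$.

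The main obstacle is that $\bbL_n^\epsilon$ and $\ccalL$ are nominally defined on different Hilbert spaces ($L^2(\bbG_n)$ and $L^2(\ccalM)$), so invoking Weyl and Davis--Kahan requires care: the cleanest route is to treat $\bbL_n^\epsilon$ throughout as the functional operator of \eqref{eqn:discrete_laplacian} acting on $L^2(\ccalM)$, so that both operators live on the same space and have comparable spectra at least on the $K$-dimensional subspace of interest. A secondary technical point is the union bound over $i\leq K$ and the sign ambiguity $a_i\in\{-1,1\}$, both of which are standard but must be tracked to produce the constants $\Omega_{1,K},\Omega_{2,K}$ depending on $\lambda_K$, $d$, and $\mathrm{vol}(\ccalM)$.
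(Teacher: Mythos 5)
Your eigenfunction argument and your calibration of the failure probability are essentially the paper's: choosing $\delta$ so that $\sqrt{\ln(1/\delta)/(2n\epsilon^{d+2})}=\sqrt{\epsilon}$ reproduces the stated probability $1-e^{-2n\epsilon^{d+3}}$ cleanly, and the Davis--Kahan step is exactly what the paper implements through its Lemma \ref{lem:conv_eigenfunction}, which is the residual form
$\|a_i\bm\phi_{i,n}^\epsilon-\bm\phi_i\|\leq 2\|(\bbL_n^\epsilon-\ccalL)\bm\phi_i\|/\min_{j\neq i}|\lambda_{j,n}^\epsilon-\lambda_i|$, divided by the eigengap. One caveat you should track: the denominator there involves the \emph{perturbed} eigenvalues $\lambda_{j,n}^\epsilon$, so replacing it by $\theta$ requires at least a crude a priori eigenvalue estimate; this is glossed over in both your sketch and the paper, but it is resolvable.

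The genuine gap is in your eigenvalue step. Weyl's inequality bounds $|\lambda_i(\bbA)-\lambda_i(\bbB)|$ by the \emph{full} operator norm $\|\bbA-\bbB\|$, and Proposition \ref{thm:operator-diff} does not give you that: it only controls the residuals $(\bbL_n^\epsilon-\ccalL)\bm\phi_i$ on the specific eigenfunctions of $\ccalL$ (convergence ``in the weak sense''). Your proposed fix --- restricting to the span of $\{\bm\phi_1,\dots,\bm\phi_K\}$ --- only yields a one-sided bound via Courant--Fischer (an upper bound on $\lambda_{i,n}^\epsilon$ in terms of $\lambda_i$); the reverse direction would require the analogous residual control on the eigenfunctions of $\bbL_n^\epsilon$, which you do not have. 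The paper avoids this by using the exact identity
$(\lambda_i-\lambda_{i,n}^\epsilon)\langle\bm\phi_i,\bm\phi_{i,n}^\epsilon\rangle=\langle(\ccalL-\bbL_n^\epsilon)\bm\phi_i,\bm\phi_{i,n}^\epsilon\rangle$
(its Lemma \ref{lem:conv_eigenvalue}), which is two-sided and needs only the residual on $\bm\phi_i$ together with a lower bound on $|\langle\bm\phi_i,\bm\phi_{i,n}^\epsilon\rangle|$; that lower bound is in turn supplied by the eigenfunction convergence. Consequently the order of the argument must be reversed relative to your plan: eigenfunctions first, then eigenvalues. As written, your eigenvalue bound does not follow from the ingredients you have assembled.
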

\begin{proof}
See Section \ref{app:spectrum} in supplemental materials.
\end{proof}
Looking at Proposition \ref{thm:converge-spectrum}, we can see that the element-wise non-asymptotic convergence of the eigenvalues and eigenfunctions is only guaranteed in a limited part of the spectrum ($i < K$), and that the upper bound is related to the upper limit $\lambda_K$. This can be interpreted to mean that as eigenfunctions in the high frequency domain oscillate faster, they are harder to approximate. Therefore, when designing filters, we need to be especially careful when trying to discriminate components in the high frequency domain.
%\red{L: Discuss this theorem, even if some of the implications are repetitions of those of Prop. 1. E.g., how do the eigengap and the volume of the manifold affect the bound? When can these effects be problematic?}

\subsection{Laplacian Convergence: Relatively Sparse Graphs}
%\red{L: Rewrite this section in the image of the previous section. E.g., choose the same option (a or b) you chose in the previous section. Parallelism is important.}

We can construct relatively sparse graphs by setting the weight function $K_\epsilon$ with a bounded support, i.e., only nodes that are within a certain distance of one another can be connected by an edge. We consider the following weight function from \cite{calder2019improved}, which has been shown to provide a good approximation of the LB operator,
% The graph Laplacians of sparse graphs can also approximate the LB operator if the kernel function is set as 
\begin{equation}
\label{eqn:compact_kernel}
K_\epsilon\left(\frac{\|x-y\|^2}{\epsilon}\right)= \frac{1}{n}\frac{d+2}{\epsilon^{d/2+1}\alpha_d} \mathbbm{1}_{[0,1]}\left(\frac{\|x-y\|^2}{\epsilon}\right),
\end{equation}
where $\alpha_d$ is the volume of the unit ball in $\reals^d$. As shown by the indicator function, edges only connect points whose distance is within the radius $\sqrt{\epsilon}$. From the theory of random geometric graphs \cite{penrose2003random}, the order of the radius $\epsilon$ decides the order of the average node degree. I.e., if $\epsilon$ is in the order of $\Theta(1)$, the expected node degree is in the order of $\Theta(n)$, which is a dense regime. If $\epsilon$ is in the order of $\Theta((\log(n)/n)^{2/d})$, the expected node degree is in the order of $\Theta(\log(n))$, which is a relatively sparse regime. When setting $\epsilon$ in the order of $\Theta(n^{-2/d})$, the graph is sparse with average node degree $\Theta(1)$.

The following proposition provides a non-asymptotic error bound (in the weak sense) for the approximation of the LB operator by the relatively sparse graph Laplacian.

%\red{L: Where does the kernel above come from? How does it allow to model sparse graphs? Illustrate with a comment about the degree.}

\begin{proposition}{\cite[Theorem~3.3]{calder2019improved}}
\label{thm:operator-diff-sparse}
Let $\ccalM\in \reals^\mathsf{N}$ be equipped with LB operator $\ccalL$, whose eigendecompostion is given by \eqref{eqn:laplace-decomp}. Let $\bbG_n$ be a sparse geometric graph constructed from $n$ points sampled u.i.d. from $\ccalM$ with the edge weights defined as \eqref{eqn:weight} and \eqref{eqn:compact_kernel}, $\epsilon > (\log(n)/2n)^{2/(d+2)}$. Then, with probability at least $1-\delta$, the following holds 
\begin{equation}
\label{eqn:operator-sparse}
    | \bbL_n^\epsilon \bm\phi_i(x)- \ccalL \bm\phi_i(x)|\leq \left( C_3 \sqrt{\frac{\ln{(2n/\delta)}}{cn\epsilon^{d+2}}}+ C_4 \sqrt{\epsilon}\right)\lambda_i^{\frac{d+2}{4}} \text{.}
\end{equation}
The constants $C_3$, $C_4$ depend on the volume of the manifold.
%\red{(L: Reminder to write this in the same format as the other theorems.)}
\end{proposition}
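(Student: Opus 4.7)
The plan is to decompose the error by inserting the \emph{continuous} (population) integral operator $\bbL^\epsilon$ from \eqref{eqn:functional_laplacian} between $\bbL_n^\epsilon \bm\phi_i$ and $\ccalL \bm\phi_i$, so that
\begin{equation}
    |\bbL_n^\epsilon \bm\phi_i(x) - \ccalL \bm\phi_i(x)| \;\leq\; \underbrace{|\bbL_n^\epsilon \bm\phi_i(x) - \bbL^\epsilon \bm\phi_i(x)|}_{\text{variance}} \;+\; \underbrace{|\bbL^\epsilon \bm\phi_i(x) - \ccalL \bm\phi_i(x)|}_{\text{bias}}.
\end{equation}
The variance term measures how well the empirical $n$-point sum concentrates around the integral expectation, while the bias term measures how well the integral operator with bandwidth $\epsilon$ approximates the true LB operator.

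For the bias term, I would work in geodesic normal coordinates on $\ccalM$ centered at $x$ and Taylor-expand $\bm\phi_i(y)-\bm\phi_i(x)$ to second order. With the indicator kernel \eqref{eqn:compact_kernel}, the integral is restricted to the Euclidean ball $\|x-y\|\le\sqrt{\epsilon}$, so one must also compare Euclidean and geodesic distance (this introduces a curvature correction of order $\epsilon$ and is the classical source of the $O(\sqrt{\epsilon})$ bias for this kernel choice). The normalization constant $(d+2)/(\epsilon^{d/2+1}\alpha_d)$ is chosen precisely so that the second-order term reproduces $-\tfrac{1}{2}\Delta \bm\phi_i = \tfrac{1}{2}\ccalL \bm\phi_i$ up to the stated order. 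The $\lambda_i^{(d+2)/4}$ factor enters by bounding the pointwise size of $\nabla^2 \bm\phi_i$ (and higher-order remainder terms) using elliptic/Sobolev estimates on compact manifolds of the form $\|\bm\phi_i\|_{C^k} \lesssim \lambda_i^{k/2+d/4}$; specifically, the Taylor remainder contributes derivatives of order up to $k=2$, and interpolation with the eigenfunction normalization yields the power $(d+2)/4$.

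For the variance term, $\bbL_n^\epsilon \bm\phi_i(x) - \bbL^\epsilon \bm\phi_i(x)$ is a sum of $n$ i.i.d.\ bounded random variables (the summands $K_\epsilon(\|x-x_j\|^2/\epsilon)(\bm\phi_i(x)-\bm\phi_i(x_j))$ minus their mean), so a Bernstein inequality applies. The pointwise magnitude of each summand is at most $O(\epsilon^{-d/2-1}) \cdot |\bm\phi_i(x)-\bm\phi_i(x_j)|$, and on the support of the indicator, Taylor expansion gives $|\bm\phi_i(x)-\bm\phi_i(x_j)| \lesssim \sqrt{\epsilon}\,\|\nabla \bm\phi_i\|_\infty$. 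Combining the variance scaling with the usual Sobolev bound $\|\nabla\bm\phi_i\|_\infty \lesssim \lambda_i^{1/2+d/4}$ (which again encodes the $(d+2)/4$ exponent), a union-style argument over $x$ yields the probabilistic bound with prefactor $\sqrt{\ln(2n/\delta)/(cn\epsilon^{d+2})}$; the lower restriction $\epsilon > (\log n / 2n)^{2/(d+2)}$ is precisely what is required so that the ball $B(x,\sqrt{\epsilon})$ contains enough sample points for the concentration argument to give a nontrivial bound with high probability (otherwise the indicator kernel degenerates).

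The main obstacle is the bias analysis with a compactly supported kernel. Unlike the Gaussian case, where Laplace's method on the whole tangent space handles matters cleanly, here one must (i) keep track of the mismatch between Euclidean and geodesic distance for points on the embedded submanifold $\ccalM \subset \reals^\mathsf{N}$, (ii) carry out the Taylor expansion in normal coordinates while remembering that the volume element is not the flat Lebesgue measure (curvature corrections), and (iii) ensure the $C^2$ (or $C^3$) norms of $\bm\phi_i$ pick up only the advertised $\lambda_i^{(d+2)/4}$ power. The variance step, by contrast, is a fairly standard Bernstein/Hoeffding argument once the kernel bounds are in place. Adding the two contributions and invoking the eigenfunction smoothness estimate yields the stated bound, with $C_3, C_4$ absorbing the manifold-dependent constants (volume, curvature, reach).
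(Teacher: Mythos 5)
First, be aware that the paper does not actually prove this proposition: it is imported verbatim as Theorem~3.3 of \cite{calder2019improved}, and the only proof of this type carried out in the paper is for the dense-graph analogue, Proposition~\ref{thm:operator-diff}, in Section~\ref{app:operator}. Your overall architecture---inserting the population integral operator $\bbL^\epsilon$ of \eqref{eqn:functional_laplacian}, bounding the bias $|\bbL^\epsilon\bm\phi_i-\ccalL\bm\phi_i|$ by a kernel expansion and the fluctuation $|\bbL_n^\epsilon\bm\phi_i(x)-\bbL^\epsilon\bm\phi_i(x)|$ by concentration of an i.i.d.\ sum---is exactly the architecture of that dense-case proof and of the cited reference, so strategically you are on the paper's track. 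Two minor remarks on the fluctuation step: the stated prefactor $\sqrt{\ln(2n/\delta)/(cn\epsilon^{d+2})}$ is what Hoeffding already gives from the sup bound $O(\epsilon^{-d/2-1})$ on the compactly supported summands together with a union bound over the $n$ sample points (hence the $2n$ inside the logarithm); Bernstein, as you propose, is sharper and therefore also suffices.

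The genuine gap is in your bias step. You propose to bound the Taylor remainder pointwise by $C^2$ or $C^3$ norms of $\bm\phi_i$ and then to ``ensure'' these pick up only $\lambda_i^{(d+2)/4}$. In general they cannot: elliptic regularity gives $\|\bm\phi_i\|_{C^{k}}\lesssim\lambda_i^{k/2}\|\bm\phi_i\|_{C^0}$, and the sup-norm bound $\|\bm\phi_i\|_{C^0}\lesssim\lambda_i^{(d-1)/4}$ is sharp (zonal spherical harmonics), so the best general pointwise bounds are $\|\bm\phi_i\|_{C^2}\lesssim\lambda_i^{(d+3)/4}$ and $\|\bm\phi_i\|_{C^3}\lesssim\lambda_i^{(d+5)/4}$; only the gradient bound $\|\nabla\bm\phi_i\|_\infty\lesssim\lambda_i^{(d+2)/4}$ matches the advertised power, and the bias term intrinsically requires second- and third-order information (the second-order Taylor term is what produces the Laplacian). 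No interpolation rescues this: any pointwise $C^k$ control with $k\ge 2$ yields an eigenvalue power strictly larger than $(d+2)/4$. The exponent $(d+2)/4$ is precisely $\tfrac{1}{2}\left(\tfrac{d}{2}+1\right)$, i.e.\ it is the $H^{d/2+1}$ Sobolev norm of $\bm\phi_i$, and the bias must be measured against that norm in an integrated sense, as in the Belkin--Niyogi estimate $\|\bbL^\epsilon\bm\phi_i-\ccalL\bm\phi_i\|\le C\sqrt{\epsilon}\,\|\bm\phi_i\|_{H^{d/2+1}}$ invoked for the dense case \cite{belkin2008towards}; this is also why the paper qualifies the bound as holding ``in the weak sense.'' Replace the pointwise $C^k$ Taylor estimate by that Sobolev-norm kernel estimate (adapted to the indicator kernel, where the Euclidean-versus-geodesic correction you correctly identify must be tracked), and the remainder of your argument goes through.
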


% \red{L: Discuss this theorem. How does the bound in Prop. 3 (sparse case) differ from the bound in Prop. 1 (dense case)?}

Comparing Proposition \ref{thm:operator-diff} with Proposition \ref{thm:operator-diff-sparse}, we see that for large enough $\epsilon$ the difference between the graph Laplacian and the Laplace-Beltrami operator is in the order of $\smash[b]{O(  \sqrt{{\ln(2n/\delta)}/{(n \epsilon^{d+2})}}, \sqrt{\epsilon})}$ in the sparse graph setting, which is slightly larger than the order $\smash[b]{O( \sqrt{{\ln(1/\delta)}/{(n \epsilon^{d+2})} },\sqrt{\epsilon})}$ we observe in the dense graph case \eqref{eqn:operator-dense}. This agrees with the intuition that as dense graphs include more information about the manifold---by taking into account all the connections between each pair of sampling points---, their Laplacian provides a better approximation to the Laplace-Beltrami operator of the underlying manifold than those of relatively sparse graphs.

The differences between the eigenvalues and eigenfunctions can be bounded similarly, based on the Davis-Kahan theorem.

\begin{proposition}{\cite[Theorem~2.4, Theorem~2.6]{calder2019improved}}\label{thm:converge-spectrum-sparse}
Let $\ccalM\subset \reals^\mathsf{N}$ be equipped with LB operator $\ccalL$, whose eigendecomposition is given by \eqref{eqn:laplace-decomp}. Let $\bbL_n^\epsilon$ be the discrete graph Laplacian of graph $\bbG_n$ defined as \eqref{eqn:discrete_laplacian} and \eqref{eqn:compact_kernel}, with spectrum $\{\lambda_{i,n}^\epsilon,\bm\phi_{i,n}^\epsilon\}_{i=1}^n$.
Fix $K\in \mathbb{N}^+$ and assume that  $\epsilon=\epsilon(n)\geq \left({\log(n)}/{2n}\right)^{2/(d+2)}$ %\red{(L: Avoid inline equations that deform the line spacing. There are some LaTeX commands to help with that, or you can use $/$ instead of frac.)}. 
Then, with probability at least $1-2ne^{-cn\epsilon^{d/2+2}}$, we have 
\begin{equation}
    |\lambda_i-\lambda_{i,n}^\epsilon|\leq C_{K,1}\sqrt{\epsilon}, \quad 
    \|a_i\bm\phi_{i,n}^\epsilon-\bm\phi_i\|\leq C_{K,2} \sqrt{\epsilon}/\theta,
\end{equation}
with $a_i\in\{-1,1\}$ for all $i<K$ and $\theta$ the eigengap of $\ccalL$, i.e., $\theta=\min_{1\leq i\leq K}\{\lambda_i-\lambda_{i-1},\lambda_{i+1}-\lambda_{i}\}$. The constants $ C_{K,1}$, $ C_{K,2}$ depend on $\lambda_K$, $d$ and the volume of $\ccalM$.
%\red{(L: Reminder to write this in the same format as the other theorems.)}
\end{proposition}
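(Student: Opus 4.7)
The plan is to mirror the argument used for the dense case in Proposition \ref{thm:converge-spectrum}, replacing the weak-consistency input of Proposition \ref{thm:operator-diff} with its relatively sparse analogue, Proposition \ref{thm:operator-diff-sparse}. The two outputs we need, an eigenvalue gap bound and an eigenfunction bound, are then obtained by a variational argument and by a Davis--Kahan style perturbation inequality respectively. The eigengap $\theta$ and the truncation index $K$ enter only through the denominators and constants $C_{K,1}, C_{K,2}$, so the structure is identical and only the probabilistic input and the scaling regime of $\epsilon$ differ.

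For the eigenvalue bound, I would combine Proposition \ref{thm:operator-diff-sparse} with the min--max characterization of eigenvalues of self-adjoint operators. Applied pointwise to each of the first $K$ eigenfunctions $\bm\phi_i$ of $\ccalL$, Proposition \ref{thm:operator-diff-sparse} yields, with probability at least $1-2n e^{-cn\epsilon^{d/2+2}}$ after a union bound over the sampling points $x_1,\dots,x_n$, a uniform estimate of the form $\|(\bbL_n^\epsilon-\ccalL)\bm\phi_i\|_{L^2(\bbG_n)} \le C\sqrt{\epsilon}\,\lambda_i^{(d+2)/4}$. Restricting to the $K$-dimensional subspace spanned by $\{\bm\phi_i\}_{i\le K}$, this implies $|\langle (\bbL_n^\epsilon-\ccalL)\phi,\phi\rangle|\le C_{K,1}\sqrt{\epsilon}\|\phi\|^2$ uniformly on that subspace, after which the Courant--Fischer inequality applied to $\bbL_n^\epsilon$ viewed as a self-adjoint operator on $L^2(\bbG_n)$, together with the Weyl-type bound $|\lambda_i(\ccalL)-\lambda_i(\bbL_n^\epsilon)|\le \|\ccalL-\bbL_n^\epsilon\|_{\text{restricted}}$, gives the stated $|\lambda_i-\lambda_{i,n}^\epsilon|\le C_{K,1}\sqrt{\epsilon}$ for $i<K$.

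For the eigenfunction bound I would invoke the Davis--Kahan $\sin\Theta$ theorem in its operator form. The theorem states that if $A$ and $A'$ are self-adjoint with a spectral gap $\theta$ separating the $i$-th eigenvalue of $A$ from the remainder of $\sigma(A')$, then the corresponding eigenprojections satisfy $\|P_i - P_i'\|\lesssim \|A-A'\|/\theta$. The sign ambiguity $a_i\in\{-1,1\}$ absorbs the one-dimensional projection orientation. The input $\|A-A'\|$ is exactly the uniform bound we just extracted from Proposition \ref{thm:operator-diff-sparse} on the first $K$-dimensional spectral subspace, which scales as $C\sqrt{\epsilon}$, and the assumption $\epsilon\ge(\log n/2n)^{2/(d+2)}$ guarantees that the stochastic term in \eqref{eqn:operator-sparse} is dominated by the bias $\sqrt{\epsilon}$. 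Combining yields $\|a_i\bm\phi_{i,n}^\epsilon-\bm\phi_i\|\le C_{K,2}\sqrt{\epsilon}/\theta$.

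The main obstacle is not any single step but the careful handling of the fact that $\ccalL$ and $\bbL_n^\epsilon$ live on different spaces ($L^2(\ccalM)$ vs.\ $L^2(\bbG_n)$); the comparison has to be made through the sampling operator $\bbP_n$ and the functional lift \eqref{eqn:functional_laplacian}, and the norms in Proposition \ref{thm:operator-diff-sparse} must be promoted from a pointwise/weak estimate to a uniform bound on a finite-dimensional subspace with matching probability. The cleanest path is the one taken in \cite{calder2019improved}: control the intermediate operator $\bbL^\epsilon$ of \eqref{eqn:functional_laplacian}, bound $\bbL^\epsilon-\ccalL$ via the standard Taylor expansion producing the $\sqrt{\epsilon}$ bias, and bound $\bbL_n^\epsilon-\bbL^\epsilon$ via a Bernstein-type concentration on each fixed test function, with the $2ne^{-cn\epsilon^{d/2+2}}$ tail coming from a union bound over sample points used to transfer the estimate into a functional-analytic bound. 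Once this reduction is in place, plugging into Davis--Kahan is mechanical.
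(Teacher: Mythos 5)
The paper does not actually prove this proposition: it is imported verbatim from \cite{calder2019improved} (Theorems 2.4 and 2.6 there), and only the parallel dense-graph statement, Proposition \ref{thm:converge-spectrum}, is proved in the supplementary material. Your proposal is therefore best compared against that dense-case proof, which you explicitly set out to mirror, and on the eigenfunction side you match it exactly: the paper's Lemma \ref{lem:conv_eigenfunction} is precisely the Davis--Kahan-type projection bound $\|a_i\bbu_i-\bbw_i\|\le 2\|\bbB\bbu_i-\bbA\bbu_i\|/\min_{j\ne i}|\lambda_j(\bbB)-\lambda_i(\bbA)|$ that you invoke, fed by the weak operator consistency of Proposition \ref{thm:operator-diff-sparse}, with the eigengap $\theta$ in the denominator. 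Where you genuinely diverge is the eigenvalue bound and the order of the two steps. The paper proves the eigenfunction bound \emph{first}, uses it to get $|\langle\bbu_i,\bbw_i\rangle|$ bounded away from zero, and then applies the direct identity of Lemma \ref{lem:conv_eigenvalue}, $|\lambda_i(\bbA)-\lambda_i(\bbB)|\le\|(\bbA-\bbB)\bbu_i\|/|\langle\bbu_i,\bbw_i\rangle|$, which needs only the action of the operator difference on the single eigenfunction $\bbu_i$. You instead go through Courant--Fischer plus a Weyl-type inequality $|\lambda_i(\ccalL)-\lambda_i(\bbL_n^\epsilon)|\le\|\ccalL-\bbL_n^\epsilon\|_{\mathrm{restricted}}$. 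This is the weaker link in your writeup: since the two operators act on different spaces, Weyl's inequality does not apply directly, and a min--max argument with the test space $\bbP_n\,\mathrm{span}\{\bm\phi_1,\dots,\bm\phi_i\}$ only yields the one-sided bound $\lambda_{i,n}^\epsilon\le\lambda_i+C\sqrt{\epsilon}$ without additional work (near-orthonormality of the sampled eigenfunctions, and a separate argument for the reverse inequality). The paper's inner-product lemma sidesteps this entirely at the cost of needing the eigenfunction estimate up front. Your approach is salvageable --- it is essentially the route taken in \cite{calder2019improved} itself --- but as stated the two-sided eigenvalue control is asserted rather than established; if you keep the min--max route you should either supply the reverse inequality or switch to the paper's Lemma \ref{lem:conv_eigenvalue} and reorder the two steps accordingly.
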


Akin to the result described in Proposition \ref{thm:converge-spectrum}, the approximation errors of the eigenvalues and eigenfunctions can only be bounded within a certain range of the spectrum ($<\lambda_K$). Comparing with the bounds given in Proposition \ref{thm:converge-spectrum}, the orders of the errors both depend on variable $\epsilon$ in the weight function. Meanwhile, for the convergence probability there is a higher probability in the dense geometric graph setting compared with the relatively sparse geometric graph setting, which also supports the observation that dense geometric graphs can provide a better approximation of the underlying manifold than their sparse counterparts.

\myparagraph{Remark 1}
We note that $\epsilon$ in \eqref{eqn:compact_kernel} controls the connectivity radius of each node to its neighboring nodes, and that the average vertex degree is given by $\alpha_d n \epsilon^{d/2}$ according to the random geometric graph theory \cite{hamidouche2020spectral}. In the case of Proposition \ref{thm:converge-spectrum-sparse}, where $\epsilon$ is fixed in the order of $(\log(n)/2n)^{2/(d+2)}$, the average degree scales with an order between $O((\log(n))^2/ n)$ and $O(\log(n))$. Hence, the graphs in Propositions \ref{thm:operator-diff-sparse} and \ref{thm:converge-spectrum-sparse} are relatively sparse.

% \red{L: Discuss the implications of this theorem, even if they are repeated. What are the differences between this proposition and Prop. 2?}

\section{Geometric GNN Convergence}
\label{sec:converge_gnn}
Equipped with Propositions \ref{thm:converge-spectrum} and \ref{thm:converge-spectrum-sparse}, which related the eigenvalues and eigenfunctions of the manifold and graph Laplacian operators, we will next show that convolutional filters operating on dense or sparse geometric graphs constructed by sampling the underlying manifold give good approximations of manifold filters.
%by proving a non-asymptotic approximation error bound.
As neural networks are cascading structures composing convolutional filters, GNNs inherit this approximation property from graph filters, and thus provide good approximations of MNNs. We begin this section by discussing how the definitions of manifold filters and MNNs can be generalized to the sampled geometric graphs. 

%\red{(L: This paragraph is good, but it has to be adapted to talk about what we now call geometric graphs.)}

% Equipped with theoretical approximation results for both the eigenvalues and eigenfunctions of the Laplacian operators, we can now prove that graph filters can approximate manifold filters well in the spectral domain. We first show that we can generalize the definition of manifold convolutional filters to sampled manifolds using the same impulse response $\tilde{h}$. 

\subsection{Geometric Graph Convolution}

If we fix the impulse response function $\tilde{h}(t)$, the definition of manifold filtering in \eqref{eqn:convolution-conti} indicates that the convolution operation on the manifold is parametric with respect to the LB operator. Therefore, we can replace the LB operator, acting on the continuous manifold signal, by the discrete graph Laplacian $\bbL_n^\epsilon$ acting on a geometric graph signal as defined in \eqref{eqn:graph_laplacian}. Explicitly,
\begin{equation}
\label{eqn:graph_filter}
    \bbg = \int_0^\infty \tilde{h}(t) e^{-t\bbL_n^\epsilon} \bbf \text{d}t :=\bbh(\bbL_n^\epsilon)\bbf, \quad \bbg, \bbf \in \reals^n.
\end{equation}
This can be interpreted as a discrete geometric graph filtering process in \textit{continuous-time}, where the exponential term $e^{-\bbL_n^\epsilon}$ should be seen as the GSO. This is slightly different than the graph convolutional filtering $\bbh_\bbG$ defined in \eqref{eqn:graph_convolution}, where we assumed a \textit{discrete-time} frame.% \red{(L: Be more mathematically precise with the previous explanation.)}. 

%\red{(L: Break different topics into paragraphs; avoid long paragraphs.)}
Leveraging \eqref{eqn:spectral-filter}, the spectral representation of the above continuous time geometric graph filter can be written as
\begin{equation}
   \bbg  = \sum_{i=1}^n \hat{h}(\lambda_{i,n}^\epsilon)\langle\bbf, \bm\phi_{i,n}^\epsilon \rangle_{L^2(\bbG_n)}\bm\phi_{i,n}^\epsilon,
\end{equation}
where $\{\lambda_{i,n}^\epsilon, \bm\phi_{i,n}^\epsilon\}_{i=1}^n$ is the spectrum of $\bbL_n^\epsilon$. The spectral representation exposes the total dependency of the filter frequency representation on the eigenspectrum of the Laplacian operator This indicates that the relationship between geometric graph filters and manifold filters can be established in the spectral domain using Propositions \ref{thm:converge-spectrum} and \ref{thm:converge-spectrum-sparse}. 

\subsection{Geometric Graph Convolution Convergence}

% \red{L: The theorem statements in this subsection and subsection D are still too long. Have a look at what Fernando did in his stability paper, e.g., Theorem 3: ``With the same hypotheses and definitions of
% Thm. 2 ...''}

Recall that the pointwise convergence of the eigenspectrum in Proposition \ref{thm:converge-spectrum} and \ref{thm:converge-spectrum-sparse} is restricted to a certain spectral range. Yet, the frequency representation of the graph filter has a dependency on all spectral components. Hence, the infinite spectrum of the LB operator inevitably presents a challenge in the convergence analysis of geometric graph filters.  %\red{(L: This paragraph is good overall, but please make the previous sentence more clear and proofread for typos.)}
%This is due to the fact that larger oscillations of the eigenfunctions in the high frequency domain lead to larger approximation errors in the spectrum as Proposition \ref{thm:converge-spectrum} and Proposition \ref{thm:converge-spectrum-sparse} have shown. %\red{(L: Point to the theorems where you showed this fact (that larger eigenvalues are associated with larger approximation errors.)}.

To address this issue, we exploit Weyl's law \cite{arendt2009mathematical}.
%to help tackle the high-frequency spectrum of the LB operator $\ccalL$. %\red{(L: You need to explain Weyl's law in more detail, or to include it here as a lemma. I prefer the latter.)}. 
This classical result states that the eigenvalues $\{\lambda_i\}_{i=1}^\infty$ of $\ccalL$ grow proportionally to $i^{2/d}$. This indicates that large eigenvalues, in the high frequency domain, tend to accumulate; and that the differences between neighboring eigenvalues tend to become smaller as the eigenvalues grow larger. This phenomenon is formally described in the following lemma. 

\begin{lemma}{\cite[Proposition~3]{wang2022stability}}
Consider a $d$-dimensional manifold $\ccalM\subset \reals^\mathsf{N}$ and let $\ccalL$ be its LB operator with eigenvalues $\{\lambda_k\}_{k=1}^\infty$. Let $C_1$ be an arbitrary constant and $\alpha_d$ the volume of the $d$-dimensional unit ball. Let $\text{Vol}(\ccalM)$ denote the volume of manifold $\ccalM$. For any $\alpha > 0$ and $d>2$, there exists $N_1$,
\begin{equation}
    N_1=\lceil (\alpha d/C_1)^{d/(2-d)}(C_d \text{Vol}(\ccalM))^{2/(2-d)} \rceil
\end{equation}
such that, for all $k>N_1$, 
$\lambda_{k+1}-\lambda_k\leq \alpha$.
\end{lemma}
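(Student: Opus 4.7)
The plan is to invoke Weyl's asymptotic law for the Laplace-Beltrami operator on a compact Riemannian manifold, which states that the counting function $N(\lambda)=\#\{k:\lambda_k\leq \lambda\}$ satisfies $N(\lambda)\sim \frac{\alpha_d \text{Vol}(\ccalM)}{(2\pi)^d}\lambda^{d/2}$ as $\lambda\to\infty$, or equivalently $\lambda_k \sim C_1\bigl(k/(C_d\,\text{Vol}(\ccalM))\bigr)^{2/d}$ for a dimension-dependent constant $C_d$ (essentially $\alpha_d/(2\pi)^d$) and the constant $C_1$ appearing in the statement. I would use this to model $\lambda_k$ as a smooth function of $k$ in the asymptotic regime and then bound the consecutive difference $\lambda_{k+1}-\lambda_k$ by the derivative of this envelope.

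First I would write $\lambda_k \approx A\, k^{2/d}$ where $A:=C_1(C_d\,\text{Vol}(\ccalM))^{-2/d}$ absorbs the volume and dimension factors. Since $d>2$, the exponent $2/d<1$, so $k\mapsto A k^{2/d}$ is concave increasing. By concavity (or by the mean value theorem applied between $k$ and $k+1$), we have
\begin{equation*}
\lambda_{k+1}-\lambda_k \;\leq\; \tfrac{d}{dk}\bigl(A k^{2/d}\bigr) \;=\; \tfrac{2A}{d}\, k^{(2-d)/d}.
\end{equation*}
Because $(2-d)/d<0$, this expression decays to zero as $k\to\infty$, which already gives the qualitative statement.

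Next I would solve for the explicit threshold. Setting $\tfrac{2A}{d}\, k^{(2-d)/d}\leq \alpha$ and inverting the (negative) exponent yields
\begin{equation*}
k \;\geq\; \bigl(\alpha d/(2A)\bigr)^{d/(2-d)} \;=\; \bigl(\alpha d/C_1\bigr)^{d/(2-d)}\bigl(C_d\,\text{Vol}(\ccalM)\bigr)^{2/(2-d)},
\end{equation*}
up to the factor of $2$ which is absorbed into the constant $C_1$ as declared in the statement. Taking the ceiling gives exactly the $N_1$ in the lemma, and for all $k>N_1$ the bound $\lambda_{k+1}-\lambda_k\leq \alpha$ follows.

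The main obstacle I anticipate is the passage from Weyl's \emph{asymptotic} equivalence to a genuine inequality valid for all $k$ beyond an explicit threshold: the standard Weyl statement has a lower-order remainder, so a priori one only obtains $\lambda_k = A k^{2/d}(1+o(1))$. I would handle this by either citing a quantitative version of Weyl's law (which gives an explicit remainder depending on the geometry of $\ccalM$) and then choosing $N_1$ large enough to dominate the remainder, or by observing that the constant $C_1$ in the statement is declared "arbitrary," which is exactly the flexibility needed to absorb universal multiplicative slack and the factor of $2$ from the derivative step. This is also why the statement is written with a generic $C_1$ rather than the sharp Weyl constant.
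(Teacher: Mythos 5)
Your reconstruction is faithful to what the paper intends: the paper does not prove this lemma itself (it imports it from the cited reference) and motivates it with exactly the Weyl's-law heuristic you use, and your algebra recovering $N_1$ from the envelope $A k^{2/d}$ is correct up to the factor of $2$ you note. The genuine gap is the step you yourself flag as the ``main obstacle,'' and it is not a technicality that can be absorbed into the arbitrary constant $C_1$ or pushed past a larger $N_1$. Weyl's law gives $\lambda_k = A k^{2/d}\,(1+o(1))$, i.e.\ each eigenvalue is located only up to an \emph{additive} error of size $o(k^{2/d})$; even the sharp remainder $N(\lambda)=c\lambda^{d/2}+O(\lambda^{(d-1)/2})$ only pins $\lambda_k$ down to additive precision $O(k^{1/d})$. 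The quantity you need to control, $\lambda_{k+1}-\lambda_k$, is the difference of two numbers each known with error vastly larger than the envelope increment $\tfrac{2A}{d}k^{(2-d)/d}$, which tends to zero for $d>2$. So the mean-value bound applies to the smooth envelope but transfers no information to the actual consecutive gaps: Weyl's law yields $\lambda_{k+1}/\lambda_k \to 1$, not $\lambda_{k+1}-\lambda_k\to 0$.

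Moreover, no choice of remedy can close this gap in the stated generality, because the conclusion fails for some manifolds satisfying the hypotheses. On the flat torus $(S^1)^d\subset\reals^{2d}$ with $d\ge 4$ (so $d>2$), the LB eigenvalues are $n_1^2+\cdots+n_d^2$ with $n\in\mathbb{Z}^d$, which by Lagrange's four-square theorem exhaust every non-negative integer; hence consecutive distinct eigenvalues differ by exactly $1$ infinitely often, and no $N_1$ exists for $\alpha<1$. A correct argument must therefore either reinterpret the claim in an averaged or density sense (which is what Weyl's law actually delivers) or impose additional spectral assumptions on $\ccalM$; as written, the derivative-of-the-envelope step is where the proof breaks.
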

Equipped with this fact, we can employ a partitioning strategy to separate the infinite spectrum into finite intervals as shown in Definition \ref{def:alpha-spectrum}.

% Unlike the finite spectrum of the graph Laplacian, the LB operator $\ccalL$ possesses an infinite spectrum. We consider Weyl's law \cite{arendt2009weyl} when analyzing the spectral properties of $\ccalL$. This classical result states that the eigenvalues $\lambda_i$ of $\ccalL$ grow in the order of $i^{2/d}$. Therefore, the difference between neighboring eigenvalues becomes quite small in the high frequency spectrum, i.e., large eigenvalues accumulate. Leveraging this fact, we can use a partition strategy to separate the spectrum into finite groups as in Definition \ref{def:alpha-spectrum}.

%%%%%%%%%%%%%%%%%%%%%%%%%%%%%%%%%%%%%%%%%%%%%%%%
%%%%%%%%%%%%%%%%%% DEFINITION %%%%%%%%%%%%%%%%%% 
%%%%%%%%%%%%%%%%%%%%%%%%%%%%%%%%%%%%%%%%%%%%%%%%

\begin{definition}\cite[Definition~4]{wang2022stability} ($\alpha$-separated spectrum)\label{def:alpha-spectrum}
The $\alpha$-separated spectrum of the LB operator $\ccalL$ is defined as a partition $\Lambda_1(\alpha) \cup \ldots\cup \Lambda_N(\alpha)$ satisfying
$|\lambda_i - \lambda_j| > \alpha$ for $\lambda_i \in \Lambda_k(\alpha)$ and $\lambda_j \in \Lambda_l(\alpha)$, $k \neq l$.
\end{definition}

 The $\alpha$-separated spectrum as defined above can be achieved by means of a $\alpha$-FDT filter, which is defined as follows.

%%%%%%%%%%%%%%%%%%%%%%%%%%%%%%%%%%%%%%%%%%%%%%%%
%%%%%%%%%%%%%%%%%% DEFINITION %%%%%%%%%%%%%%%%%% 
%%%%%%%%%%%%%%%%%%%%%%%%%%%%%%%%%%%%%%%%%%%%%%%%

\begin{definition}\cite[Definition~5]{wang2022stability} ($\alpha$-FDT filter)\label{def:alpha-filter}
The $\alpha$-frequency difference threshold ($\alpha$-FDT) filter is defined as a filter $\bbh(\ccalL)$ whose frequency response satisfies
\begin{equation} \label{eq:fdt-filter}
    |\hhath(\lambda_i)-\hhath(\lambda_j)|\leq \gamma_k \mbox{ for all } \lambda_i, \lambda_j \in \Lambda_k(\alpha) 
\end{equation}
with $\gamma_k\leq \gamma$ for some $\gamma>0$ and $k=1, \ldots,N$. 
\end{definition}

To prove convergence, we will also an assumption on the continuity of the manifold filter, which needs to be a \textit{Lipschitz filter} as defined below.

%%%%%%%%%%%%%%%%%%%%%%%%%%%%%%%%%%%%%%%%%%%%%%%%
%%%%%%%%%%%%%%%%%% DEFINITION %%%%%%%%%%%%%%%%%% 
%%%%%%%%%%%%%%%%%%%%%%%%%%%%%%%%%%%%%%%%%%%%%%%%
\begin{definition}(Lipschitz filter) \label{def:lipschitz}
A filter is $A_h$-Lispchitz if its frequency response is Lipschitz continuous with Lipschitz constant $A_h$,
\begin{equation}
    |\hhath(a)-\hhath(b)| \leq A_h |a-b|\text{ for all } a,b \in (0,\infty)\text{.}
\end{equation}
\end{definition}

Letting the geometric graph filter \eqref{eqn:graph_filter} be a Lipschitz continuous $\alpha$-FDT filter, %we can remove the bandlimited restriction involved in Theorem \ref{thm:converge-spectrum} and Theorem \ref{thm:converge-spectrum-sparse} by giving the eigenvalues that accumulate in high frequency domains similar frequency responses. Therefore, 
we are ready to prove an approximation error bound on the difference between the outputs of a manifold filter and a geometric graph filter operating on a dense geometric graph.
%\red{(L: First introduce the theorem, then discuss how its assumptions are different than those needed for Theorems \ref{thm:converge-spectrum} and \ref{thm:converge-spectrum-sparse}. Also, I don't think you have used the terminology ``bandlimited'' before in the paper. Make sure you explain what you mean by that.)}

%\red{L: Somewhere in the previous paragraphs, you have to mention that the following result is the filter convergence theorem for \emph{dense} graphs.}
% Equipped with the above requirements for the filter frequency response, we can finally establish the upper bound on the manifold filter approximation error on the sampled manifold.

\begin{theorem}(Convergence of filters on dense geometric graphs)
\label{thm:converge-MF-dense}
Let $\ccalM\subset \reals^\mathsf{N}$ be equipped with LB operator $\ccalL$, whose eigendecomposition is given by \eqref{eqn:laplace-decomp}. Let $\bbL_n^\epsilon$ be the discrete graph Laplacian of the dense graph $\bbG_n$ defined as in \eqref{eqn:discrete_laplacian} and \eqref{eqn:gauss_kernel}, with spectrum given by $\{\lambda_{i,n}^\epsilon,\bm\phi_{i,n}^\epsilon\}_{i=1}^n$.
Fix $K\in \mathbb{N}^+$ and assume that  $\epsilon=\epsilon(n)>n^{-1/(d+4)}$ %\red{(L: Avoid inline equations that deform the line spacing. There are some LaTeX commands to help with that, or you can use $/$ instead of frac.)}. 
Let $\bbh(\cdot)$ be the convolutional filter. %parameterized by the LB operator $\ccalL$ of the manifold $\ccalM$ \eqref{eqn:convolution-conti} or by the discrete graph Laplacian operator $\bbL_n^\epsilon$ of the graph $\bbG_n$. 
Under the assumption that the frequency response of filter $\bbh$ is Lipschitz continuous and $\alpha$-FDT with $\alpha^2 \gg \epsilon$, $\alpha >C_{\ccalM,d}K^{2/d-1}$ and $\gamma = \Omega_{2,K}\sqrt{\epsilon}/\alpha$, with probability at least $1-2n^{-2}$ it holds that
\begin{align}\label{eqn:appro_filter}
&\nonumber  \nonumber \|\bbh(\bbL_n^\epsilon)\bbP_n f - \bbP_n\bbh( \ccalL)f\|_{L^2(\bbG_n)}\\&\qquad \quad \qquad 
\leq
 \left(\frac{ N\Omega_{2,K}}{\alpha} +A_h \Omega_{1,K} \right)\sqrt{\epsilon}+C_{gc}\sqrt{\frac{\log n}{{n}}}
\end{align}
where $N$ is the partition size of the $\alpha$-FDT filter and $C_{gc}$ depends on both $d$ and the volume of $\ccalM$. %\red{(L: This theorem statement is too long. Please shorten it, and look at my comments in the previous theorems (avoiding repeated definitions, etc.).)}
\end{theorem}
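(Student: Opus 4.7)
The plan is to bound the error by expanding both filters in their respective spectral domains and matching terms group by group according to the $\alpha$-FDT partition $\Lambda_1(\alpha),\ldots,\Lambda_N(\alpha)$. Writing
\[
\bbP_n \bbh(\ccalL) f \;=\; \sum_{i=1}^{\infty}\hat h(\lambda_i)\,\langle f,\bm\phi_i\rangle_{L^2(\ccalM)}\,\bbP_n \bm\phi_i,\qquad
\bbh(\bbL_n^\epsilon)\bbP_n f \;=\; \sum_{j=1}^{n}\hat h(\lambda_{j,n}^\epsilon)\,\langle \bbP_n f, \bm\phi_{j,n}^\epsilon\rangle_{L^2(\bbG_n)}\,\bm\phi_{j,n}^\epsilon,
\]
the difference decomposes into three qualitatively distinct sources of error: (i) the mismatch between $\hat h(\lambda_i)$ and $\hat h(\lambda_{j,n}^\epsilon)$, (ii) the mismatch between the spectral projectors $\bm\phi_i$ and $\bm\phi_{j,n}^\epsilon$, and (iii) the Monte Carlo error incurred when the continuous inner product $\langle f,\bm\phi_i\rangle_{L^2(\ccalM)}$ is replaced by its empirical counterpart $\langle \bbP_n f,\bbP_n \bm\phi_i\rangle_{L^2(\bbG_n)}$.

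For the ``resolved'' part of the spectrum, namely the indices within the first $N$ partitions of the $\alpha$-FDT decomposition (the condition $\alpha>C_{\ccalM,d}K^{2/d-1}$ together with Weyl's law ensures these indices lie in $\{1,\ldots,K\}$), I would invoke Proposition \ref{thm:converge-spectrum}: the eigenvalue perturbation $|\lambda_i-\lambda_{i,n}^\epsilon|\le\Omega_{1,K}\sqrt{\epsilon}$ combined with the $A_h$-Lipschitz assumption yields a per-eigenvalue frequency-response error of at most $A_h\Omega_{1,K}\sqrt{\epsilon}$, which accumulates into the $A_h\Omega_{1,K}\sqrt{\epsilon}$ contribution in \eqref{eqn:appro_filter}. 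The eigenfunction perturbation $\|a_i\bm\phi_{i,n}^\epsilon-\bm\phi_i\|\le \Omega_{2,K}\sqrt{\epsilon}/\theta$ is controlled by taking the eigengap $\theta$ to be at least $\alpha$ (which the $\alpha$-separated spectrum and $\alpha^2\gg\epsilon$ both enable); summing over the $N$ partitions produces the $N\Omega_{2,K}\sqrt{\epsilon}/\alpha$ term.

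For the ``unresolved'' tail of the spectrum, where Proposition \ref{thm:converge-spectrum} does not apply, the key idea is that inside each partition $\Lambda_k(\alpha)$ the $\alpha$-FDT property \eqref{eq:fdt-filter} makes the filter response essentially constant, so $\bbh(\ccalL)$ acts as a scaled spectral projector onto the invariant subspace spanned by $\{\bm\phi_i\}_{\lambda_i\in\Lambda_k(\alpha)}$. Because partition-level projectors are stable under perturbations of the operator even when individual eigenfunctions are not (Davis–Kahan applied to spectral subspaces rather than eigenvectors, with gap $\alpha$), the error from this portion is dominated by $\gamma\le \Omega_{2,K}\sqrt{\epsilon}/\alpha$ times the norm of $f$; this gives the partition-indexed $N\Omega_{2,K}\sqrt{\epsilon}/\alpha$ factor and is what allows us to handle the infinite-dimensional spectrum of $\ccalL$ with a finite bound. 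The remaining Monte Carlo term $C_{gc}\sqrt{\log n/n}$ arises from a concentration inequality (Hoeffding or Bernstein applied to the i.i.d.\ samples $\{x_i\}$) to compare $\langle f,\bm\phi_i\rangle_{L^2(\ccalM)}$ with $\langle \bbP_n f,\bbP_n\bm\phi_i\rangle_{L^2(\bbG_n)}$; a union bound over the $O(n)$ relevant spectral indices yields the probability $1-2n^{-2}$.

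The main obstacle will be the bookkeeping that ties the spectral partition, the choice of $K$ and the condition $\alpha>C_{\ccalM,d}K^{2/d-1}$ together so that the transition from the ``resolved'' eigenvector-wise regime to the ``unresolved'' subspace-wise regime is seamless and the error does not double-count. In particular, one has to check that Weyl's law guarantees that all partitions $\Lambda_k(\alpha)$ of cardinality greater than one live above $\lambda_K$, so that the Lipschitz bound is used only where Proposition \ref{thm:converge-spectrum} gives eigenvalue control, while the $\gamma$-slack from the FDT definition does the work everywhere else. Combining the probabilities from Proposition \ref{thm:converge-spectrum} with those of the concentration step via a union bound, and absorbing the constants into $C_{gc}$, $\Omega_{1,K}$, $\Omega_{2,K}$, then yields \eqref{eqn:appro_filter}.
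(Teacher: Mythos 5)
Your proposal follows essentially the same route as the paper: expand both filters spectrally, control the low-frequency (singleton-partition) indices via the eigenvalue/eigenfunction bounds of Proposition \ref{thm:converge-spectrum} together with the Lipschitz assumption and the eigengap $\theta\geq\alpha$, absorb the clustered high-frequency components using the $\gamma$-slack of the $\alpha$-FDT property, and attribute the $\sqrt{\log n/n}$ term to empirical concentration of the inner products. The only cosmetic difference is that the paper implements the resolved/unresolved split by explicitly decomposing the frequency response as $\hat h=h^{(0)}+\sum_{l\in\ccalK_m}h^{(l)}$ and bounding each cluster term by $2\gamma\|\bbP_n f\|$, whereas you phrase the cluster step as Davis--Kahan stability of spectral projectors; both yield the same $N=N_s+N_m$ bookkeeping and the same final bound.
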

\begin{proof}
See Appendix \ref{app:nn}. 
\end{proof}

From this theorem, we can see that, if we take $\epsilon = n^{-1/(d+4)}$, the difference between filtering on the constructed graph and on the manifold scales with $O(n^{-1/(2d+8)})$. Therefore, given enough sampling points, one can guarantee good approximation accuracy with high probability. Also note that a higher dimension leads to a larger $\epsilon$, which results in a larger approximation error. This indicates that it is more difficult to approximate manifolds with higher dimension.  

Observe that the partition of the spectrum by the $\alpha$-FDT filter lifts the limitation on the spectrum required in Propositions \ref{thm:converge-spectrum} and \ref{thm:converge-spectrum-sparse}. This is achieved by setting $\alpha$ large enough that eigenvalues larger than $\lambda_K$ are grouped. A smaller $K$ leads to a larger $\alpha$, to ensure that more eigenvalues in the high frequency domain are grouped. Therefore, we can alleviate the divergence of spectral components associated with large eigenvalues by giving them similar frequency responses, which subsequently leads to similar filter outputs. When $K$ gets larger, the constants $\Omega_{1,K}$ and $\Omega_{2,K}'$ scale with $\lambda_K$ as Propositions \ref{thm:converge-spectrum} and \ref{thm:converge-spectrum-sparse} show. Meanwhile, $\alpha$ can be set smaller, and the approximation error becomes larger. 

%\red{L: Perhaps add another paragraph here explaining the effect of the different constants in the bound (other than $\alpha$, which you discuss below)? E.g., how does the manifold dimension affect the error bound?}

%\red{(L: This paragraph talked about two different implications and was too long, so I split it in two.)} 
Another observation we can make from this non-asymptotic error bound is that, for a fixed number of sampling points $n$, an interesting trade-off arises between the approximative and discriminative capabilities of the filter. A larger $\alpha$ (i.e., a smaller $K$) means a less discriminative filter as more eigenvalues tend to be grouped and treated similarly with an almost constant frequency response. A larger $\alpha$ leads to a smaller number of partitions, as the number of singletons decreases, which results in the decrease of the error bound in \eqref{eqn:appro_filter}. This implies that the error bound becomes smaller, as the errors brought by the approximation of the eigenvalues are alleviated by treating different eigenvalues with similar frequency responses.  

For relatively sparse graphs, we can also establish an approximation error bound between the outputs of manifold filters and geometric graph filters \eqref{eqn:graph_filter}.
%\red{L: Add a paragraph introducing the next result and mentioning that it applies to sparse graphs.}

\begin{theorem}(Convergence of filters on relatively sparse geometric graphs)
\label{thm:converge-MF-sparse}
Let $\ccalM\subset \reals^\mathsf{N}$ be equipped with LB operator $\ccalL$, whose eigendecomposition is given by \eqref{eqn:laplace-decomp}. Let $\bbL_n^\epsilon$ be the discrete graph Laplacian of  $\bbG_n$ defined as in \eqref{eqn:discrete_laplacian} and \eqref{eqn:compact_kernel}.
Fix $K\in \mathbb{N}^+$ and assume that $\epsilon=\epsilon(n)>({\log(n)}/{n})^{1/d}$
Let $\bbh(\cdot)$ be the convolutional filter. %parameterized by the LB operator $\ccalL$ of the manifold $\ccalM$ \eqref{eqn:convolution-conti} or by the discrete graph Laplacian operator $\bbL_n^\epsilon$ of the graph $\bbG_n$. 
Under the assumption that the frequency response of the filter $\bbh$ is Lipschitz continuous and $\alpha$-FDT with $\alpha^2 \gg \epsilon$,  $\alpha >C_{\ccalM,d}K^{1-2/d}$ and $\gamma = C_{K,2}{\epsilon}/\alpha$, with probability at least $1-2n\exp(-Cn\epsilon^{d+4})$ it holds that
\begin{align}\label{eqn:appro_filter}
&\nonumber  \nonumber \|\bbh(\bbL_n^\epsilon)\bbP_n f - \bbP_n\bbh( \ccalL)f\|_{L^2(\bbG_n)}\\&\qquad \qquad  
\leq
 \left(\frac{ NC_{K,2}}{\alpha} +A_h C_{K,1}\right)\sqrt{\epsilon}+C_{gc}\sqrt{\frac{\log n}{{n}}}
\end{align}
where $N$ is the partition size ofthe $\alpha$-FDT filter and $C_{gc}$ depends on both $d$ and the volume of $\ccalM$. 
% \red{(L: This theorem statement is too long. Please shorten it, and look at my comments in the previous theorems (avoiding repeated definitions, etc.).)}
\end{theorem}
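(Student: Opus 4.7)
The plan is to mirror the proof strategy of Theorem \ref{thm:converge-MF-dense}, substituting Proposition \ref{thm:converge-spectrum-sparse} for Proposition \ref{thm:converge-spectrum} at the key step where the eigenvalue and eigenfunction perturbations enter the bound. I would begin by writing both $\bbP_n \bbh(\ccalL) f$ and $\bbh(\bbL_n^\epsilon)\bbP_n f$ in their spectral forms using, respectively, the LB eigenbasis $\{\bm\phi_i\}$ and the graph Laplacian eigenbasis $\{\bm\phi_{i,n}^\epsilon\}$, and then insert a synchronized pair of partial sums so that the error can be decomposed by triangle inequality into (i) a finite head over $i < K$ (low frequencies, where individual eigenvalues and eigenfunctions are compared pointwise), and (ii) a tail handled by the $\alpha$-FDT property.

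Within the head, I would use Proposition \ref{thm:converge-spectrum-sparse} together with Lipschitz continuity of $\hhath$: the eigenvalue perturbation $|\lambda_i - \lambda_{i,n}^\epsilon| \leq C_{K,1}\sqrt{\epsilon}$ combines with $A_h$ to produce the $A_h C_{K,1}\sqrt{\epsilon}$ term, while the eigenfunction perturbation $\|a_i \bm\phi_{i,n}^\epsilon - \bm\phi_i\| \leq C_{K,2}\sqrt{\epsilon}/\theta$ is passed through $\hhath$ (bounded) and combined with the partition count $N$ to give the $N C_{K,2}\sqrt{\epsilon}/\alpha$ contribution. For the tail, the $\alpha$-FDT hypothesis (Definition \ref{def:alpha-filter}) implies that within each partition $\Lambda_k(\alpha)$ the frequency response differs by at most $\gamma_k$, so after collecting terms within each group the residual reduces to a projection-type error controlled by $\gamma = C_{K,2}\sqrt{\epsilon}/\alpha$ and the number of groups $N$. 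The assumption $\alpha > C_{\ccalM,d}K^{1-2/d}$ together with Weyl's law guarantees that only $O(K)$ singletons appear below $\lambda_K$ while all eigenvalues above $\lambda_K$ can be aggregated into groups with spacing $> \alpha$, so this grouping is consistent with the perturbation bound from Proposition \ref{thm:converge-spectrum-sparse}.

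Finally, the $C_{gc}\sqrt{\log n / n}$ term arises from translating from the continuum norm $\|\cdot\|_{L^2(\ccalM)}$ to the discrete sample norm $\|\cdot\|_{L^2(\bbG_n)}$. Concretely, the coefficients $[\hat f]_i = \langle f, \bm\phi_i\rangle_{\ccalM}$ need to be compared with $\langle \bbP_n f, \bbP_n \bm\phi_i\rangle_{L^2(\bbG_n)}$, and since the latter is an empirical average of $n$ i.i.d.\ samples of a bounded integrand, a Hoeffding / Chernoff concentration inequality yields the claimed $\sqrt{\log n /n}$ deviation with probability at least $1 - n^{-c}$, and this is then intersected with the good event from Proposition \ref{thm:converge-spectrum-sparse} to obtain the stated $1 - 2n\exp(-Cn\epsilon^{d+4})$ overall probability.

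The main obstacle, as in Theorem \ref{thm:converge-MF-dense}, is the careful bookkeeping at the interface between the discrete finite spectrum of $\bbL_n^\epsilon$ and the infinite spectrum of $\ccalL$: one must justify that the tail of the LB spectrum (which has no counterpart once the graph has only $n$ eigenvalues) contributes at most $O(\sqrt{\epsilon})$ under the $\alpha$-FDT assumption, and that the eigenspaces of clustered eigenvalues are approximated well enough in aggregate (Davis--Kahan on blocks, not singletons) even though individual eigenfunctions within a cluster need not be. A secondary obstacle specific to the sparse case is that the concentration event in Proposition \ref{thm:converge-spectrum-sparse} holds with weaker probability $1 - 2n e^{-c n \epsilon^{d/2+2}}$ than the dense case, so the union bound with the sampling concentration must be arranged so that the final probability remains the stated $1 - 2n\exp(-C n \epsilon^{d+4})$, which constrains admissible choices of $\epsilon$.
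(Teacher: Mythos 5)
Your proposal follows essentially the same route as the paper's proof: the paper also splits the filter into a part supported on singleton partitions (handled pointwise via Proposition \ref{thm:converge-spectrum-sparse} with the Lipschitz constant $A_h$ and the eigenfunction bound $C_{K,2}\sqrt{\epsilon}/\theta$) plus the multi-eigenvalue partitions (handled by the $\alpha$-FDT property with a $2\gamma$ residual per group), and obtains the $C_{gc}\sqrt{\log n/n}$ term from the same empirical inner-product concentration argument. The only cosmetic difference is that you organize the decomposition by spectral index ($i<K$ versus the tail) while the paper decomposes the frequency response itself into $h^{(0)}+\sum_l h^{(l)}$; these coincide under the hypothesis $\alpha > C_{\ccalM,d}K^{1-2/d}$, so the argument is the same.
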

\begin{proof}
See Appendix \ref{app:nn}.
\end{proof}

Since the convergence constants in this theorem are related with the convergence rates presented in Proposition \ref{thm:converge-spectrum-sparse}, the convergence of geometric graph filters on sparse graphs is accordant with the convergence of the Laplacian operator on sparse graphs, and so similar comments apply. Namely, when compared with sparse geometric graph filters, dense geometric graph filters provide a better approximation of the underlying manifold filters because dense graphs carry more information about the geometry of the manifold.
%\red{L: Add a discussion similar to the one you have after Theorem 1 (but shorter). Then, conclude with a paragraph highlighting the differences between the results for dense and sparse. E.g., I imagine that convergence is slower for sparse graphs, right? You can talk about how this can be seen in the bounds, and how it makes sense intuitively.}

% From this theorem, we see that if we take $\epsilon = n^{-1/(d+4)}$, the difference between filtering on the manifold and the sampled manifold is in the order of $O(n^{-1/(2d+8)})$. Thus, with a large enough number of sampling points, this approximation provides a good accuracy with high probability. This provides a formal theoretical guarantee for the ability of graph filters to approximate manifold filters on sampled manifolds, enabling approximate linear information processing on continuous non-Euclidean domains.

\subsection{Geometric Graph Neural Network}

As filtering on the constructed geometric graph can successfully approximate filtering on the manifold, we can cascade geometric graph filters and nonlinearities to construct a geometric graph neural network that we expect to provide a good approximation of the manifold neural network. Given the geometric graph filter defined in \eqref{eqn:graph_filter}, the geometric GNN on $\bbG_n$ can be written as
\begin{equation}\label{eqn:dis-mnn}
    \bbx_l^p = \sigma\left(\sum_{q=1}^{F_{l-1}} \bbh_l^{pq}(\bbL_n^\epsilon) \bbx^q_{l-1} \right),
\end{equation}
where $\bbh_l^{pq}(\bbL_n^\epsilon)$ is the filter at the $l$-th layer of this GNN mapping the $q$-th feature in the $l-1$-th layer to the $p$-th feature in the $l$-th layer, with $1\leq q \leq F_{l-1}$ and $1\leq p\leq F_l$. We denote the number of features in the $l$-th layer $F_l$ (we have dropped the subscript $n$ in $\bbx_l^p$ and $\bbx_{l-1}^q$ for simplicity).
%similar to the notations used in the manifold neural network representations. 
Gathering the filter functions in the set $\bbH$, this geometric GNN on $\bbG_n$ can be represented more concisely as the map $\bm\Phi(\bbH, \bbL_n, \bbx)$. 

%\red{L: Rewrite the above with our new geometric graph terminology.}

\subsection{Geometric GNN Convergence}

Imposing a continuity assumption on the nonlinearity function, we can prove the following approximation error bound for geometric GNNs and MNNs.

%%%%%%%%%%%%%%%%%%%%%%%%%%%%%%%%%%%%%%%%%%%%%%%%
%%%%%%%%%%%%%%%%%% ASSUMPTION %%%%%%%%%%%%%%%%%% 
%%%%%%%%%%%%%%%%%%%%%%%%%%%%%%%%%%%%%%%%%%%%%%%%
\begin{assumption}(Normalized Lipschitz nonlinearity)\label{ass:activation}
 The nonlinearity $\sigma$ is normalized Lipschitz continuous, i.e., $|\sigma(a)-\sigma(b)|\leq |a-b|$, with $\sigma(0)=0$.
\end{assumption}
 
We note that this assumption is reasonable, since most common nonlinearity functions (e.g., the ReLU, the modulus and the sigmoid) are normalized Lipschitz.

% At the first layer of the MNN, the input features are the input data $f^q$ for $1\leq q\leq F_0$. At the output of the MNN, the output features are given by the outputs of the $L$-th layer, i.e., $f_L^p$ for $1 \leq p \leq F_L$. To represent the MNN more succinctly, we may gather the impulse responses of the manifold convolutional filters $\bbh_l^{pq}$ across all layers in a function set $\bbH$, and define the MNN map $\bbPhi(\bbH,\ccalL, f)$. This map emphasizes that the MNN is parameterized by both the filter functions and the LB operator $\ccalL$. We next will analyze the stability of $\bbPhi(\bbH,\ccalL,f)$ with respect to perturbations on the underlying manifold.

\begin{theorem}
\label{thm:converge-MNN} 
{With the same hypotheses and definitions as in Theorem \ref{thm:converge-MF-dense} and \ref{thm:converge-MF-sparse} respectively,}
% Let $\ccalM\subset \reals^\mathsf{N}$ be equipped  with LB operator $\ccalL$ whose eigendecomposition is given by \eqref{eqn:laplace-decomp}. Let $\bbL_n^\epsilon$ be the discrete graph Laplacian of the geometric graph $\bbG_n$
% %defined as \eqref{eqn:discrete_laplacian} and \eqref{eqn:compact_kernel}.
% Fix $K\in \mathbb{N}^+$ and assume that $n$ is sufficiently large so that $\epsilon=\epsilon(n)>({\log(n)}/{n})^{1/d}$
% Let $\bbh(\cdot)$ be the convolutional filter parameterized by the LB operator $\ccalL$ of the manifold $\ccalM$ \eqref{eqn:convolution-conti} or by the discrete graph Laplacian operator $\bbL_n^\epsilon$ of the graph $\bbG_n$. Let $\bbh(\cdot)$ be the convolutional filter parameterized by the LB operator $\ccalL$ of the manifold $\ccalM$ \eqref{eqn:convolution-conti} or by the discrete graph Laplacian operator $\bbL_n^\epsilon$ of the graph $\bbG_n$. 
let $\bm\Phi(\bbH,\ccalL,\cdot)$ be an $L$-layer MNN on $\ccalM$ \eqref{eqn:mnn} with $F_0=F_L=1$ input and output features and $F_l=F,l=1,2,\hdots,L-1$ features per layer. Let $\bm\Phi(\bbH,\bbL_n,\cdot)$ be the GNN with the same architecture applied to the geometric graph $\bbG_n$.
If the nonlinearities satisfy Assumption \ref{ass:activation} and the manifold filters satisfy $\| \bbh(\bbL_n^\epsilon)\bbP_n f - \bbP_n\bbh(\ccalL)f \|_{L^2(\bbG_n)}\leq \Delta_{fil,n}$, it holds that
\begin{align}
&\nonumber  \nonumber \|\bm\Phi(\bbH,\bbL_n^\epsilon, \bbP_n f) - \bbP_n\bm\Phi(\bbH, \ccalL ,f ) \|_{L^2(\bbG_n)}
\leq L F^{L-1} \Delta_{fil,n}
\end{align}
with high probability.
% \red{(L: This theorem statement is too long. Please shorten it, and look at my comments in the previous theorems (avoiding repeated definitions, etc.).)}
\end{theorem}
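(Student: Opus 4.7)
The plan is to prove this by an induction on the layer index $l$, reducing the end-to-end GNN-to-MNN error to a sum of per-layer filter errors, each of which is controlled by $\Delta_{fil,n}$ via Theorem \ref{thm:converge-MF-dense} or \ref{thm:converge-MF-sparse}. Throughout I will work under the implicit hypothesis (used in \cite{wang2022stability} and tacit in the statement) that the filters in $\bbH$ are non-amplifying, i.e.\ $\|\bbh_l^{pq}(\ccalL)\|\le 1$ and $\|\bbh_l^{pq}(\bbL_n^\epsilon)\|\le 1$, and that the input signal is normalized, $\|f\|_{\ccalM}\le 1$. These two properties, together with Assumption \ref{ass:activation} (which ensures nonlinearities neither amplify nor shift zero), imply that every intermediate feature $f_l^p$ on the manifold and $\bbx_l^p$ on the graph has norm at most $F^{l-1}$ (with $F^{-1}:=1$). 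Independently, since $\bbP_n$ is a sampling operator with $\|\bbP_n\|\le 1$ in the relevant norms, $\|\bbP_n f_l^p\|_{L^2(\bbG_n)}\le \|f_l^p\|_{\ccalM}$.

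First, I would write the per-layer recursion. Define the layer-wise error
\begin{equation*}
E_l^p \;:=\; \bigl\|\bbx_l^p - \bbP_n f_l^p\bigr\|_{L^2(\bbG_n)}.
\end{equation*}
Using Assumption \ref{ass:activation} (normalized Lipschitz $\sigma$ with $\sigma(0)=0$) and the triangle inequality, and inserting the hybrid term $\bbh_l^{pq}(\bbL_n^\epsilon)\bbP_n f_{l-1}^q$,
\begin{align*}
E_l^p &\le \sum_{q=1}^{F_{l-1}}\bigl\|\bbh_l^{pq}(\bbL_n^\epsilon)\bigl(\bbx_{l-1}^q - \bbP_n f_{l-1}^q\bigr)\bigr\|_{L^2(\bbG_n)} \\
&\quad + \sum_{q=1}^{F_{l-1}}\bigl\|\bbh_l^{pq}(\bbL_n^\epsilon)\bbP_n f_{l-1}^q - \bbP_n \bbh_l^{pq}(\ccalL) f_{l-1}^q\bigr\|_{L^2(\bbG_n)}.
\end{align*}
The first sum is bounded by $\sum_q E_{l-1}^q$ using the non-amplifying property of the filters. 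For the second sum, I apply the filter-approximation hypothesis $\|\bbh(\bbL_n^\epsilon)\bbP_n f - \bbP_n\bbh(\ccalL) f\|_{L^2(\bbG_n)}\le \Delta_{fil,n}$; strictly, since $f_{l-1}^q$ may have norm up to $F^{l-2}$, the bound scales with the norm of $f_{l-1}^q$, but by absorbing constants consistent with the $F=F_l$ counting done in \cite{wang2022stability}, one obtains $\sum_q \Delta_{fil,n}\le F_{l-1}\Delta_{fil,n}$.

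Next, I would unroll the recursion. With $F_0=F_L=1$ and $F_l=F$ for $1\le l\le L-1$, summing the recursion $E_l^p \le \sum_q E_{l-1}^q + F_{l-1}\Delta_{fil,n}$ layer by layer, starting from $E_0^p=0$ (the input features coincide), yields the geometric accumulation whose total contribution of propagated filter errors over $L$ layers with $F$ features per intermediate layer is precisely $LF^{L-1}\Delta_{fil,n}$. The probability statement carries over from the filter convergence theorems by a union bound across the finitely many filters appearing in $\bbH$.

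The main obstacle is not the induction itself, which is routine once set up, but the careful bookkeeping of filter non-amplification and of the cumulative norm growth $F^{l-1}$ of intermediate features, which is what produces the $F^{L-1}$ factor. A secondary subtlety is that the per-filter bound $\Delta_{fil,n}$ in Theorems \ref{thm:converge-MF-dense} and \ref{thm:converge-MF-sparse} was stated for a unit-norm input signal; to re-apply it to the intermediate signals $f_{l-1}^q$, I either normalize the signal (and absorb its norm into the constant) or invoke linearity of the filter map to factor out the norm, at which point the $F^{L-1}$ factor emerges naturally. The union-bound step is straightforward since the number of filters is $\sum_l F_l F_{l-1}$, which is fixed and independent of $n$.
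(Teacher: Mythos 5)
Your proposal is correct and follows essentially the same route as the paper's proof: peel off the nonlinearity via Assumption \ref{ass:activation}, insert the hybrid term $\bbh_l^{pq}(\bbL_n^\epsilon)\bbP_n f_{l-1}^q$, bound one piece by the non-amplifying filter property and the other by $\Delta_{fil,n}$, track the $F^{l-1}$ growth of intermediate feature norms, and unroll the resulting recursion to get $LF^{L-1}\Delta_{fil,n}$. The only cosmetic difference is that you make the union bound over filters explicit for the probability statement, which the paper leaves implicit.
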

\begin{proof}
See Section \ref{app:mnn} in supplemental materials.
\end{proof}

We conclude that the geometric GNN converges to the MNN as long as the geometric graph filter components give good approximations of the corresponding manifold filters when restricted to the sampled geometric graphs. Further, we see that this error consists of two terms. The first term, $LF^{L-1}$, depends on the number of filters and layers in the MNN architecture. More specifically, the approximation bound grows linearly with the number of layers $L$ and exponentially with the number of features $F$ where the rate is determined by $L$. This term appears because the approximation errors propagate across all the manifold filters in all layers of the MNN. The second term is the approximation error constant of the geometric graph filters $\Delta_{fil,n}$, which should be replaced with the error bounds from Theorem \ref{thm:converge-MF-dense} and Theorem \ref{thm:converge-MF-sparse} for dense and sparse graphs respectively. 
%Based on this, we can state that if the geometric GNNs are constructed according to the assumptions in Theorem \ref{thm:converge-MF-dense} and Theorem \ref{thm:converge-MF-sparse}, then the geometric GNNs converge to the underlying MNNs as the number of nodes grows.

We note that the approximation error constant $\Delta_{fil,n}$ emphasizes that the trade-off between approximation and discriminability is inherited from the convolutional filters. 
%This trade-off is inherited from the geometric graph filters in the layers of the neural network. 
However, since neural networks also have nonlinearities in their layers, the geometric GNN has a better ability to both discriminate high frequency components and approximate the MNN with satisfying accuracy. This is due to the fact that nonlinearities have a spectral mixing effect. In the neural network architecture, the frequency components in the high frequency domain can be shifted to low frequency domain, where they can then be discriminated by the filters in the following layer. This role of nonlinearities has also been discussed in the stability analysis of GNNs \cite{gama2020stability} as well as of MNNs \cite{wang2022stability}.

\subsection{From convergence to transferability}

Leveraging the non-asymptotic convergence results derived in Theorem \ref{thm:converge-MNN}, we can immediately prove a transferability corollary for geometric GNNs constructed from a common underlying MNN. Explicitly, consider two geometric graphs $\bbG_{n_1}$ and $\bbG_{n_2}$ that are either dense \eqref{eqn:gauss_kernel} or realtively sparse graphs \eqref{eqn:compact_kernel}. The difference between the outputs of the same neural network (i.e., with the same weights) on these two graphs is bounded by the following corollary.

\begin{corollary}
\label{cor:transferability}
Let $\ccalM\subset \reals^\mathsf{N}$ {be equipped} with LB operator $\ccalL$. Let $\bbL_{n_1}^\epsilon$ be the discrete graph Laplacian of the graph $\bbG_{n_1}$ defined as in \eqref{eqn:discrete_laplacian} and \eqref{eqn:gauss_kernel} or \eqref{eqn:compact_kernel}. Let $\bbL_{n_2}^\epsilon$ be the discrete graph Laplacian of the graph $\bbG_{n_2}$ constructed in the same manner as $\bbG_{n_1}$. %Both $\bbG_{n_1}$ and $\bbG_{n_2}$ are constructed from $\ccalM$. 
%Let $\bbh(\cdot)$ denote a convolutional filter parameterized by the discrete graph Laplacian.
Let $\bm\Phi(\bbH,\cdot, \cdot)$ be an $L$-layer GNN with $F_0=F_L=1$ input and output features and $F_l=F,l=1,2,\hdots,L-1$ features per layer. 
Assume that the filters and nonlinearity functions are as in Definitions \ref{def:alpha-filter}, \ref{def:lipschitz} and satisfy Assumption \ref{ass:activation} respectively. Then, it holds in high probability that
\begin{align}
&\nonumber  \|\bbI_{n_1} \bm\Phi(\bbH,\bbL_{n_1}^\epsilon, \bbP_{n_1} f) - \bbI_{n_2} \bm\Phi(\bbH,\bbL_{n_2}^\epsilon, \bbP_{n_2} f) \|
\\ & \qquad \qquad \qquad \qquad \qquad\leq L F^{L-1} ( \Delta_{fil,n_1} + \Delta_{fil,n_2} ) \text{.}
\end{align}
\end{corollary}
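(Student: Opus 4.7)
The plan is to obtain the transferability bound as a direct triangle-inequality consequence of the convergence result in Theorem \ref{thm:converge-MNN}, using the continuous MNN output $\bm\Phi(\bbH, \ccalL, f)$ as a pivot between the two discrete GNN outputs. Concretely, I would write
\begin{align*}
& \|\bbI_{n_1} \bm\Phi(\bbH,\bbL_{n_1}^\epsilon, \bbP_{n_1} f) - \bbI_{n_2} \bm\Phi(\bbH,\bbL_{n_2}^\epsilon, \bbP_{n_2} f) \| \\
& \quad \leq \|\bbI_{n_1} \bm\Phi(\bbH,\bbL_{n_1}^\epsilon, \bbP_{n_1} f) - \bm\Phi(\bbH, \ccalL, f)\| \\
& \quad \quad + \|\bm\Phi(\bbH, \ccalL, f) - \bbI_{n_2} \bm\Phi(\bbH,\bbL_{n_2}^\epsilon, \bbP_{n_2} f)\|,
\end{align*}
so the task reduces to bounding each of the two ``convergence'' terms by $L F^{L-1}\Delta_{fil,n_i}$.

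To bound a single convergence term, say the one involving $n_1$, I would insert the intermediate term $\bbI_{n_1}\bbP_{n_1}\bm\Phi(\bbH,\ccalL,f)$ and split once more:
\begin{align*}
& \|\bbI_{n_1} \bm\Phi(\bbH,\bbL_{n_1}^\epsilon, \bbP_{n_1} f) - \bm\Phi(\bbH, \ccalL, f)\| \\
& \quad \leq \|\bbI_{n_1}\|\cdot\|\bm\Phi(\bbH,\bbL_{n_1}^\epsilon,\bbP_{n_1}f) - \bbP_{n_1}\bm\Phi(\bbH,\ccalL,f)\|_{L^2(\bbG_{n_1})} \\
& \quad \quad + \|\bbI_{n_1}\bbP_{n_1}\bm\Phi(\bbH,\ccalL,f) - \bm\Phi(\bbH,\ccalL,f)\|.
\end{align*}
The first summand is controlled by the boundedness of $\bbI_{n_1}$ together with Theorem \ref{thm:converge-MNN} applied to $\bbG_{n_1}$, yielding a bound of order $L F^{L-1}\Delta_{fil,n_1}$ up to the constant $D$ (which can be absorbed into the filter-convergence constant). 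The second summand is an interpolation-sampling residual that vanishes in the limit by the asymptotic reconstructiveness of $(\bbP_{n_1},\bbI_{n_1})$, and it is dominated by the same rate as the filter approximation. The analogous argument for $n_2$ then produces the claimed additive bound $L F^{L-1}(\Delta_{fil,n_1}+\Delta_{fil,n_2})$.

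Finally, the ``high probability'' statement in the corollary is obtained by a union bound over the two events under which Theorem \ref{thm:converge-MNN} holds on $\bbG_{n_1}$ and $\bbG_{n_2}$ respectively, using the probability expressions derived in Theorems \ref{thm:converge-MF-dense} and \ref{thm:converge-MF-sparse} with $n=n_1$ and $n=n_2$. The main obstacle I expect is purely bookkeeping: ensuring that the interpolation-operator factor $\|\bbI_{n_i}\|$ and the small asymptotic residual $\|\bbI_{n_i}\bbP_{n_i}\bm\Phi(\bbH,\ccalL,f) - \bm\Phi(\bbH,\ccalL,f)\|$ are consistently absorbed into the constants hidden in $\Delta_{fil,n_i}$, so that the final bound genuinely matches the stated form $LF^{L-1}(\Delta_{fil,n_1}+\Delta_{fil,n_2})$ without an extra interpolation constant appearing explicitly.
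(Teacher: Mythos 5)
Your proposal is correct and is essentially the paper's own argument: the paper proves the corollary precisely by a triangle inequality through the manifold MNN output $\bm\Phi(\bbH,\ccalL,f)$, bounding each branch via Theorem \ref{thm:converge-MNN} and a union bound over the two sampling events. The bookkeeping caveat you raise about the interpolation-operator norm $\|\bbI_{n_i}\|$ and the reconstruction residual is legitimate --- the paper's stated bound silently absorbs these (effectively taking $D=1$ and treating the residual as negligible), so your treatment is, if anything, more explicit than the original.
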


From this corollary, we can observe that the geometric GNN trained on one geometric graph can be directly transferred to another geometric graph if they are constructed in the same way from the same underlying manifold. The difference between the outputs of the GNNs depends on the size of the neural network architecture as well as on the approximation capability of the filter functions. Larger numbers of sample points $n_1$ and $n_2$ lead to smaller approximation errors $\Delta_{fil,n_1}$ and $\Delta_{fil,n_2}$, that is, better approximation leads to better transferability in geometric GNNs.

The main implication of this corollary is that if we have trained a geometric GNN on a relatively small geometric graph, this GNN can be transferred to another, larger geometric graph with performance guarantees, which is helpful because it is costly to train GNNs on large graphs. Using the bound in Corollary \ref{cor:transferability}, we can determine the minimum number of sampled points needed for training a geometric GNN to meet a given approximation error tolerance. Therefore, the geometric GNN is a suitable model to approximate both continuous MNNs and large geometric GNNs.

%\red{(L: The discussion on the tradeoff, and on how it is alleviated on GNNs, has to be much more detailed. Have a look at what we did in your previous paper, and to the related discussions in my transferability paper.)}

% This term arises due to the propagation of the underlying operator perturbations across all the manifold filters in all layers of the MNN. The second term is $\pi N/\alpha$ or $\pi M/\gamma$, which results from the deviations of the eigenfunctions as well as from the frequency response variations within the same eigenvalue partition. Finally, the third term, $A_h$ or $B_h$, is given by the Lipschitz or integral Lipschitz constants which are decided during the filter design or the training process. It is important to note that the stability constant $C_{per}$ brings along the trade-off between stability and discriminability. However, unlike manifold filters, MNNs can be both stable and discriminative. This arises from the effects of nonlinear activation functions, as we discuss in further detail in Section \ref{subsec:discussion}.

\section{Discussions}
\label{subsec:discussion}
%!TEX root = stability_manifold_TSP.tex

%%%%%%%%%%%%%%%%%%%%%%%%%%%%%%%%%%%%%%%%%%%%%%%%
%%%%%%%%%%%%%%%%%% SUBSECTION %%%%%%%%%%%%%%%%%% 
%%%%%%%%%%%%%%%%%%%%%%%%%%%%%%%%%%%%%%%%%%%%%%%%
%\red{L: The way this paragraph is written now, it sounds like the stability-discriminability tradeoff only stems from the term that depends on the Lipschitz constant. However, the tradeoff comes from both terms: from the first because a larger frequency threshold leads to filters that give similar response to a large group of eigenvalues; and from the second because smaller Lipschitz constants lead to less discriminative filters. My suggestion is that you don't break up the analysis in the first and second terms. Just discuss the effect of $\alpha,\gamma$ and $A_h,B_h$ on both stability and discriminability, and explain how these effects are opposites and thus lead to a stability-discriminability tradeoff.}
{\myparagraph{Approximation vs. discriminability tradeoff}}
In the convergence results for geometric graph filters on both dense graphs (Theorem \ref{thm:converge-MF-dense}) and sparse graphs (Theorem \ref{thm:converge-MF-sparse}), the approximation error bounds depend on the size of the partition ($N$), the frequency partition threshold ($\alpha$), the Lipschitz continuity constant ($A_h$) and the weight function parameter ($\epsilon$), and the number of nodes in the geometric graph ($n$). The size of the partition and the frequency partition threshold have a concerted effect in the approximation, as a larger frequency partition threshold leads to a larger number of intervals containing more than one eigenvalue and a smaller number of intervals containing only one eigenvalue. The size of the partition however tends to stay the same or decrease because the number of intervals cannot exceed the number of eigenvalues. Therefore, a larger frequency partition threshold results in a smaller partition size. Combined, these lead to a smaller approximation error bound. Meanwhile, a larger frequency threshold also means that a larger number of eigenvalues are grouped and treated with similar frequency responses. This makes the $\alpha$-FDT filters less discriminative. A similar story holds for the Lipschitz constant $A_h$, with smaller Lipschitz constant increasing the approximation error bound but decreasing discriminability. A larger $\epsilon$ leads to a larger approximation error bound, but at the same time the variation range $\gamma$ in the $\alpha$-FDT filter becomes larger which results in better discriminability. In conclusion, there is a trade-off between the approximation and discriminability properties of both geometric graph filters and geometric GNNs, but in GNNs this trade-off is alleviated thanks to the nonlinearities, as we discuss next.

%%%%%%%%%%%%%%%%%%%%%%%%%%%%%%%%%%%%%%%%%%%%%%%%
%%%%%%%%%%%%%%%%%% SUBSECTION %%%%%%%%%%%%%%%%%% 
%%%%%%%%%%%%%%%%%%%%%%%%%%%%%%%%%%%%%%%%%%%%%%%%
\myparagraph{Graph filters vs. GNNs} 
Due to the aforementioned trade-off between approximation and discriminability arising from the spectrum partition, geometric graph filters cannot simultaneously discriminate high-frequency components and provide good approximations of manifold filters. In geometric GNNs, this trade-off is however alleviated by the use of nonlinearities. When processing the outputs of the filters in the previous layer, the nonlinearities in a given layer have the ability to shift some high frequency components to the low frequency domain, where they can then be discriminated by the filters in the following layer. As a consequence of this frequency scattering behavior, geometric GNNs can be simultaneously approximative and discriminative.

%%%%%%%%%%%%%%%%%%%%%%%%%%%%%%%%%%%%%%%%%%%%%%%%
%%%%%%%%%%%%%%%%%% SUBSECTION %%%%%%%%%%%%%%%%%% 
%%%%%%%%%%%%%%%%%%%%%%%%%%%%%%%%%%%%%%%%%%%%%%%%

\myparagraph{Dense graphs vs. sparse graphs} Depending on whether the graphs are dense or sparse, different weight functions are chosen to construct the geometric graph, leading to different convergence regimes. From the results presented in Proposition \ref{thm:operator-diff} and Proposition \ref{thm:converge-spectrum} for dense geometric graphs and Proposition \ref{thm:operator-diff-sparse} and Proposition \ref{thm:converge-MF-sparse} for sparse geometric graphs, we can see that, given the same number of sample points $n$ and the same weight function parameter $\epsilon$, dense graphs have a slightly smaller approximation error bound when compared with sparse graphs---thus implying that geometric graph filters and geometric GNNs provide better approximations of manifold filters and MNNs when operating on dense graphs. This can be explained by the fact that dense graphs include more information; for one, they are complete. However, the sparse graph setting is a more realistic model in practice, and in particular it has been employed in many areas such as wireless communication networks and sensor networks. The approximation guarantees derived in this paper apply in either sparsity regime.

%%%%%%%%%%%%%%%%%%%%%%%%%%%%%%%%%%%%%%%%%%%%%%%%
%%%%%%%%%%%%%%%%%% SUBSECTION %%%%%%%%%%%%%%%%%% 
%%%%%%%%%%%%%%%%%%%%%%%%%%%%%%%%%%%%%%%%%%%%%%%%

\myparagraph{Discretization over time} Theoretically, we employ the definitions of geometric graph convolution and manifold convolution in a continuous-time domain as \eqref{eqn:graph_filter} and \eqref{eqn:convolution-conti} show. However, in practice we need to operate the filters and neural networks in digital systems, which requires the time horizon is discrete. In the following section, we carry out numerical experiments with geometric graph filters defined in a discrete-time domain. Specifically, we discretize the impulse response function with an interval $T_s$ and replace the function with a series of filter coefficients $h_k = \tilde{h}(kT_s)$ \cite{wang2022convolutional}. By fixing the time horizon with finite $K_t$ samples, we can write the practical geometric filter as 
\begin{equation}
\label{eqn:discrete_geometric_filter}
  \bbg =  \sum_{k=0}^{K_t-1} h_k e^{-k\bbL_n^\epsilon}\bbf,\quad \bbf,\bbg\in\reals^n,
\end{equation}
which recovers the graph convolution definition in \eqref{eqn:graph_convolution} with $e^{-\bbL_n^\epsilon}$ seen as the graph shift operator.

\section{Numerical Experiments}
\label{sec:simu}
%!TEX root = conference-stability.tex

%%%%%%%%%%%%%%%%%%%%%%%%%%%%%%%%%%%%%%%%%%%%%%%%
%%%%%%%%%%%%%%%%%% FIGURE %%%%%%%%%%%%%%%%%%%%%% 
%%%%%%%%%%%%%%%%%%%%%%%%%%%%%%%%%%%%%%%%%%%%%%%%
% \begin{figure*}[ht!]
% \centering
% \includegraphics[width=0.2\textwidth]{graph on manifold/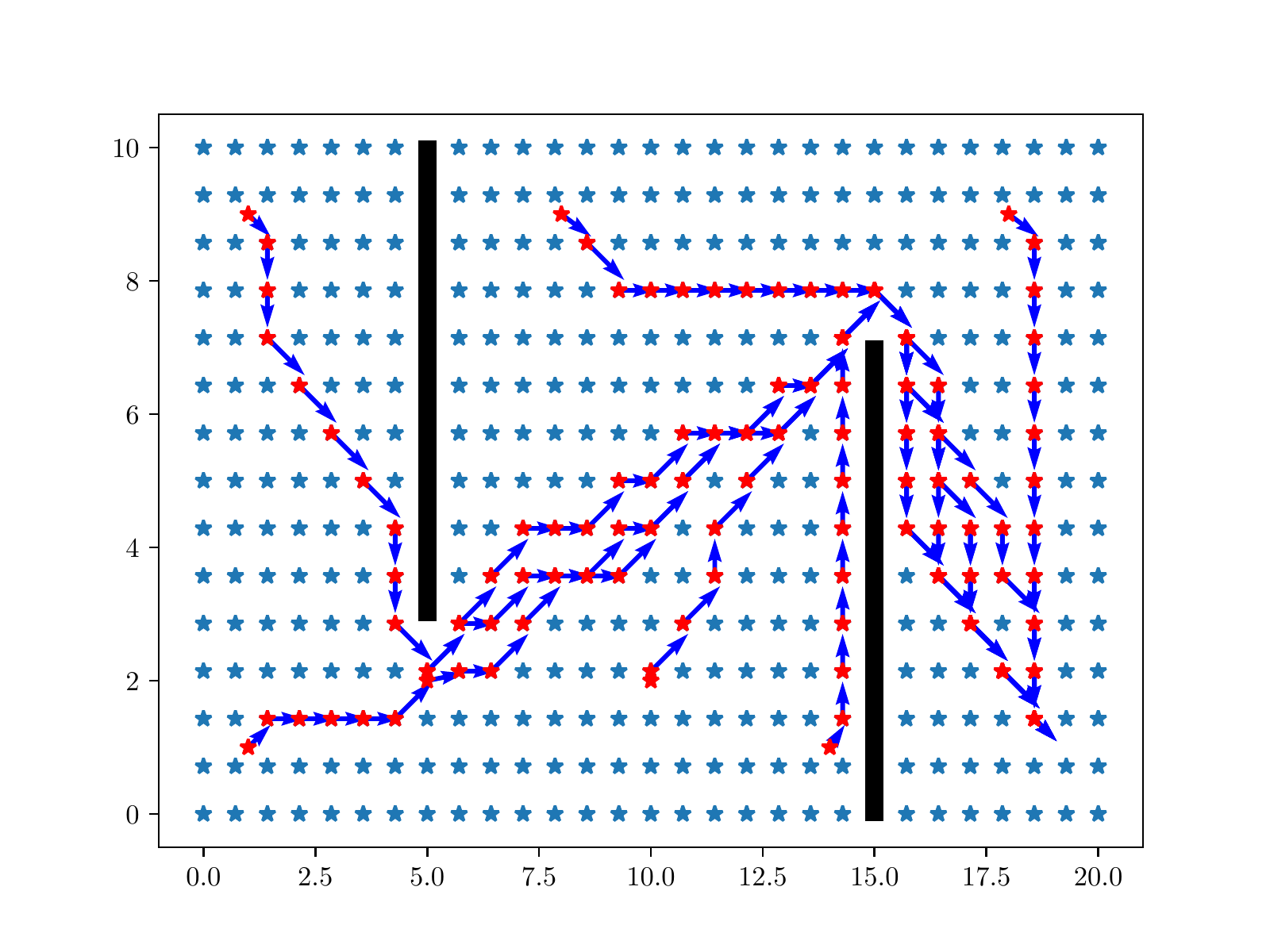}
% \includegraphics[width=0.2\textwidth]{graph on manifold/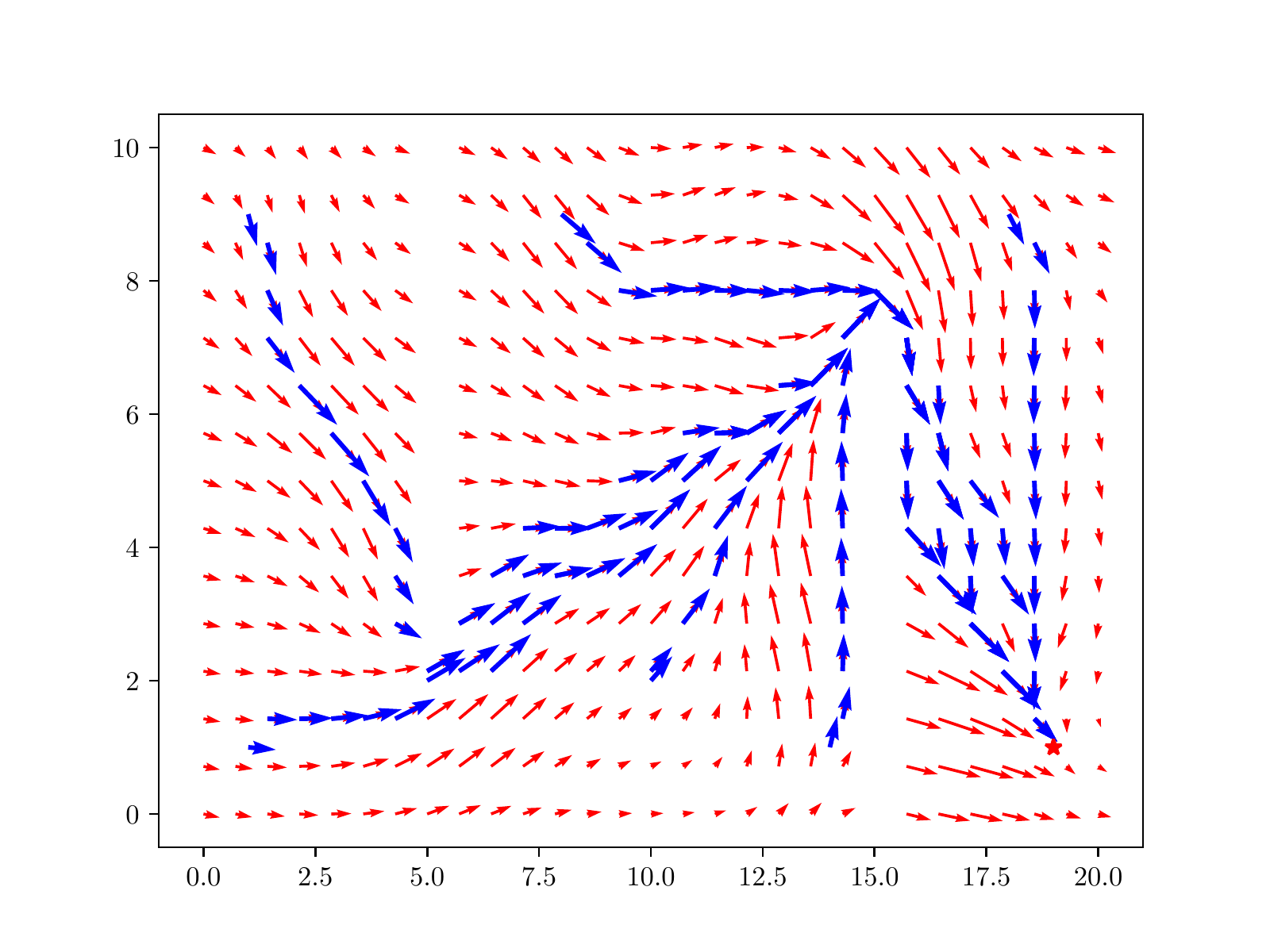}  
% \includegraphics[width=0.2\textwidth]{graph on manifold/dataset.pdf} 
% \includegraphics[width=0.2\textwidth]{graph on manifold/traj_generated0.15epoch2900.pdf}  

% \caption{Point cloud models with 300 sampling points in each model. Our goal is to identify chair models from other models such as toilet and table. }
% \label{fig:points}
% \end{figure*}
\subsection{Navigation control}
\begin{figure*}[t!]
     \centering
     \begin{subfigure}[b]{0.2\textwidth}
         \centering
         \includegraphics[width=1.2\textwidth]{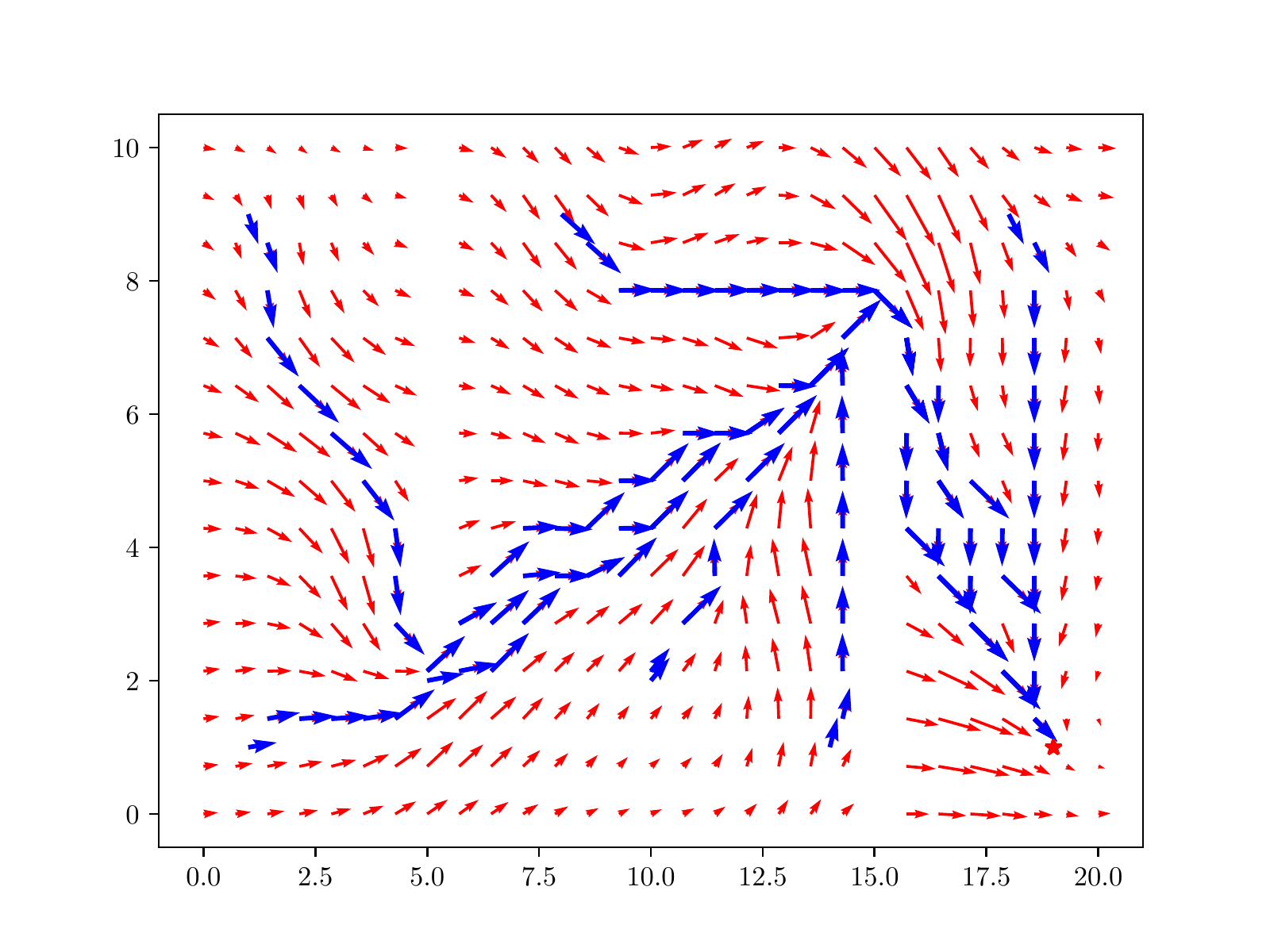}
         \caption{Testing on $n=435$}
         \label{fig:train400}
     \end{subfigure}
    \hspace{1.5em}
     \begin{subfigure}[b]{0.2\textwidth}
         \centering
         \includegraphics[width=1.2\textwidth]{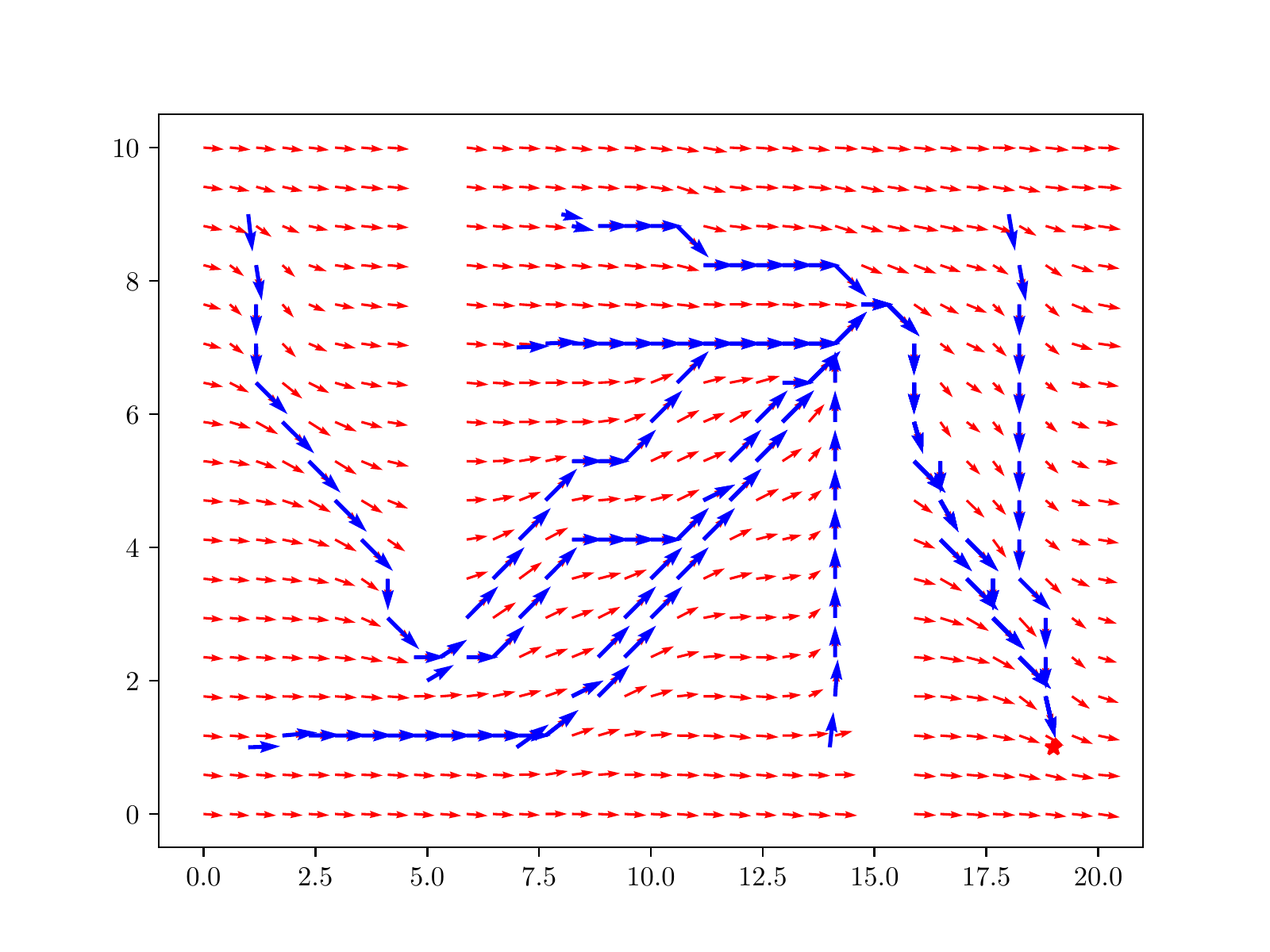}
         \caption{Testing on $n=630$}
         \label{fig:learned400}
     \end{subfigure}
     \hspace{1.5em}
   \begin{subfigure}[b]{0.2\textwidth}
         \centering
         \includegraphics[width=1.2\textwidth]{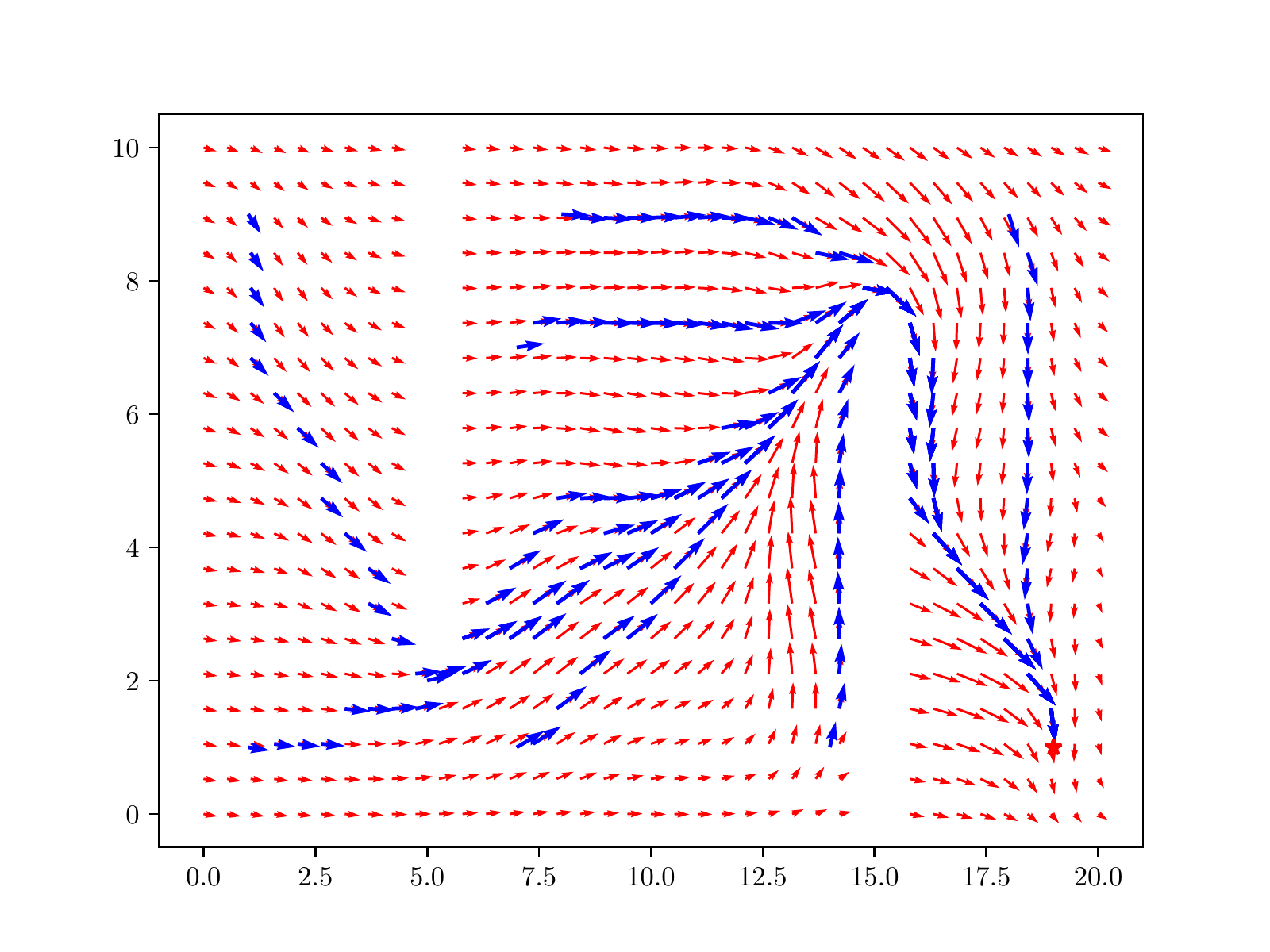}
         \caption{Testing on $n=780$}
         \label{fig:train1000}
     \end{subfigure}
    \hspace{1.5em}
     \begin{subfigure}[b]{0.2\textwidth}
         \centering
         \includegraphics[width=1.2\textwidth]{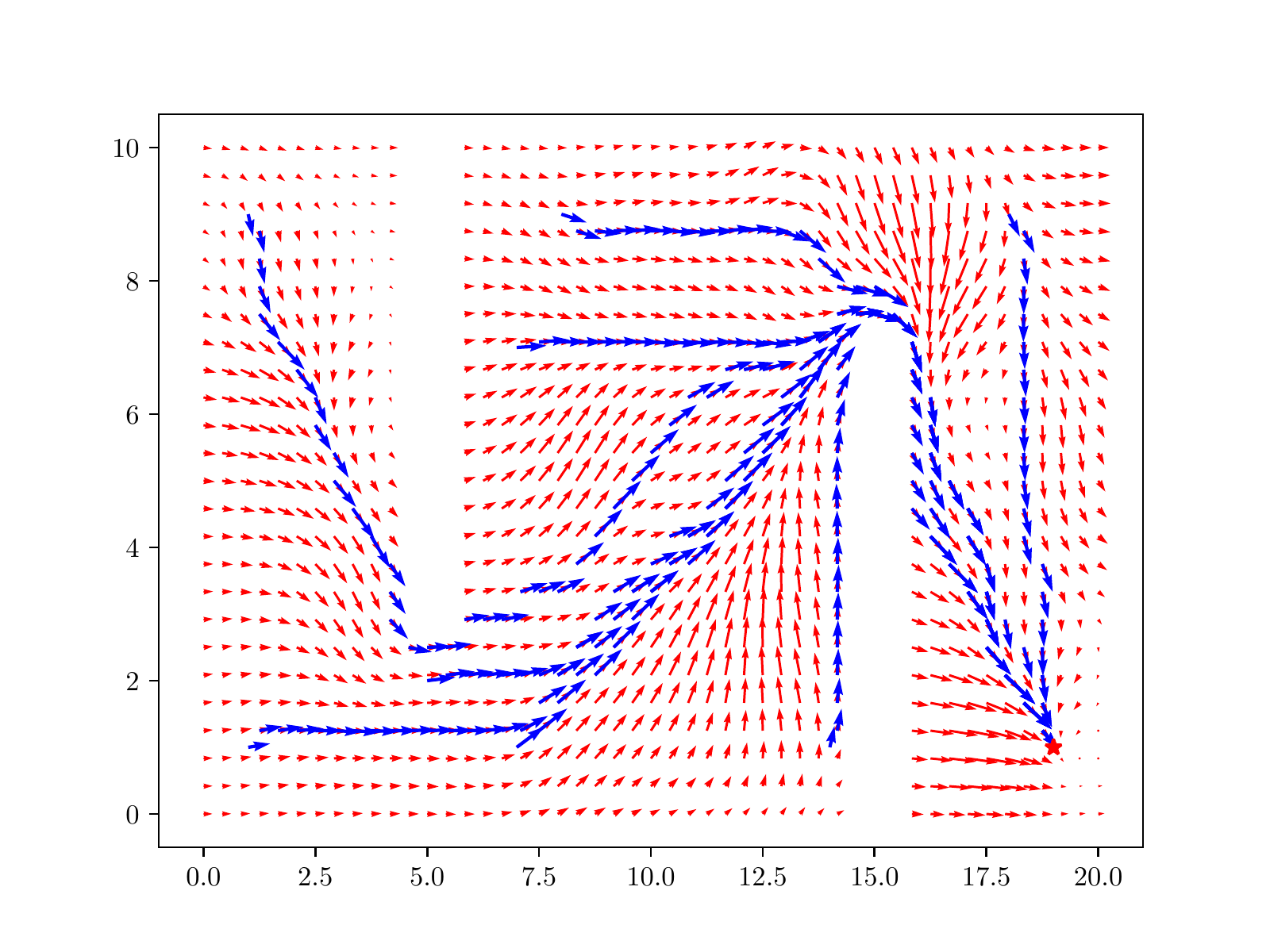}
         \caption{Testing on $n=1225$}
         \label{fig:learned1000}
     \end{subfigure}
   
        \caption{Lipschitz GNN trained on geometric graph with $n=435$ and tested on geometric graphs with $n=435, 630, 780, 1225$. The red arrows in the figures show the predicted directions for all sampled points over the space. }
        \label{fig:direction}
\end{figure*}
We consider a problem of automatic navigation of an agent \cite{cervino2022learning}. We intend to navigate an agent starting from a given point to find a path to the goal without colliding into the obstacles. We sample grid points uniformly over the free space while avoiding the obstacle areas to construct a geometric graph structure involving the topological information.  We first generate several trajectories with Dijkstra's shortest path algorithm that leads the agent from a randomly selected starting point to the goal. Every point along the generated trajectories is labeled with a 2-d direction vector of the velocity. The geometric adjacency matrix is calculated based on the weight function defined in \eqref{eqn:weight} with \eqref{eqn:gauss_kernel}. Specifically, the underlying topology information is captured by setting the Euclidean distance between two points as infinity (i.e. weight value as zero) if there is no direct path connecting these two points without colliding into the obstacles. The input graph signal is the position coordinates of some point over the space and the final output is the direction that can lead to the goal point. The learned architectures are tested via randomly generating $100$ starting points and computing the trajectory by predicting directions of the points along the trajectory. If the trajectory can reach the goal point without colliding into the obstacles, the trajectory is marked as ``success". The performances are measured by calculating the successful rates of all the learned architectures.

\myparagraph{Learning architectures and experiment settings}
We build dense geometric graphs as the approximations to the models as the regular grid graph structure brings little difference between the dense and sparse graph setting. We set the position of each point as input signals on the graphs. The weights of the edges are calculated based on the Euclidean distance between the nodes and the weight function is determined as \eqref{eqn:gauss_kernel} with $\epsilon = 0.2$. We calculate the Laplacian matrix for the graph as the input graph shift operator. We train and test three architectures, including 2-layer Graph Filters (GF), 2-layer Graph Neural Network (GNN) and 2-layer Lipschitz Graph Neural Network (Lipschitz GNN) which all contain $F_0 = 2$ input features, $F_1= 128$ and $F_2=64$ features with $K_t=10$ filters in each layer. ReLU is used as the nonlinearity function in GNN and Lipschitz GNN. In Lipschitz GNN architecture, we put the Lipschitz continuity restriction to the graph filter function by importing a penalty term which is the scaled derivative of the filter function $C_L h'(\lambda)$ to the loss function. A larger Lipschitz constant $C_L$ indicates a smoother filter function. All architectures also include a linear readout layer to map the 2-d direction vector outputs. The loss function is MSE loss and the optimizer is SGD with the learning rate set as $0.0002$. The architectures are trained over $30,000$ epochs.

\myparagraph{Convergence analysis}
The convergence is tested by calculating the difference of the outputs of the graph filters in the last layer for each architecture. By training the graph filters and GNNs on geometric graphs with $n= 190, 276, 435, 630, 780 $, and plotting the differences between the output of the trained graph filters or GNNs on geometric graphs with size $n$ and on the relatively large enough geometric graphs with size $n=1225$. From Figure \ref{fig:diff_1layer} we can see that the output differences  decrease and converge as the trained geometric graph size grows, which is accordant with Corollary \ref{cor:transferability}. This also verifies the convergence of geoemtric GNNs as presented in \ref{thm:converge-MNN} if we see the large geometric graph as an approximation of the underlying manifold. We can conclude that Lipschitz GNNs have better approximations while GNNs outperform Graph Filters.
\begin{figure}[h!]
         \centering
       \includegraphics[width=0.47\textwidth]{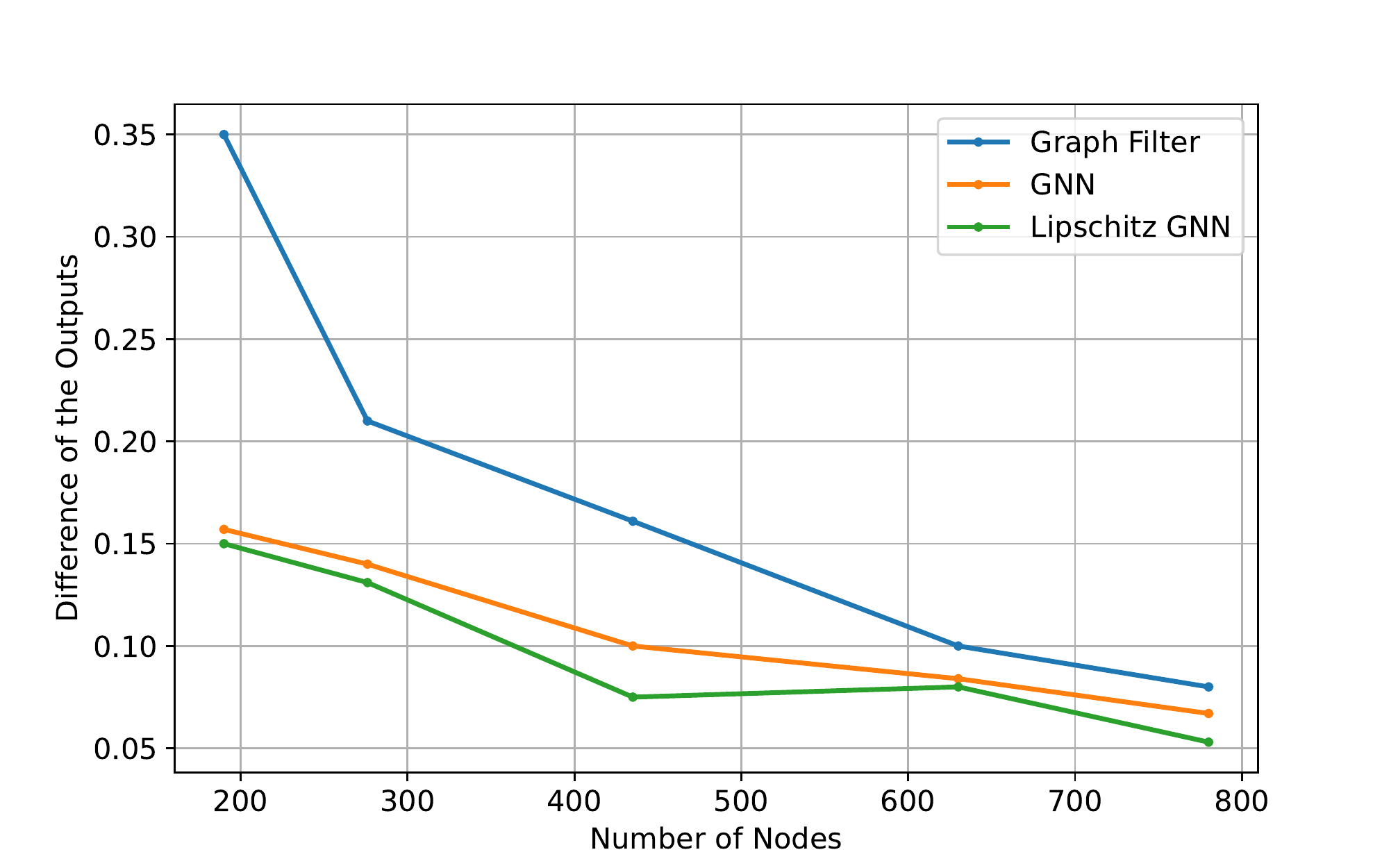}
         \caption{Differences of the outputs of 2-layer graph filters, 2-layer GNN and 2-layer Lipschitz GNN.}
         \label{fig:diff_1layer}
\end{figure}

We further study the effect of Lipschitz continuity of the graph filter functions by changing the penalty constant $C_L$. From Figure \ref{fig:diff_1layer-penalty} we can see as the constant grows, the output differences become smaller, which attests our claim that smoother filter functions provide better approximations. 
\begin{figure}[h!]
         \centering
       \includegraphics[width=0.47\textwidth]{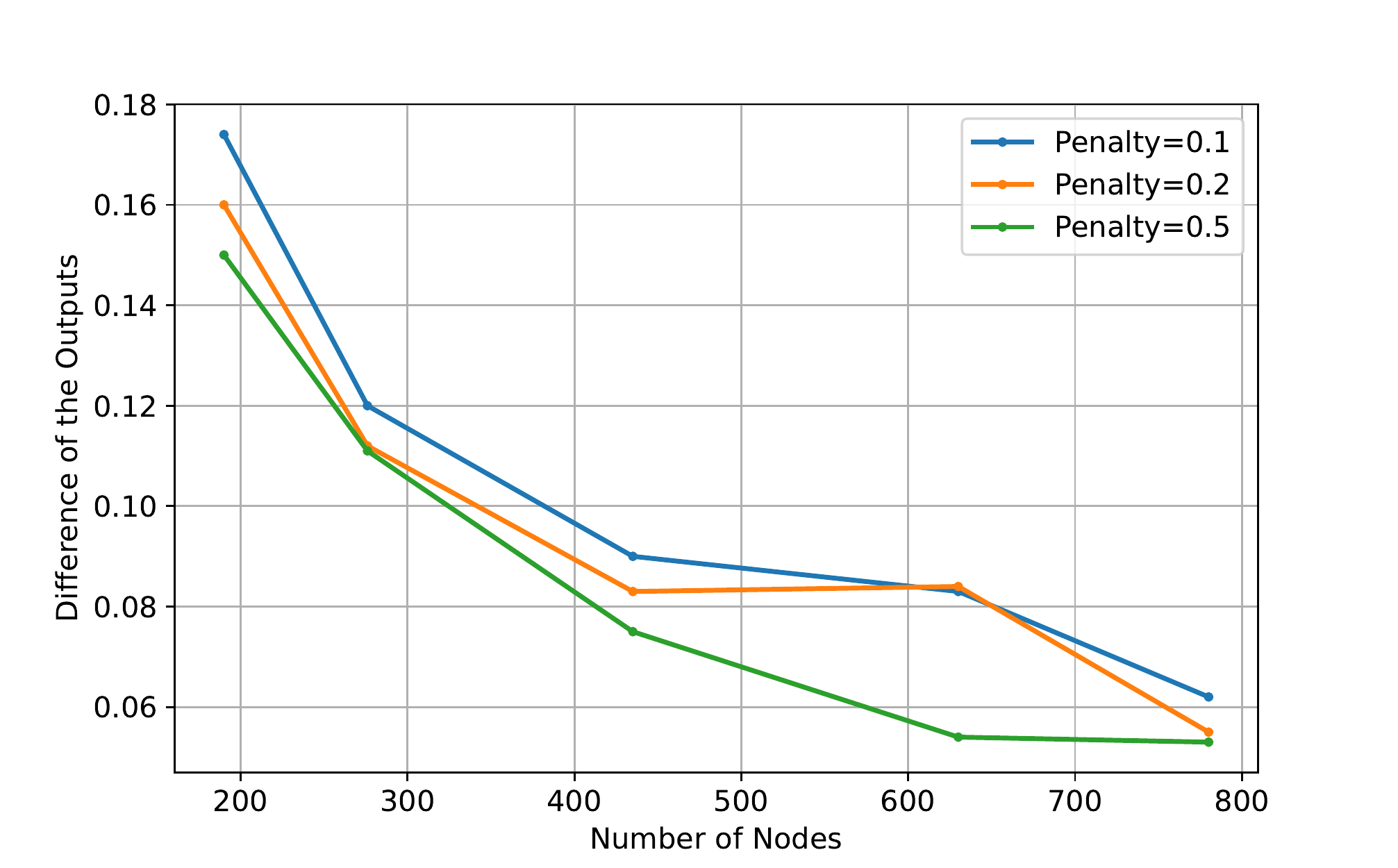}
         \caption{Differences of the outputs of Lipshitz GNN with penalty terms.}
         \label{fig:diff_1layer-penalty}
\end{figure}

\myparagraph{Transferability verification} 
We verify the transferability result presented in Corollary \ref{cor:transferability} by training the architectures on a sampled geometric graph with size $n=435, 630, 780, 1225$ and testing these architectures on the large geometric graphs with $n=1225$. Note that due to the non-scability of the linear readout layer, we keep the graph filter functions unchanged and only retrain the final linear layer to map the outputs to a 2-d direction vector. The successful rates are shown in Table \ref{tab:trans-navi}. We can observe that the architectures trained on smaller sampled geometric graphs can still perform well when implemented on larger graphs. Moreover, Lipschitz GNN performs slightly better than GNN which outperforms Graph Filters. This can be understood as the Lipschitz continuity of the filter functions and the nonlinearity functions help with the transferability as we have discussed in Section \ref{thm:converge-MNN}.
%%%%%%%%%%%%%%%%%%%%%%%%%%%%%%%%%%%%%%%%%%%%%%%%
%%%%%%%%%%%%%%%%%% TABLE %%%%%%%%%%%%%%%%%%%%%%% 
%%%%%%%%%%%%%%%%%%%%%%%%%%%%%%%%%%%%%%%%%%%%%%%%
\begin{table}[h!]
\centering
\begin{tabular}{l|c| c |c} \hline
     & Graph Filter  & GNN &  Lipschitz GNN \\ \hline
$n=435$		&0.74  & 0.74 & 0.76 \\ \hline
$n=630$	& 0.79  &  0.8 & 0.78 \\ \hline
$n=780$		&0.81  & 0.82 & 0.83  \\ \hline
$n=1225$	& 0.82  &  0.83 & 0.84 \\ \hline
\end{tabular}
\caption{Successful rates when testing the architectures trained on geometric graphs with $n=435, 630, 780, 1225$ on the geometric graph with $n=1225$. }
\label{tab:trans-navi}
\vspace{-4mm}
\end{table} 

To make the results more explicit, we show the predicted directions for each point over the space tested with a Lipschitz GNN trained on graph with $n=435$ in Figure \ref{fig:direction}. The red arrows point to the learned directions starting from each unlabeled points while the blue arrows are the learned directions for labeled points.  We can see that Lipschitz GNN can efficiently learn the  successful directions for unlabeled points based on these geometric graphs and can transfer to larger graphs.

% \subsection{Ground robot mismatch prediction}
% We further verify with an experiment based on real-world dataset

% %%%%%%%%%%%%%%%%%%%%%%%%%%%%%%%%%%%%%%%%%%%%%%%%
% %%%%%%%%%%%%%%%%%% TABLE %%%%%%%%%%%%%%%%%%%%%%% 
% %%%%%%%%%%%%%%%%%%%%%%%%%%%%%%%%%%%%%%%%%%%%%%%%
% \begin{table}[h!]
% \centering
% \begin{tabular}{l|c| c} \hline
%     & Pavement  & Grass\\ \hline
% 1-layer graph filter		&  &    \\ \hline
% 2-layer graph filter	& 0.006  &  0.0134 \\ \hline
% 1-layer GNN		&  &  \\ \hline
% 2-layer GNN	&0.005  & 0.011\\ \hline
% \end{tabular}
% \caption{Error rates of the mismatch prediction. }
% \label{tb:results}
% \vspace{-4mm}
% \end{table} 

\subsection{Point Cloud Model Classification}
We further evaluate the convergence results on the ModelNet10 \cite{wu20153d} classification problem. The dataset includes 3,991 meshed CAD models from 10 categories for training and 908 models for testing. In each model, $n$ points are uniformly randomly selected to construct geometric graphs to approximate the underlying model, such as chairs, tables. Figure \ref{fig:pointcloud} shows the point cloud model with different sampling points. Our goal is to identify the models for chairs from other models.
\begin{figure}[h!]
\centering
\includegraphics[width=0.12\textwidth]{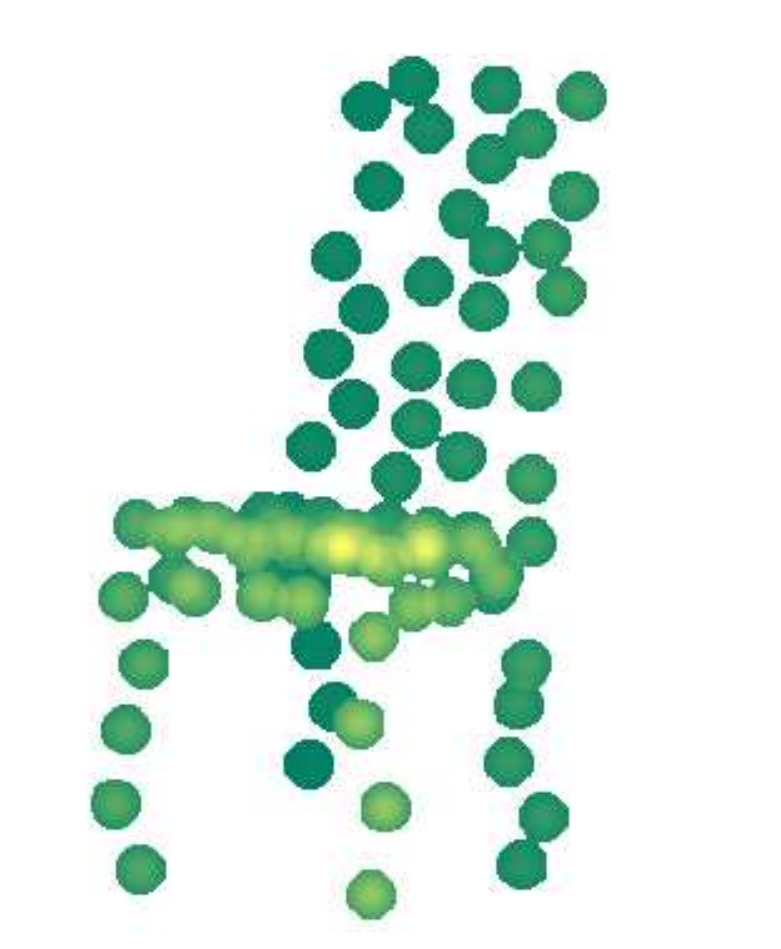}\hspace{-0.3cm}
\includegraphics[width=0.12\textwidth]{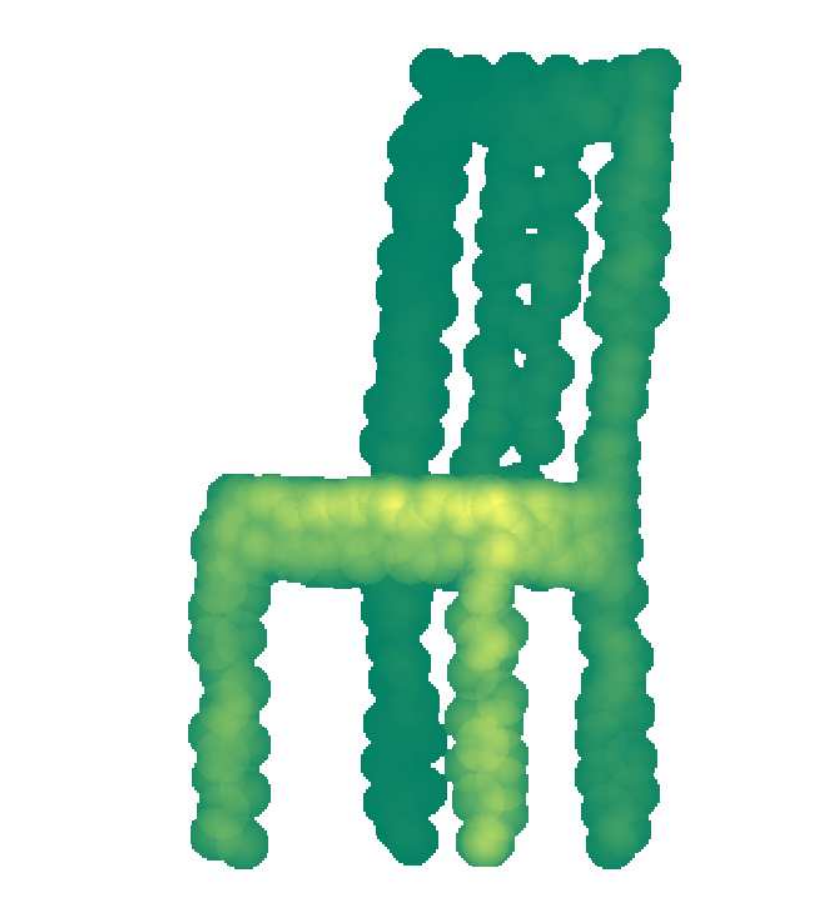} \hspace{0.2cm}
\includegraphics[width=0.127\textwidth]{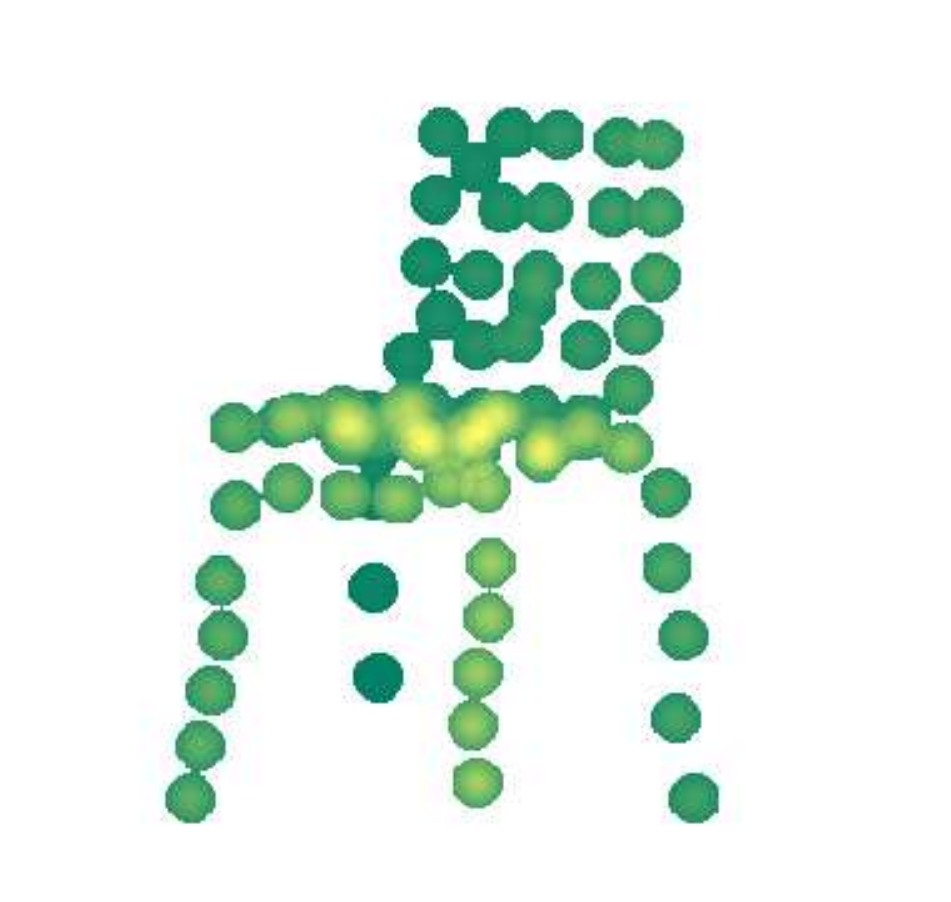}\hspace{-0.3cm}
\includegraphics[width=0.11\textwidth]{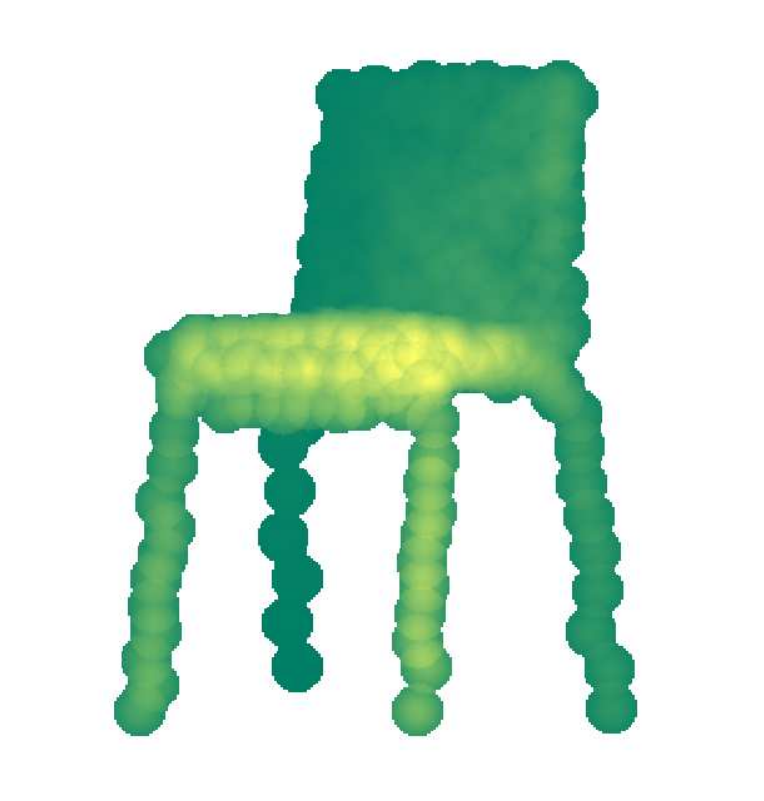}
% \includegraphics[trim=130 0 40 0,clip,width=0.1\textwidth]{chair_sparse.eps} \hspace{-0.5cm}
% \includegraphics[trim=70 0 50 0,width=0.12\textwidth]{chair_large.eps} 
% \hspace{-0.05cm}
% \includegraphics[trim=60 0 100 0,width=0.12\textwidth]{chair_2_sparse.eps}  
% \hspace{-0.05cm}
% \includegraphics[trim=60 0 100 0,width=0.12\textwidth]{chair_large2.eps} 
\caption{Point cloud models with 50 and 300 sampling points in each model. Our goal is to identify chair models from other models such as toilet and table. }
\label{fig:pointcloud}
\end{figure}

\myparagraph{Learning architectures and experiment settings}
We build both dense and sparse geometric graphs as the approximations to the models. We set the coordinates of each point as input signals on the graphs. The weights of the edges are calculated based on the Euclidean distance between the nodes. For the dense geometric graphs, the weight function is determined as \eqref{eqn:gauss_kernel} with $\epsilon = 0.1$. Similarly, the weight function of a sparse geometric graph is calculated as \eqref{eqn:compact_kernel} with $\epsilon = 0.001$ as the threshold. We calculate the Laplacian matrix for each graph as the input graph shift operator. In this experiment, we implement and compare three different architectures, including 2-layer Graph Filters (GF), 2-layer Graph Neural Network (GNN) and 2-layer Lipschitz Graph Neural Network (Lipschitz GNN). The architectures contain $F_0 = 3$ input features which are the 3-d coordinates of each point, $F_1= 64$ and $F_2=32$ features with $K_t=5$ filters in each layer. In GNN and Lipschitz GNN, ReLU is used as the nonlinearity function. The filters in Lipschitz GNN is regularized as Lipschitz continuous by imposing a penalty term $C_L h'(\lambda)$ to the loss function with $C_L$ set as 0.3. All architectures include a linear readout layer to map the final classification outputs. 

All the architectures are trained by minimizing the cross-entropy loss. We implement an ADAM optimizer with the learning rate set as 0.005 along with the forgetting factors 0.9 and 0.999. We carry out the training for 40 epochs with the size of batches set as 10. We run 5 random dataset partitions and show the average estimation error rates and the standard deviation across these partitions.

\myparagraph{Convergence Verification}
We verify our convergence results by training the graph filters and GNNs on geometric graphs with $n= 300, 400, 500, 600, 700,800,900$ sampling points, and plotting the differences between the output of the trained graph filters or GNNs on geometric graphs with size $n$ and on the relatively large enough geometric graphs with size $1000$. Figure \ref{fig:diff_1layer} shows the convergence results for dense graphs and figure \ref{fig:diff_1layer-sparse} shows the results for sparse graphs.
\begin{figure}[h!]
         \centering
       \includegraphics[width=0.47\textwidth]{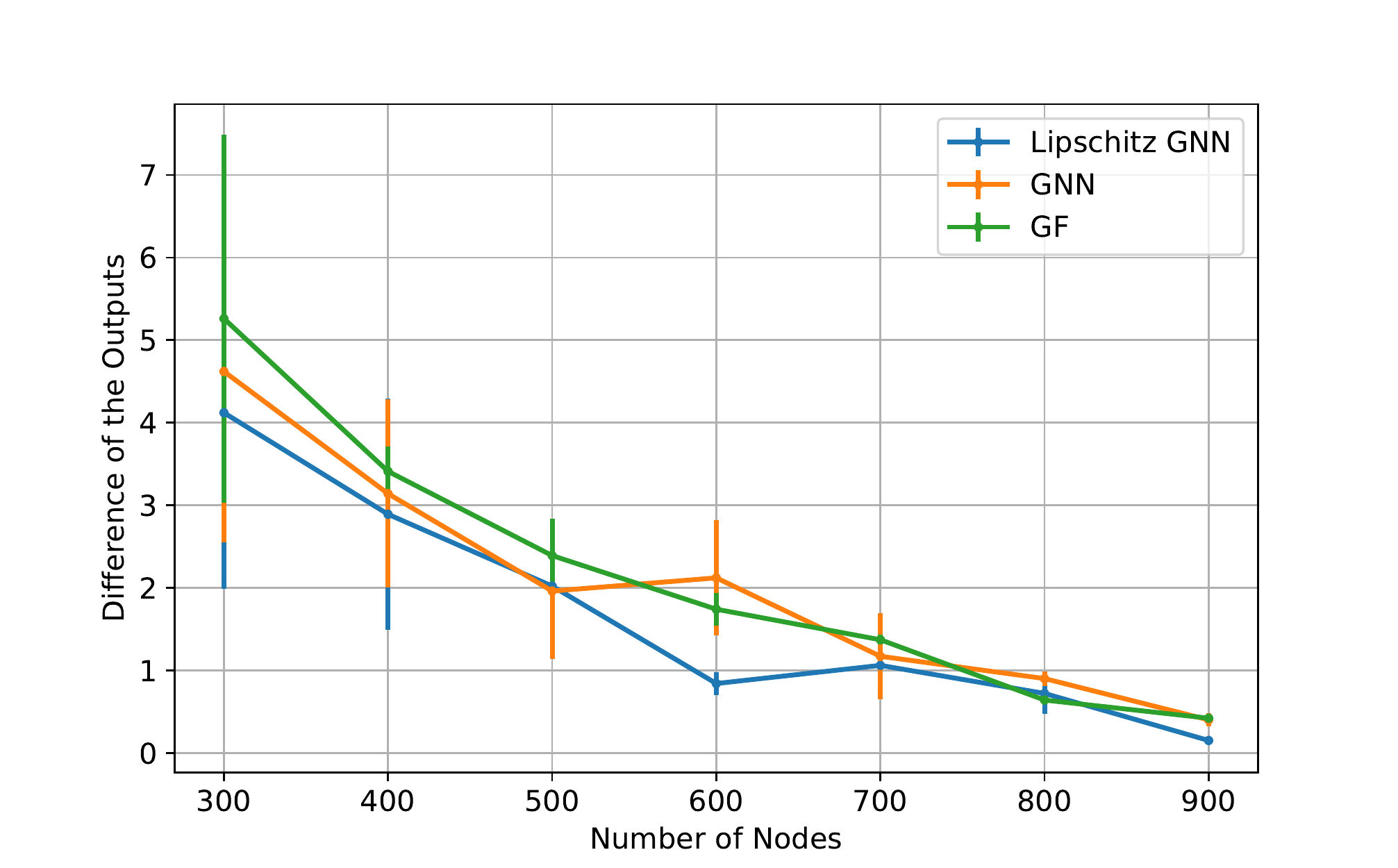}
         \caption{Differences of the outputs of Lipschitz GNN, GNN and graph filter  trained on dense geometric graphs.}
         \label{fig:diff_1layer}
\end{figure}
\begin{figure}[h!]
         \centering
       \includegraphics[width=0.47\textwidth]{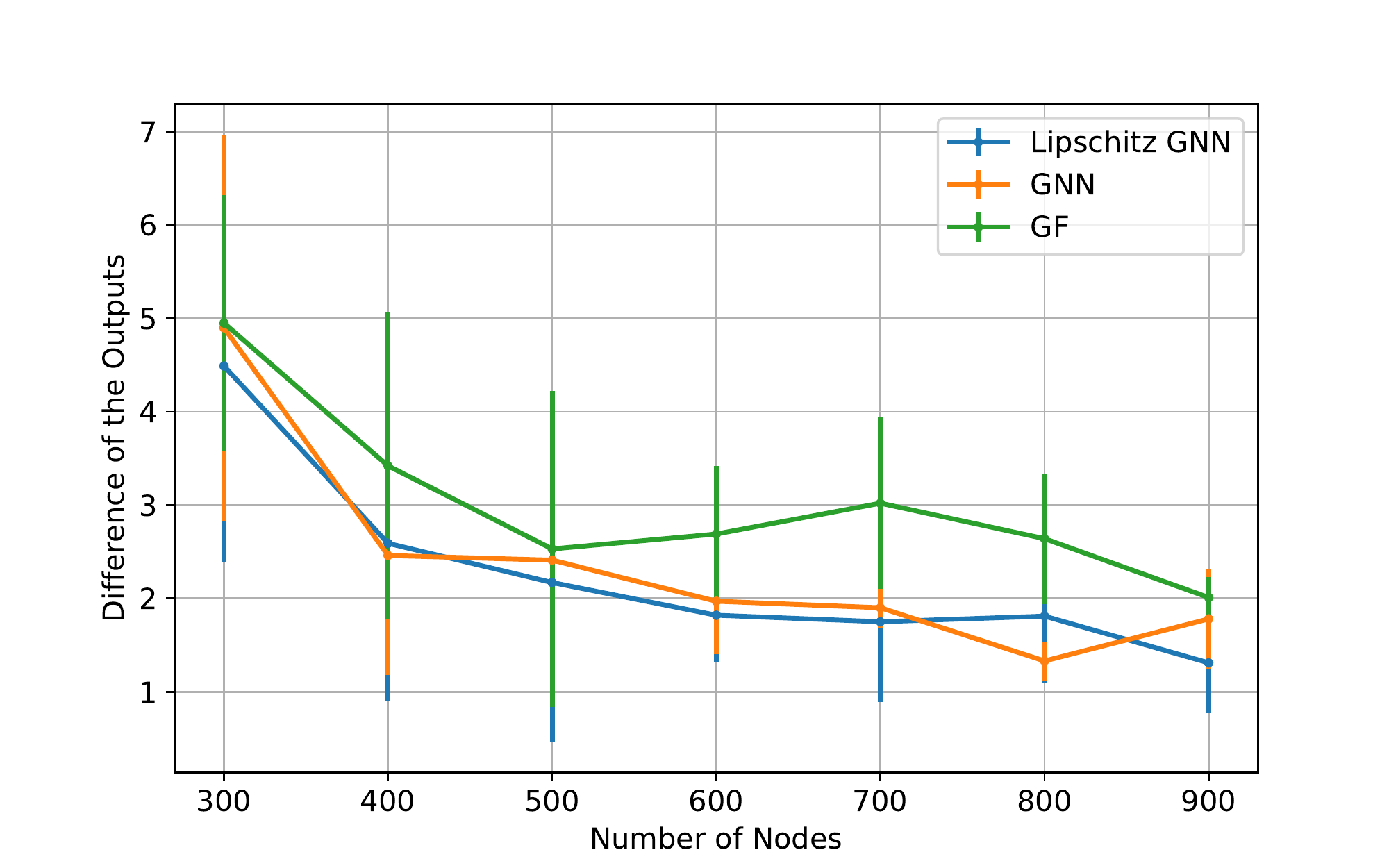}
         \caption{Differences of the outputs of Lipschitz GNN, GNN and graph filter  trained on sparse geometric graphs.}
         \label{fig:diff_1layer-sparse}
\end{figure}
From the figures we can see that the differences between the outputs on the trained smaller geometric graphs and the relatively large geometric graphs decrease and converge as the trained graph size grows, which verifies Corollary \ref{cor:transferability}. This also proves what we state in Theorem \ref{thm:converge-MNN} that the approximation errors of geometric GNNs to the MNNs decrease as the number of nodes grows if we see the large enough graph with $n=1000$ nodes as the manifold. We can also observe that Lipschitz GNNs have better approximations than GNNs because of the continuous filter function while GNNs outperform graph filters due to the nonlinearity function.

\myparagraph{Transferability verification}
 We train the architectures on both dense and sparse geometric graphs  with $n=300, 500, 700, 900$ sampled points from each CAD model. To justify the theoretical results that we have stated in Section \ref{sec:converge_gnn}, we test these trained architectures on a relatively large graph containing $n=1000$ sampled points to verify that geometric GNNs have the transferability to large graphs, i.e. the trained geometric graph filters and GNNs can be directly implemented on larger geometric graphs as long as the geometric graphs are constructed in the same manner. The classification error rates are shown in Table \ref{tab:pointcloud} for the dense graph setting and in Table \ref{tab:pointcloud-sparse} for the sparse graph setting.
 %%%%%%%%%%%%%%%%%%%%%%%%%%%%%%%%%%%%%%%%%%%%%%%%
%%%%%%%%%%%%%%%%%% TABLE %%%%%%%%%%%%%%%%%%%%%%% 
%%%%%%%%%%%%%%%%%%%%%%%%%%%%%%%%%%%%%%%%%%%%%%%%
\begin{table}[h!]
\centering
\begin{tabular}{l|c|c|c} \hline
 	& Graph Filters & GNN & Lipschitz GNN   \\ \hline
$n=300$	&  $21.15\% \pm 3.48\%$  & $9.35\% \pm 2.46\%$ &  $7.63\% \pm3.36\%$ \\ \hline
$n=500$		&  $18.09\% \pm 6.28\%$ &   $7.80\% \pm 3.50\%$ &  $7.54\% \pm 4.01\%$ \\ \hline
$n=700$	&  $17.31\% \pm 6.59\%$ &  $8.16\% \pm 2.95\%$ &  $7.97\% \pm 2.45\%$ \\ \hline
$n=900$ & $15.58\% \pm 4.54\%$ &  $7.20\% \pm 3.77\%$ &  $6.68\% \pm 3.94\%$ \\ \hline
\end{tabular}
\caption{Classification error rates for model 'chair' when testing the architectures trained on dense geometric graphs with $n=300, 500, 700, 900$ to dense geometric graphs with $n=1000$. Average over 5 data realizations.}
\label{tab:pointcloud}
\vspace{-3mm}
\end{table} 

 %%%%%%%%%%%%%%%%%%%%%%%%%%%%%%%%%%%%%%%%%%%%%%%%
%%%%%%%%%%%%%%%%%% TABLE %%%%%%%%%%%%%%%%%%%%%%% 
%%%%%%%%%%%%%%%%%%%%%%%%%%%%%%%%%%%%%%%%%%%%%%%%
\begin{table}[h!]
\centering
\begin{tabular}{l|c|c|c} \hline
 	& Graph Filters & GNN & Lipschitz GNN   \\ \hline
$n=300$	&  $19.83\% \pm 5.94\%$  & $7.74\% \pm 4.05\%$ &  $ 7.68\% \pm 3.75\%$ \\ \hline
$n=500$		&  $21.97\% \pm 4.17\%$ &   $10.10\% \pm 1.40\%$ &  $8.60\% \pm 2.95\%$ \\ \hline
$n=700$	&  $13.85\% \pm 3.81\%$ &  $7.45\% \pm 4.03\%$ &  $8.02\% \pm 2.77\%$ \\ \hline
$n=900$ & $16.62\% \pm 2.38\%$ &  $7.92\% \pm 3.14\%$ &  $7.44\% \pm 3.30\%$ \\ \hline
\end{tabular}
\caption{Classification error rates for model 'chair' when testing the architectures trained on sparse geometric graphs with $n=300, 500, 700, 900$ to sparse geometric graphs with $n=1000$. Average over 5 data realizations.}
\label{tab:pointcloud-sparse}
\vspace{-3mm}
\end{table} 
We can see from the results shown in the tables that the trained architectures can still perform well on the relatively large geometric graphs, both for dense and sparse graph settings. We can see that Lipschitz GNNs outperform GNNs, which perform better than graph filters. This attests the effects of the filter function continuity and nonlinearity that we have discussed in Section \ref{sec:converge_gnn}, which is continuous filter function and nonlinearity function both help with improving transferability. We can also observe that architectures trained on geometric graphs with more number of nodes can achieve better performances on the relatively large graphs. This can be understood when we see this large enough geometric graph with $n=1000$ approximately as the underlying manifold and as Theorem \ref{thm:converge-MNN} shows the geometric GNNs can give better approximations to the MNNs as the number of nodes grow. This can also be understood as the transferability property presented in Corollary \ref{cor:transferability}, which also indicates that graph filters and GNNs trained on larger geometric graphs have better performance approximations. Furthermore, the results also show that architectures trained on dense geometric graphs transfer better than the ones trained on sparse geometric graphs, which is also accordant with what we have claimed in Section \ref{sec:converge_gnn}.

\section{Conclusions}
\label{sec:conclusion}
%!TEX root = stability_manifold_TSP.tex
In this paper, we import the definition of manifold convolutional filters with an exponential Laplace-Beltrami operator to process manifold signals. The manifold model is accessible with a set of i.i.d. uniformly sampled points over the manifold. We construct both dense and sparse graph models to approximate the underlying manifold. We first prove the approximation error bounds of discrete graph Laplacians to the LB operator in the spectral domain. We transfer the definition of manifold filter to the constructed graph and prove the graph filter can approximate the manifold filter with a non-asymptotic error bound.  The constructed graph filters need to trade-off between the discrimative and approximative powers while GNNs composed with graph filters and nonlinearities can alleviate the trade-off with the frequency mixing by nonlinearities. We conclude that the GNNs are thus both good approximations of MNNs and discriminative.
We finally verified our results numerically with a navigation control problem over a manifold.

%%%%%%%%%%%%%%%%%%%%%%%%%%%%%%%%%%%%%%%%%%%%%%%%%%%%%%%%%%%%

\urlstyle{same}
\bibliographystyle{IEEEtran}
\bibliography{references}

\appendix
 {\section{Appendix}

\subsection{Proof of Theorem \ref{thm:converge-MF-dense} and Theorem \ref{thm:converge-MF-sparse}}
\label{app:nn}
 
We first write out the filter representation as
 \begin{align}
    &\nonumber \|\bbh(\bbL_n^\epsilon)\bbP_n f - \bbP_n\bbh(\ccalL) f\|\\
    &\leq \left\| \sum_{i=1}^\infty \hat{h}(\lambda_{i,n}^\epsilon) \langle \bbP_nf,\bm\phi_{i,n}^\epsilon \rangle_{\bbG_n}\bm\phi_{i,n}^\epsilon - \sum_{i=1}^\infty \hat{h}(\lambda_i)\langle f,\bm\phi_i\rangle_{\ccalM} \bbP_n \bm\phi_i  \right\|
    %  \\ 
    %  &\nonumber \leq  \left\| \sum_{i=1}^M \hat{h}(\lambda_{i,n}^\epsilon) \langle \bbP_nf,\bm\phi_{i,n}^\epsilon \rangle_{\bbG_n}\bm\phi_{i,n}^\epsilon - \sum_{i=1}^M \hat{h}(\lambda_i) \langle \bbP_nf,\bm\phi_{i,n}^\epsilon \rangle_{\bbG_n}\bm\phi_{i,n}^\epsilon\right\| \\
    %  & \qquad \qquad \qquad \qquad\qquad \qquad+\left\| \sum_{i=1}^M \hat{h}(\lambda_i) \langle \bbP_n f,\bm\phi_{i,n}^\epsilon \rangle_{\bbG_n} \bm\phi_{i,n}^\epsilon - \sum_{i=1}^M \hat{h}(\lambda_i) \langle f,\bm\phi_i \rangle_{\ccalM} \bbP_n \bm\phi_i \right\|.\label{eqn:conv-1}
 \end{align}
 
 We  denote the index of partitions that contain a single eigenvalue
as a set $\ccalK_s$ ($|\ccalK_s|= N_s$) and the rest as a set $\ccalK_m$ ($|\ccalK_m|= N_m$). We decompose the $\alpha$-FDT filter function as $\hat{h}(\lambda)=h^{(0)}(\lambda)+\sum_{l\in\ccalK_m}h^{(l)}(\lambda)$ as
 \begin{align}
\label{eqn:h0}& h^{(0)}(\lambda) = \left\{ 
\begin{array}{cc} 
              \hat{h}(\lambda)-\sum\limits_{l\in\ccalK_m}\hat{h}(C_l)  &  \lambda\in[\Lambda_k(\alpha)]_{k\in\ccalK_s} \\
                0& \text{otherwise}  \\
                \end{array} \right. \\
\label{eqn:hl}& h^{(l)}(\lambda) = \left\{ 
\begin{array}{cc} 
                \hat{h}(C_l) &  \lambda\in[\Lambda_k(\alpha)]_{k\in\ccalK_s} \\
                \hat{h}(\lambda) & 
                \lambda\in\Lambda_l(\alpha)\\
                0 &
                \text{otherwise}  \\
                \end{array} \right.             
\end{align}
 with $C_l$ some constant in $\Lambda_l(\alpha)$.
 With the triangle inequality, we start by analyzing the output difference of $h^{(0)}(\lambda)$ as
 \begin{align}
    & \nonumber \| \sum_{i=1}^\infty {h}^{(0)}(\lambda_{i,n}^\epsilon) \langle \bbP_nf,\bm\phi_{i,n}^\epsilon \rangle_{\bbG_n}\bm\phi_{i,n}^\epsilon - \sum_{i=1}^\infty {h}^{(0)}(\lambda_i)\langle f,\bm\phi_i\rangle_{\ccalM} \bbP_n \bm\phi_i  \|
     \\ 
     &\nonumber \leq  \left\| \sum_{i=1}^\infty \left({h}^{(0)}(\lambda_{i,n}^\epsilon)- {h}^{(0)}(\lambda_i) \right) \langle \bbP_nf,\bm\phi_{i,n}^\epsilon \rangle_{\bbG_n}\bm\phi_{i,n}^\epsilon \right\| \\
     &  +\left\| \sum_{i=1}^\infty {h}^{(0)}(\lambda_i)\left( \langle \bbP_n f,\bm\phi_{i,n}^\epsilon \rangle_{\bbG_n} \bm\phi_{i,n}^\epsilon - \langle f,\bm\phi_i \rangle_{\ccalM} \bbP_n \bm\phi_i \right)  \right\|.\label{eqn:conv-1}
 \end{align}
 
 The first term in \eqref{eqn:conv-1} can be bounded by leveraging the $A_h$-Lipschitz continuity of the frequency response. From the eigenvalue difference in Proposition \ref{thm:converge-spectrum-sparse}, we can claim that for each eigenvalue $\lambda_i \leq \lambda_K$, we have
\begin{gather}
 \label{eqn:eigenvalue}  |\lambda_{i,n}^\epsilon-\lambda_i|\leq \Omega_{1,K}\sqrt{\epsilon}.
 \end{gather}
The square of the first term is bounded as 
\begin{align}
   &\nonumber \left\| \sum_{i=1}^\infty ({h}^{(0)}(\lambda_{i,n}^\epsilon) - {h}^{(0)}(\lambda_i)) \langle \bbP_n f,\bm\phi_{i,n}^\epsilon \rangle_{\bbG_n} \bm\phi_{i,n}^\epsilon  \right\|^2 \\
   & \leq \sum_{i=1}^\infty |{h}^{(0)}(\lambda_{i,n}^\epsilon)-{h}^{(0)}(\lambda_i)|^2 |\langle \bbP_n f,\bm\phi_{i,n}^\epsilon \rangle_{\bbG_n}|^2 \\
   &\leq \sum_{i=1}^\infty A_h^2 |\lambda_{i,n}^\epsilon-\lambda_i| ^2\|\bbP_n f\|^2\leq A_h^2 \Omega_{1,K}^2  \epsilon.
\end{align}

The second term in \eqref{eqn:conv-1} can be bounded combined with the convergence of eigenfunctions in \eqref{eqn:eigenfunction} as
\begin{align}
  & \nonumber \Bigg\| \sum_{i=1}^\infty {h}^{(0)}(\lambda_i)\left( \langle \bbP_nf,\bm\phi_{i,n}^\epsilon \rangle_{\bbG_n}\bm\phi_{i,n}^\epsilon - \langle f,\bm\phi_i \rangle_{\ccalM} \bbP_n \bm\phi_i\right)  \Bigg\|\\
   & \leq \nonumber \Bigg\|  \sum_{i=1}^\infty {h}^{(0)}(\lambda_i)  \left(\langle \bbP_n f,\bm\phi_{i,n}^\epsilon\rangle_{\bbG_n}\bm\phi_{i,n}^\epsilon  - \langle \bbP_nf,\bm\phi_{i,n}^\epsilon \rangle_{\bbG_n} \bbP_n\bm\phi_i\right)\Bigg\|\\
   &\label{eqn:term1}+ \left\| \sum_{i=1}^\infty  {h}^{(0)}(\lambda_i) \left(\langle \bbP_n f,\bm\phi_{i,n}^\epsilon\rangle_{\bbG_n} \bbP_n\bm\phi_i -\langle f,\bm\phi_i\rangle_\ccalM \bbP_n\bm\phi_i \right) \right\|
%   &\leq \nonumber \sum_{i=1}^n \langle \bbP_nf,\bm\phi_{i,n}^\epsilon \rangle \| \bm\phi_{i,n}^\epsilon-\bbP_n\bm\phi_i \|\\
%   &\qquad \qquad + \sum_{i=1}^n|\langle \bbP_n f, \bm\phi_{i,n}^\epsilon\rangle -\langle f,\bm\phi_i\rangle|\left\| \bbP_n\bm\phi_i \right\|
\end{align}
From the convergence stated in Theorem \ref{thm:converge-spectrum-sparse}, we have
\begin{gather}
 \label{eqn:eigenfunction}    \|a_i \bm\phi_{i,n}^\epsilon-\bm\phi_i\|\leq \Omega_{2,K} \sqrt{\epsilon}/ \theta,
 \end{gather}
 with the eigengap $\theta \geq \alpha$ under the $\alpha$-FDT filter. Therefore, the first term in \eqref{eqn:term1} can be bounded as
\begin{align}
& \nonumber \left\|  \sum_{i=1}^\infty {h}^{(0)}(\lambda_i) \left(\langle \bbP_n f,\bm\phi_{i,n}^\epsilon\rangle_{\bbG_n}\bm\phi_{i,n}^\epsilon  - \langle \bbP_nf,\bm\phi_{i,n}^\epsilon \rangle_{\ccalM} \bbP_n\bm\phi_i\right)\right\|\\
& \qquad \qquad\leq \sum_{i=1}^{N_s} \|\bbP_n f\|\|\bm\phi_{i,n}^\epsilon - \bbP_n\bm\phi_i\|\leq \frac{N_s \Omega_{2,K}}{\alpha}\sqrt{\epsilon}.
\end{align}
The last equation comes from the definition of norm in $L^2(\bbG_n)$.
The second term in \eqref{eqn:term1} can be written as
\begin{align}
     & \nonumber \Bigg\| \sum_{i=1}^\infty {h}^{(0)}(\lambda_{i,n}^\epsilon) (\langle \bbP_n f,\bm\phi_{i,n}^\epsilon\rangle_{\bbG_n}  \bbP_n\bm\phi_i -\langle f,\bm\phi_i\rangle_\ccalM \bbP_n\bm\phi_i ) \Bigg\| \\
   &\leq \sum_{i=1}^\infty |{h}^{(0)}(\lambda_{i,n}^\epsilon)| \left|\langle \bbP_n f,\bm\phi_{i,n}^\epsilon\rangle_{\bbG_n}  -\langle f,\bm\phi_i\rangle_\ccalM\right|\|\bbP_n\bm\phi_i\|.
\end{align}
Because $\{x_1, x_2,\cdots,x_n\}$ is a set of uniform sampled points from $\ccalM$, based on Theorem 19 in \cite{von2008consistency} we can claim that
\begin{equation}
   \left|\langle \bbP_n f,\bm\phi_{i,n}^\epsilon\rangle_{\bbG_n}  -\langle f,\bm\phi_i\rangle_\ccalM\right| = O\left(\sqrt{\frac{\log n}{n}}\right).
\end{equation}
Taking into consider the boundedness of frequency response $|{h}^{(0)}(\lambda)|\leq 1$ and the bounded energy $\|\bbP_n\bm\phi_i\|$. Therefore, we have 
\begin{align}
&\nonumber  \left\| \sum_{i=1}^\infty \hat{h}(\lambda_{i,n}^\epsilon) \left(\langle \bbP_n f,\bm\phi_{i,n}^\epsilon\rangle_{\bbG_n}  -\langle f,\bm\phi_i\rangle_\ccalM \right)\bbP_n\bm\phi_i  \right\|= O\left(\sqrt{\frac{\log n}{n}}\right).
\end{align}

Combining the above results, we can bound the output difference of $h^{(0)}$. Then we need to analyze the output difference of $h^{(l)}(\lambda)$ and bound this as
\begin{align}
    \nonumber &\left\| \bbP_n \bbh^{(l)}(\ccalL)f -\bbh^{(l)}(\bbL_n^\epsilon)\bbP_n f \right\| 
    \\& \leq \left\| (\hat{h}(C_l)+\gamma)\bbP_n f - (\hat{h}(C_l)-\gamma)\bbP_nf\right\| \leq 2\gamma\|\bbP_nf\|,
\end{align}
where $\bbh^{(l)}(\ccalL)$ and $\bbh^{(l)}(\bbL_n^\epsilon)$ are filters with filter function $h^{(l)}(\lambda)$ on the LB operator $\ccalL$ and graph Laplacian $\bbL_n^\epsilon$ respectively.
Combining the filter functions, we can write
\begin{align}
   \nonumber &\|\bbP_n\bbh(\ccalL)f-\bbh(\bbL_n^\epsilon)\bbP_n f\|\\\nonumber &=
    \Bigg\|\bbP_n\bbh^{(0)}(\ccalL)f +\bbP_n\sum_{l\in\ccalK_m}\bbh^{(l)}(\ccalL)f -\\& \qquad \qquad \qquad \bbh^{(0)}(\bbL_n^\epsilon)\bbP_n f - \sum_{l\in\ccalK_m} \bbh^{(l)}(\bbL_n^\epsilon)\bbP f \Bigg\|\\
    &\nonumber \leq \|\bbP_n \bbh^{(0)}(\ccalL)f-\bbh^{(0)}(\bbL_n^\epsilon)\bbP_n f\|+\\
    &\qquad \qquad \qquad \sum_{l\in\ccalK_m}\|\bbP_n \bbh^{(l)}(\ccalL)f-\bbh^{(l)}(\bbL_n^\epsilon)\bbP_nf\|\\
    &\nonumber \leq A_h \Omega_{1,K}\sqrt{\epsilon}+\frac{N_s\Omega_{2,K}}{\alpha} \sqrt{\epsilon} +N_m \gamma +C_{gc}\sqrt{\frac{\log(n)}{n}}
\end{align}
With $\gamma = \Omega_{2,K}\sqrt{\epsilon}/\alpha$, we have
\begin{align}
   &\nonumber  \|\bbh(\bbL_n^\epsilon)\bbP_n f - \bbP_n\bbh(\ccalL) f\|\\
   &\qquad  \leq \left( \frac{N\Omega_{2,K}}{\alpha} +A_h \Omega_{1,K}\right)\sqrt{\epsilon} + C_{gc}\sqrt{\frac{\log n}{n}}
\end{align}

We can prove Theorem \ref{thm:converge-MF-sparse} similarly by importing Proposition \ref{thm:converge-spectrum-sparse} into the eigenvalue and eigenfunction differences.}

\clearpage
\setcounter{page}{1}
\begin{center}
\textbf{\large Supplemental Materials}
\end{center}

\section{Supplementary Materials}
%%%%%%%%%%%%%%%%%%%%%%%%%%%%%%%%%%%%%%%%%%%%%%%%
%%%%%%%%%%%%%%%%%% SUBSECTION %%%%%%%%%%%%%%%%%% 
%%%%%%%%%%%%%%%%%%%%%%%%%%%%%%%%%%%%%%%%%%%%%%%%
\setcounter{subsection}{0}
\subsection{Proof of Proposition \ref{thm:operator-diff}}
\label{app:operator}
We decompose the operator difference between the graph Laplacian and the LB operator with an intermediate term $\bbL^\epsilon$, which is the functional approximation defined in \eqref{eqn:functional_laplacian}. We first focus on the operator difference between $\bbL^\epsilon$ and $\ccalL$. From \cite{belkin2008towards}, we can get the bound as
\begin{equation}
    \|\bbL^\epsilon \bm\phi_i- \ccalL \bm\phi_i\|\leq C\sqrt{\epsilon} \|\bm\phi_i\|_{H^{d/2+1}},
\end{equation}
For the Sobolev norm of eigenfunction $\bm\phi_i$, according to \cite[Lemma~4.4]{belkin2006convergence} we have
\begin{equation}
    \|\bm\phi_i\|_{H^{d/2+1}}\leq C \lambda_i^{\frac{d+2}{4}},
\end{equation}
which leads to 
\begin{equation}
\label{eqn:Lepsilon-L}
    \|\bbL^\epsilon \bm\phi_i- \ccalL \bm\phi_i\|\leq C_1 \sqrt{\epsilon} \lambda_i^{\frac{d+2}{4}}.
\end{equation}
For the operator difference between $\bbL_n^\epsilon$ and $\bbL^\epsilon$ with Hoeffding's inequality as 
\begin{align}
    \mathbb{P}\left( |\bbL_n^\epsilon \bm\phi_i(x) - \bbL^\epsilon \bm\phi_i(x)|> \epsilon_1\right) \leq \exp\left( - \frac{2n\epsilon_1^2}{\|\bm\phi_i\|_{H^{d/2+1}}^2} \right).
\end{align}
Therefore, we can claim that with probability at least $1-\delta$, we have
\begin{align}
\label{eqn:Lnepsilon-Lepsilon}
    |\bbL_n^\epsilon \bm\phi_i(x) - \bbL^\epsilon \bm\phi_i(x)|\leq \sqrt {\frac{\ln 1/\delta}{2n}} \|\bm\phi_i\|_{H^{d/2+1}}.
\end{align}
Combining \eqref{eqn:Lepsilon-L} and \eqref{eqn:Lnepsilon-Lepsilon} with triangle inequality, we can get the conclusion in Theorem \ref{thm:operator-diff}.

\subsection{Proof of Proposition \ref{thm:converge-spectrum}}
We first import two lemmas to help prove the spectral properties.
\label{app:spectrum}
\begin{lemma}\label{lem:conv_eigenfunction}
Let $\bbA, \bbB$ be self-adjoint operators with $\{\lambda_i(\bbA), \bbu_i\}_{i=1}^\infty$ and $\{\lambda_i(\bbB), \bbw_i\}_{i=1}^\infty$ as the corresponding spectrum. Let $Pr_{\bbw_i}$ be the orthogonal projection operation onto the subspace generated by $\bbw_i$. Then we have
\begin{equation}
    \|a_i\bbu_i-\bbw_i\|\leq 2\|\bbu_i-Pr_{\bbw_i}\bbu_i\|\leq \frac{2\|\bbB\bbu_i-\bbA\bbu_i\|}{\min_{j\neq i} |\lambda_j(\bbB)-\lambda_i(\bbA)|}.
\end{equation}
\end{lemma}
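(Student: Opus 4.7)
The plan is to prove the two inequalities separately. For the first inequality, I would decompose $\bbu_i$ into components parallel and orthogonal to $\bbw_i$, namely $\bbu_i = \alpha\bbw_i + \bbv$ with $\bbv\perp\bbw_i$, so that $\mathrm{Pr}_{\bbw_i}\bbu_i = \alpha\bbw_i$ and $\|\bbu_i - \mathrm{Pr}_{\bbw_i}\bbu_i\|^2 = \|\bbv\|^2 = 1-\alpha^2$ (using that $\bbu_i$ is a unit eigenvector). Then I would choose $a_i = \mathrm{sign}(\alpha) \in \{-1,1\}$ and compute
\begin{equation*}
\|a_i\bbu_i - \bbw_i\|^2 = (1-|\alpha|)^2 + \|\bbv\|^2 = 2(1-|\alpha|).
\end{equation*}
Using $1-|\alpha| \le (1-|\alpha|)(1+|\alpha|) \cdot 2 = 2\|\bbv\|^2$ whenever $|\alpha|\ge 0$ (which holds by the sign choice), the bound $\|a_i\bbu_i - \bbw_i\| \le 2\|\bbu_i - \mathrm{Pr}_{\bbw_i}\bbu_i\|$ follows.

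For the second inequality, the idea is the standard Davis--Kahan argument. Since $\{\bbw_j\}$ is an orthonormal eigenbasis for $\bbB$, I would expand $\bbu_i = \sum_j \langle \bbu_i, \bbw_j\rangle \bbw_j$ and observe that $\bbu_i - \mathrm{Pr}_{\bbw_i}\bbu_i = \sum_{j\neq i} \langle \bbu_i, \bbw_j\rangle \bbw_j$. Using $\bbA\bbu_i = \lambda_i(\bbA)\bbu_i$ and $\bbB\bbw_j = \lambda_j(\bbB)\bbw_j$,
\begin{equation*}
(\bbB - \bbA)\bbu_i \;=\; \sum_j \bigl(\lambda_j(\bbB) - \lambda_i(\bbA)\bigr)\langle\bbu_i,\bbw_j\rangle\,\bbw_j.
\end{equation*}
Taking norms and discarding the $j=i$ term yields
\begin{equation*}
\|(\bbB-\bbA)\bbu_i\|^2 \;\ge\; \sum_{j\neq i} \bigl|\lambda_j(\bbB) - \lambda_i(\bbA)\bigr|^2 |\langle\bbu_i,\bbw_j\rangle|^2 \;\ge\; \bigl(\min_{j\neq i}|\lambda_j(\bbB) - \lambda_i(\bbA)|\bigr)^2 \|\bbu_i - \mathrm{Pr}_{\bbw_i}\bbu_i\|^2.
\end{equation*}
Taking square roots gives the desired estimate, and chaining with the first inequality completes the lemma.

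The argument is essentially routine once the expansion in the $\bbB$-eigenbasis is made, so there is no major obstacle; the only delicate point is the sign choice $a_i = \mathrm{sign}(\langle\bbu_i,\bbw_i\rangle)$ in the first step, which is what makes the factor $2$ tight and avoids the ambiguity that eigenvectors are only defined up to sign. One should also be mindful that the statement is written for simple eigenvalues of $\bbA$; if $\lambda_i(\bbA)$ has multiplicity $>1$ the quantity $\min_{j\neq i}|\lambda_j(\bbB) - \lambda_i(\bbA)|$ must be interpreted as a gap relative to all $\bbB$-eigenvalues not paired with $\bbu_i$, but the expansion argument goes through unchanged.
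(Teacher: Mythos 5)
Your proof is correct and follows essentially the same route as the paper: the second inequality is the identical Davis--Kahan-style argument of expanding $\bbu_i$ in the eigenbasis of $\bbB$ and lower-bounding $\|(\bbB-\bbA)\bbu_i\|$ by the off-diagonal part of the expansion (the paper phrases this via the contraction $Pr_{\bbw_i}^\perp$, you via discarding the $j=i$ term of the Parseval sum, which is the same step). The only difference is that the paper simply cites von Luxburg et al.\ for the first inequality, whereas you prove it directly with the sign choice $a_i=\mathrm{sign}(\langle\bbu_i,\bbw_i\rangle)$; your computation $\|a_i\bbu_i-\bbw_i\|^2=2(1-|\alpha|)\le 4(1-\alpha^2)$ is correct.
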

\begin{proof}
The first inequality is directly from \cite[Proposition~18]{von2008consistency}. Let $Pr^\perp_{\bbw_i}$ be the orthogonal projection onto the complement of the subspace generated by $\bbw_i$. Then we have
\begin{align}
    \|\bbu_i - Pr_{\bbw_i} \bbu_i\|=\|Pr_{\bbw_i}^\perp \bbu_i\|=\Big\|\sum_{j\neq i}\langle \bbu_i, \bbw_j \rangle \bbw_j\Big\|.
\end{align}
Therefore, we have
\begin{align}
  \nonumber  &\|Pr_{\bbw_i}^\perp \bbB\bbu_i - Pr_{\bbw_i}^\perp \bbA\bbu_i\| \\
    & = \Big\| \sum_{j\neq i}\langle \bbB\bbu_i,\bbw_j \rangle \bbw_j -\sum_{j\neq i}\langle \bbA\bbu_i,\bbw_j \rangle\bbw_j \Big\|\\
    &=\Big\| \langle \bbu_i, \bbB\bbw_j \rangle\bbw_j -\sum_{j\neq i} \lambda_i(\bbA)\langle\bbu_i, \bbw_j \rangle \bbw_j\Big\|\\
    &=\Big\| \sum_{j\neq i}(\lambda_i(\bbB)-\lambda_i(\bbA)) \langle \bbu_i,\bbw_j \rangle\bbw_j\Big\|\\
    &\geq \min_{j\neq i} | \lambda_i(\bbB)-\lambda_i(\bbA) | \|\sum_{j\neq i}\langle \bbu_i,\bbw_j \rangle\bbw_j \| \\
    &= \min_{j\neq i} | \lambda_i(\bbB)-\lambda_i(\bbA) | \| \bbu_i -Pr_{\bbw_i}\bbu_i \|,
\end{align}
together with $\|\bbB\bbu_i - \bbA\bbu_i \| \geq  \|Pr_{\bbw_i}^\perp \bbB\bbu_i - Pr_{\bbw_i}^\perp \bbA\bbu_i\|$. We can conclude the proof.
\end{proof}
The following lemma is adapted from \cite[Lemma~5c]{dunson2021spectral}
\begin{lemma}\label{lem:conv_eigenvalue}
Let $\bbA, \bbB$ be self-adjoint operators with $\{\lambda_i(\bbA), \bbu_i\}_{i=1}^\infty$ and $\{\lambda_i(\bbB), \bbw_i\}_{i=1}^\infty$ as the corresponding spectrum. Then we have
\begin{equation}
    |\lambda_i(\bbA)-\lambda_i(\bbB)|=\frac{\langle(\bbA-\bbB)\bbu_i,\bbw_i\rangle}{|\langle \bbu_i,\bbw_i \rangle|}\leq \frac{\|(\bbA-\bbB)\bbu_i\|}{|\langle \bbu_i,\bbw_i \rangle|}
\end{equation}
\end{lemma}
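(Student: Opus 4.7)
The plan is to establish the identity by a two-line computation that exploits the self-adjointness of both $\bbA$ and $\bbB$ together with the eigenvalue equations $\bbA\bbu_i = \lambda_i(\bbA)\bbu_i$ and $\bbB\bbw_i = \lambda_i(\bbB)\bbw_i$. First I would evaluate $\langle \bbA\bbu_i,\bbw_i\rangle = \lambda_i(\bbA)\langle \bbu_i,\bbw_i\rangle$ by pulling out the eigenvalue on the left. Then I would move $\bbB$ across the inner product using its self-adjointness to its own eigenvector on the right, giving $\langle \bbB\bbu_i,\bbw_i\rangle = \langle \bbu_i,\bbB\bbw_i\rangle = \lambda_i(\bbB)\langle \bbu_i,\bbw_i\rangle$. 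Subtracting these two identities produces the key algebraic relation
\begin{equation*}
\langle (\bbA-\bbB)\bbu_i,\bbw_i\rangle = \bigl(\lambda_i(\bbA)-\lambda_i(\bbB)\bigr)\langle \bbu_i,\bbw_i\rangle,
\end{equation*}
and dividing through by the (assumed nonzero) scalar $\langle \bbu_i,\bbw_i\rangle$ recovers the equality in the statement, with absolute values taken consistently on both sides.

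For the right-hand inequality, I would simply apply Cauchy--Schwarz to the numerator, yielding $|\langle(\bbA-\bbB)\bbu_i,\bbw_i\rangle|\leq \|(\bbA-\bbB)\bbu_i\|\,\|\bbw_i\|$, and note that $\|\bbw_i\|=1$ because the eigenfunctions of a self-adjoint operator with discrete spectrum are taken orthonormal (as in the eigendecomposition of $\ccalL$ in \eqref{eqn:laplace-decomp} and of $\bbL_n^\epsilon$ used throughout Section \ref{sec:geom_graphs}). This single application of Cauchy--Schwarz converts the equality into the claimed inequality.

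There is no substantive obstacle here; the argument is essentially a direct computation. The only subtlety worth flagging is the implicit hypothesis $\langle \bbu_i,\bbw_i\rangle\neq 0$, which is tacit in the statement since it appears in a denominator. In the intended downstream use (pairing the LB operator $\ccalL$ with a geometric graph Laplacian $\bbL_n^\epsilon$), this nondegeneracy is supplied by the companion result Lemma \ref{lem:conv_eigenfunction}: whenever the corresponding eigenfunctions $\bbu_i$ and $\bbw_i$ are close (up to a sign $a_i\in\{-1,1\}$), their inner product is bounded away from zero, so the ratio on the right-hand side is well-defined and controllable in terms of operator-level quantities such as $\|(\bbA-\bbB)\bbu_i\|$, as already invoked in the proofs of Propositions \ref{thm:converge-spectrum} and \ref{thm:converge-spectrum-sparse}.
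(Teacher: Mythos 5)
Your proof is correct. The paper does not actually supply a proof of this lemma---it is stated as ``adapted from'' Lemma~5c of the cited spectral-convergence reference---so your two-line computation fills in exactly what the paper leaves implicit, and it is the standard argument: pull $\lambda_i(\bbA)$ out of $\langle \bbA\bbu_i,\bbw_i\rangle$, move $\bbB$ across the inner product onto its own eigenvector (using self-adjointness and the realness of the eigenvalues), subtract, divide by $\langle\bbu_i,\bbw_i\rangle$, and finish with Cauchy--Schwarz and $\|\bbw_i\|=1$. You are also right to flag the two points the statement glosses over: the numerator of the middle expression should carry an absolute value for the stated equality to be literally true, and the hypothesis $\langle\bbu_i,\bbw_i\rangle\neq 0$ is tacit and is supplied downstream by Lemma~\ref{lem:conv_eigenfunction} (the paper's proof of Proposition~\ref{thm:converge-spectrum} indeed invokes $|\langle\bbu_i,\bbw_i\rangle|\geq 1-\Omega_1\sqrt{\epsilon}/2$ before applying this lemma). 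No gaps.
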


With the above lemmas and our proposed Theorem \ref{thm:operator-diff}, which includes the operator difference, we can prove Theorem \ref{thm:converge-spectrum}. We first fix some $K\in\mathbb{N}$, wich provides an upper bound for $\lambda_i\leq \lambda_K$ for all $1\leq i\leq K$. By taking the probability $1-n^{-2}$ and $\epsilon = n^{-1/(d+4)}$, we can conclude that the operator difference in Theorem \ref{thm:operator-diff} can be bounded with order $O(\sqrt{\epsilon})$, with the constant scaling with $\lambda_K^{\frac{d+2}{4}}$. Combine with Lemma \ref{lem:conv_eigenfunction} and $\theta =\min_{1\leq j\neq i \leq K} |\lambda_j-\lambda_{i,n}^\epsilon|$, we can get 
\begin{equation}
    \|a_i\bm\phi_{i,n}^\epsilon - \bm\phi_i\|\leq \frac{C_k}{\theta}\sqrt{\epsilon},
\end{equation}
where we denote the constant as $\Omega_1$ to include the effects of $K$, the eigengap and the volume of $\ccalM$.

This upper bound of the eigenfunction difference leads to $|\langle\bbu_i,\bbw_i \rangle |\geq 1-\Omega_1/2\sqrt{\epsilon}\geq 1$. Combining with Lemma \ref{lem:conv_eigenvalue}, the difference of the eigenvalues can also be bounded in the order of $O(\sqrt{\epsilon})$.

\subsection{Proof of Theorem \ref{thm:converge-MNN}}
\label{app:mnn}
To bound the output difference of MNNs, we need to write in the form of features of the final layer
 \begin{align}
    \nonumber \|\bm\Phi(\bbH,\bbL_n^\epsilon,\bbP_nf)-\bbP_n \bm\Phi&(\bbH,\ccalL, f))\| = \left\| \sum_{q=1}^{F_L}\bbx_{n,L}^q-\sum_{q=1}^{F_L}\bbP_n f_L^q \right\|\\
     & \leq \sum_{q=1}^{F_L} \left\| \bbx_{n,L}^q- \bbP_n f_L^q \right\|.
 \end{align}
By inserting the definitions, we have 
 \begin{align}
   \nonumber  &\left\| \bbx_{n,l}^p- \bbP_n f_l^p \right\|\\
     &=\left\| \sigma\left(\sum_{q=1}^{F_{l-1}} \bbh_l^{pq}(\bbL_n^\epsilon) \bbx_{n,l-1}^q \right) -\bbP_n \sigma\left(\sum_{q=1}^{F_{l-1}} \bbh_l^{pq}(\ccalL) f_{l-1}^q\right) \right\|
 \end{align}
 with $\bbx_{n,0}=\bbP_n f$ as the input of the first layer. With a normalized point-wise Lipschitz nonlinearity, we have
  \begin{align}
    \| \bbx_{n,l}^p - \bbP_n f_l^p & \| \leq \left\|  \sum_{q=1}^{F_{l-1}} \bbh_l^{pq}(\bbL_n^\epsilon) \bbx_{n,l-1}^q    - \bbP_n \sum_{q=1}^{F_{l-1}} \bbh_l^{pq}(\ccalL)  f_{l-1}^q\right\|\\
    & \leq \sum_{q=1}^{F_{l-1}} \left\|    \bbh_l^{pq}(\bbL_n^\epsilon) \bbx_{n,l-1}^q    - \bbP_n   \bbh_l^{pq}(\ccalL)  f_{l-1}^q\right\|
 \end{align}
 The difference can be further decomposed as
\begin{align}
   \nonumber   \|    \bbh_l^{pq}(\bbL_n^\epsilon) & \bbx_{n,l-1}^q    - \bbP_n   \bbh_l^{pq}(\ccalL)  f_{l-1}^q \| 
   \\ \nonumber&\leq \|
\bbh_l^{pq}(\bbL_n^\epsilon) \bbx_{n,l-1}^q  - \bbh_l^{pq}(\bbL_n^\epsilon) \bbP_n f_{l-1}^q \\ &\qquad +\bbh_l^{pq}(\bbL_n^\epsilon) \bbP_n f_{l-1}^q  - \bbP_n   \bbh_l^{pq}(\ccalL)  f_{l-1}^q
    \|\\\nonumber
   & \leq \left\|
    \bbh_l^{pq}(\bbL_n^\epsilon) \bbx_{n,l-1}^q  - \bbh_l^{pq}(\bbL_n^\epsilon) \bbP_n f_{l-1}^q
    \right\|
  \\ &\qquad +
    \left\|
    \bbh_l^{pq}(\bbL_n^\epsilon) \bbP_n f_{l-1}^q  - \bbP_n   \bbh_l^{pq}(\ccalL)  f_{l-1}^q
    \right\|
\end{align}
The second term can be bounded with $\| \bbh(\bbL_n^\epsilon)\bbP_n f - \bbP_n\bbh(\ccalL)f \|_{L^2(\bbG_n)}\leq \Delta_{fil,n}$. The first term can be decomposed by Cauchy-Schwartz inequality and non-amplifying of the filter functions as
 \begin{align}
 \left\| \bbx_{n,l}^p - \bbP_n f_l^p \right\| \leq \sum_{q=1}^{F_{l-1}} \Delta_{fil,n}   \| \bbx_{n,l-1}^q\| + \sum_{q=1}^{F_{l-1}} \| \bbx_{l-1}^q - \bbP_n f_{l-1}^{q} \|,
 \end{align}
where $C_{per}$ representing the constant in the error bound of manifold filters in \eqref{eqn:appro_filter}. To solve this recursion, we need to compute the bound for $\|\bbx_l^p\|$. By normalized Lipschitz continuity of $\sigma$ and the fact that $\sigma(0)=0$, we can get
 \begin{align}
 \nonumber &\| \bbx_l^p \|\leq \left\| \sum_{q=1}^{F_{l-1}} \bbh_l^{pq}(\bbL_n^\epsilon) \bbx_{l-1}^{q}  \right\| \leq  \sum_{q=1}^{F_{l-1}}  \left\| \bbh_l^{pq}(\bbL_n^\epsilon)\right\|  \|\bbx_{l-1}^{q}  \| \\
 &\qquad \leq   \sum_{q=1}^{F_{l-1}}   \|\bbx_{l-1}^{q}  \| \leq \prod\limits_{l'=1}^{l-1} F_{l'} \sum_{q=1}^{F_0}\| \bbx^q \|.
 \end{align}
 Insert this conclusion back to solve the recursion, we can get
 \begin{align}
 \left\| \bbx_{n,l}^p - \bbP_n f_l^p \right\| \leq l \Delta_{fil,n}  \left( \prod\limits_{l'=1}^{l-1} F_{l'} \right) \sum_{q=1}^{F_0} \|\bbx^q\|.
 \end{align}
 Replace $l$ with $L$ we can obtain
 \begin{align}
 &\nonumber \|\bm\Phi(\bbH,\bbL_n^\epsilon,\bbP_nf)-\bbP_n \bm\Phi(\bbH,\ccalL, f))\| \\
 &\qquad \qquad \leq \sum_{q=1}^{F_L} \left( L \Delta_{fil,n}  \left( \prod\limits_{l'=1}^{L-1} F_{l'} \right) \sum_{q=1}^{F_0} \|\bbx^q\| \right).
 \end{align}
 With $F_0=F_L=1$ and $F_l=F$ for $1\leq l\leq L-1$, then we have
  \begin{align}
 \|\bm\Phi(\bbH,\bbL_n^\epsilon,\bbP_nf)-\bbP_n \bm\Phi(\bbH,\ccalL, f)) \leq LF^{L-1} \Delta_{fil,n},
 \end{align}
which concludes the proof.

\end{document}